\documentclass[11pt]{article}

\usepackage[margin=1in]{geometry}

\usepackage[T1]{fontenc}
\usepackage{lmodern}
\usepackage{microtype}

\usepackage{mathtools}
\usepackage{amsmath,amssymb,amsfonts}
\usepackage{amsthm}
\usepackage{mathrsfs}

\usepackage{graphicx}
\usepackage{booktabs}
\usepackage{multirow}

\usepackage{algorithm}
\usepackage{algpseudocode}
\usepackage{listings}

\usepackage{enumitem}

\usepackage[section]{placeins}
\usepackage{needspace}
\usepackage{float}

\usepackage{xcolor}
\usepackage{textcomp}

\usepackage[numbers,sort&compress]{natbib}

\usepackage{hyperref}

\allowdisplaybreaks[2]

\theoremstyle{plain}
\newtheorem{theorem}{Theorem}
\newtheorem{proposition}[theorem]{Proposition}
\newtheorem{lemma}[theorem]{Lemma}
\newtheorem{corollary}[theorem]{Corollary}

\theoremstyle{definition}
\newtheorem{definition}{Definition}

\theoremstyle{remark}
\newtheorem{remark}{Remark}

\newcommand{\tr}{\operatorname{tr}}

\newcommand{\R}{\mathbb{R}}
\newcommand{\E}{\mathbb{E}}
\newcommand{\Pbb}{\mathbb{P}}
\newcommand{\1}{\mathbf{1}}
\newcommand{\norm}[1]{\left\|#1\right\|}
\newcommand{\opnorm}[1]{\left\|#1\right\|_{\mathrm{op}}}

\newcommand{\cG}{\mathcal{G}}

\title{Computable Bernstein Certificates for Cross-Fitted Clipped Covariance Estimation}

\author{
Even He
\and
Zaizai Yan
}

\date{} 

\begin{document}
\maketitle

\begin{abstract}
We study operator-norm covariance estimation from heavy-tailed samples that may include a small fraction of arbitrary outliers. A simple and widely used safeguard is \emph{Euclidean norm clipping}, but its accuracy depends critically on an unknown clipping level. We propose a cross-fitted clipped covariance estimator equipped with \emph{fully computable} Bernstein-type deviation certificates, enabling principled data-driven tuning via a selector (\emph{MinUpper}) that balances certified stochastic error and a robust hold-out proxy for clipping bias. The resulting procedure adapts to intrinsic complexity measures such as effective rank under mild tail regularity and retains meaningful guarantees under only finite fourth moments. Experiments on contaminated spiked-covariance benchmarks illustrate stable performance and competitive accuracy across regimes.
\end{abstract}

\noindent\textbf{Keywords:} robust covariance estimation; heavy-tailed data; outliers; clipping; cross-fitting; empirical Bernstein bounds.

\section{Introduction}
Covariance estimation in operator norm is a basic input to principal component analysis (PCA), factor modeling, and many other spectral routines \cite{johnstone2001largest,paul2007spiked,fanlmincheva2013poet}. In light-tailed settings, the sample covariance concentrates sharply and its fluctuations are governed by intrinsic complexity measures such as the effective rank \cite{koltchinskiiLounici2017covop,srivastavavershynin2013cov,vershynin2025hdp}. This picture can break down under heavy tails or when even a small fraction of observations are grossly corrupted: a few atypically large outer products may dominate the average, making the sample covariance unstable.

A particularly simple robustification is \emph{Euclidean norm clipping}: replace each observation $X_i$ by $X_i\cdot \min\{1,r/\|X_i\|_2\}$ before forming outer products. Clipping is computationally trivial and often effective in practice, but it hides a delicate tuning problem. The radius $r$ controls a bias--variance trade-off: if $r$ is too small then genuine signal is truncated (bias), while if $r$ is too large the estimator remains sensitive to outliers and heavy tails. Existing calibrations typically depend on unknown population moments or distributional parameters \cite{mendelsonzhivotovskiy2019cov,ke2019userfriendly,oliveiraRico2022improved,abdallaZhivotovskiy2024dimensionfree}. A practical question is therefore unavoidable: \emph{how can we tune the clipping level directly from the data while retaining a rigorous operator-norm guarantee?}

We treat the clipping level as a tuning hyperparameter and aim for a deviation bound that holds \emph{simultaneously} over a grid of candidates. The key obstacle is circularity: the deviation bound depends on the clipping radius, which is itself random when estimated from the same data. Cross-fitting resolves this issue. On each fold, we estimate $r$ on a training split (as an empirical norm quantile) and compute the clipped covariance on a disjoint test split. Conditioning on the training split freezes $r$, so the test-fold summands become i.i.d.\ bounded matrices. This structure unlocks sharp matrix empirical-Bernstein inequalities and yields a fully computable, nonasymptotic confidence envelope \cite{wangramdas2025sharp}.

To select a clipping level automatically, we propose \emph{MinUpper}: it minimizes the sum of (i) the computable variance envelope and (ii) a robust hold-out proxy for clipping bias (computed via a median-of-means estimate).

While the certification step is generic, our motivation and benchmarks emphasize a spiked/low-effective-rank regime common in high-dimensional PCA and factor models \cite{johnstone2001largest,paul2007spiked,fanlmincheva2013poet}, where trace-scale proxies do not automatically dwarf operator-norm fluctuations.

\medskip
\noindent\textbf{Main contributions.}
\begin{itemize}[leftmargin=2.2em]
\item \textbf{A cross-fitted clipping family with quantile radii:} We formalize a simple cross-fitted clipping scheme in which each fold selects its radius as an empirical $(1-\gamma)$-quantile of $\|X\|_2$ on a training split, and applies it on a disjoint test split.
\item \textbf{A fully computable, simultaneous deviation certificate:} For the entire cross-fitted family (all folds and all $\gamma\in\cG$), we derive a nonasymptotic operator-norm envelope around the conditional target using sharp matrix empirical-Bernstein bounds (Theorem~\ref{thm:var}).
\item \textbf{Data-driven tuning via MinUpper:} We propose a selector that minimizes a certified variance term plus a robust median-of-means proxy for clipping bias, and prove an oracle-type guarantee (Propositions~\ref{prop:bias-proxy}--\ref{prop:minupper}).
\item \textbf{Intrinsic-dimension scaling and the clipping-bias bottleneck:} Under an $L_4$--$L_2$ norm equivalence condition, we show that the certificate scales with the effective rank (Theorem~\ref{thm:intrinsic-main}), and we isolate the fundamental clipping-bias limitation under weak moments.
\item \textbf{Benchmarks under heavy tails and contamination:} Experiments on contaminated spiked-covariance models illustrate stable performance and competitive accuracy across regimes.
\end{itemize}

\subsection{Notation}\label{sec:notation}
We write $[m]=\{1,\dots,m\}$ and $|S|$ for the size of a set $S$. Vectors are in $\R^d$ and $\langle x,y\rangle=x^\top y$. For matrices, $\|A\|$ denotes the operator norm, $\|A\|_{\mathrm F}$ the Frobenius norm, and $\tr(A)$ the trace. For symmetric $A$, we write $\lambda_{\max}(A)$ and $\lambda_{\min}(A)$ for extreme eigenvalues and use $A\succeq B$ for the PSD order. The identity is $I_d$. We use the effective rank $\mathbf r(\Sigma):=\tr(\Sigma)/\|\Sigma\|$. Expectation and probability are denoted by $\E$ and $\Pbb$. Our clean target is $\Sigma=\E[XX^\top]$ for a mean-zero $X\in\R^d$. We write $Z_i$ for centered samples (Appendix~\ref{app:center} discusses mean removal when needed). For cross-fitting, $[n]$ is split into folds $I_1,\dots,I_K$ with sizes $n_k$, and $J_k=[n]\setminus I_k$ denotes the corresponding training set. For $\gamma\in\cG$, $r_k(\gamma)$ is the stabilized empirical $(1-\gamma)$-quantile of $\{\|Z_i\|_2:i\in J_k\}$. We write $a\lesssim b$ if $a\le Cb$ for a universal constant $C>0$, and $a\asymp b$ if both $a\lesssim b$ and $b\lesssim a$.

\section{Setting: heavy tails, contamination, and a spiked covariance model}
\label{sec:setting}

Let $X\in\R^d$ be a mean-zero data vector with target covariance $\Sigma:=\E[XX^\top]$. We observe independent samples $X_1,\dots,X_n$. To model contamination, we allow Huber $\varepsilon$-contamination \cite{huber1964robust}: an $\varepsilon$-fraction of the observations may be replaced by arbitrary, potentially adversarial outlier vectors. For operator-norm treatments under Huber contamination, see, for example, \cite{chengaoRen2018huber,chengdiakonikolasgewoodruff2019faster,minskerwang2024adversarialcov,abdallaZhivotovskiy2024dimensionfree}. Unless stated otherwise, we work with centered samples $Z_i:=X_i$ (Appendix~\ref{app:center} discusses a simple mean-free reduction).

We emphasize two complementary aspects of robustness: \emph{model robustness} (the algorithm continues to behave well when the clean distribution is heavier-tailed than expected) and \emph{outlier robustness} (the algorithm is not derailed by a fraction of corrupted observations). Section~\ref{sec:experiments} probes both regimes.

For intuition and for simulations we use the spiked model commonly encountered in high-dimensional PCA \cite{johnstone2001largest,baikbenarouspeche2005,paul2007spiked}:
\begin{equation}\label{eq:spiked}
\Sigma = \sigma^2 I_d + \sum_{j=1}^r \lambda_j u_j u_j^\top,
\end{equation}
where $r\ll d$, $\lambda_1\ge\cdots\ge \lambda_r>0$, and $\{u_j\}_{j=1}^r$ are orthonormal. This captures ``few directions of signal'' plus an approximately isotropic background. The intrinsic dimension that governs spectral-norm covariance bounds is the effective rank $\mathbf r(\Sigma):=\tr(\Sigma)/\|\Sigma\|$. Our tuning rule is most informative when $\mathbf r(\Sigma)$ is moderate, so that trace-scale proxies do not swamp the operator-norm variance scale. The variance certificate (Theorem~\ref{thm:var}) is derived conditionally on the estimated radii and does not structurally rely on the spiked model.

\section{A cross-fitted clipped covariance family}
This section defines the family of clipped covariance estimators and the accompanying normalization that will later make sharp empirical-Bernstein bounds applicable. The key design choice is cross-fitting: the clipping radius is estimated on a training split, while the clipped covariance is computed on a held-out split. Conditioning on the training split then freezes the radius, allowing us to treat the held-out summands as i.i.d.\ bounded PSD matrices.

\subsection{Cross-fitting split}\label{sec:split}
Choose $K\ge 2$ and partition the dataset $[n]=\{1,\dots,n\}$ into disjoint folds $I_1,\dots,I_K$ with sizes $n_k:=|I_k|$. Let $J_k:=[n]\setminus I_k$ denote the corresponding training set. This partitioning can be implemented efficiently by processing folds separately.

\begin{remark}[Even fold sizes for paired variance proxies]
The paired variance proxy in \eqref{eq:Vstar} requires disjoint pairs inside each test fold to compute an unbiased empirical variance. Therefore, we assume $n_k$ is even. If a fold size is odd, discarding a single observation is immaterial and preserves symmetry. We assume $n_k\ge 2$ after this step.
\end{remark}

All pilot radii for fold $k$ are computed using strictly the data $\{Z_i\}_{i\in J_k}$. Conditional on the sigma-algebra $\sigma(\{Z_i\}_{i\in J_k})$, the test-fold samples $\{Z_i\}_{i\in I_k}$ remain independent and identically distributed, isolating threshold estimation from covariance aggregation.

\subsection{Step R: Pilot Radii by Empirical Quantiles (Training Phase)}\label{sec:radii}
For tuning, we sweep over a \emph{geometric} grid of tail probabilities with ratio $\rho>1$ (the dyadic grid corresponds to $\rho=2$):
\[
\begin{aligned}
\gamma_{\max} &:= \tfrac12,\\
\gamma_{\min} &:= \min\Big\{\tfrac14,\ \frac{1}{\min_k |J_k|}\Big\},\\
\ell_{\max} &:= \Big\lfloor \frac{\log(\gamma_{\max}/\gamma_{\min})}{\log\rho}\Big\rfloor,\\
\cG &:= \{\gamma_\ell=\gamma_{\max}\rho^{-\ell}:\ \ell=0,1,\dots,\ell_{\max}\}\\
&\quad\cup\{\gamma_{\min}\}.
\end{aligned}
\]
The grid ratio $\rho$ controls the resolution and introduces a rounding slack of at most $\rho$ (Appendix~\ref{app:r-intrinsic}). We assume $\min_k |J_k|\ge 4$, ensuring $\gamma |J_k|\ge 1$ for all candidates.

For each fold $k$ and each $\gamma\in\cG$, we compute the scalar training norms
\[
T_i^{(-k)}:=\|Z_i\|_2,\qquad i\in J_k,
\]
and define the empirical $(1-\gamma)$-quantile radius:
\begin{equation}\label{eq:rhat}
\hat r_k(\gamma):=\inf\Big\{r>0:\ \frac{1}{|J_k|}\sum_{i\in J_k}\1\{T_i^{(-k)}>r\}\le \gamma\Big\}.
\end{equation}

To define the stabilized pilot radius, set
\[
r_{k,\min}:=\min\{\|Z_i\|_2:\ i\in J_k,\ \|Z_i\|_2>0\},
\]
with the convention $r_{k,\min}=1$ if the set is empty, and then set
\begin{equation}\label{eq:rk-stabilized}
r_k(\gamma):=\max\{\hat r_k(\gamma),\,r_{k,\min}\}.
\end{equation}
This stabilization is only active when $\hat r_k(\gamma)=0$; otherwise $r_k(\gamma)=\hat r_k(\gamma)$. In particular, if $\Pr(\|Z\|_2=0)=0$ (as in typical continuous models), then $\hat r_k(\gamma)>0$ almost surely and the stabilization never triggers.
If $\Pr(\|Z\|_2=0)=0$, then $\hat r_k(\gamma)>0$ a.s.\ and this stabilization is inactive.

\subsection{Step C: Euclidean Norm Clipping (Aggregation Phase)}\label{sec:clip}
For each fold $k$, each candidate $\gamma\in\cG$, and each test index $i\in I_k$, we execute Euclidean clipping:
\[
\tilde Z_i^{(k)}(\gamma)
:= Z_i\cdot \min\Big\{1,\ \frac{r_k(\gamma)}{\norm{Z_i}_2}\Big\}.
\]
Consequently, the \emph{Euclidean norm} of the clipped vector is bounded: $\|\tilde Z_i^{(k)}(\gamma)\|_2\le r_k(\gamma)$ (with the convention $r/0=+\infty$ so $\tilde Z_i^{(k)}(\gamma)=Z_i$ when $Z_i=0$). We then compute the fold-wise raw covariance estimate:
\begin{equation}\label{eq:Sigmahat-fold}
\hat\Sigma_k^{\mathrm{raw}}(\gamma):=\frac{1}{n_k}\sum_{i\in I_k}\tilde Z_i^{(k)}(\gamma)\tilde Z_i^{(k)}(\gamma)^\top,
\end{equation}
and aggregate:
\begin{equation}\label{eq:Sigmahat-agg}
\hat\Sigma^{\mathrm{raw}}(\gamma):=\sum_{k=1}^K \frac{n_k}{n}\,\hat\Sigma_k^{\mathrm{raw}}(\gamma).
\end{equation}

\section{Certifying deviations via sharp matrix empirical Bernstein}\label{sec:var}

Fix a fold $k$ and a level $\gamma$, and condition on the training $\sigma$-field $\sigma(\{Z_i\}_{i\in J_k})$ so that the threshold $r_k(\gamma)$ is deterministic.

On the test fold $I_k$, the clipped outer products are i.i.d.\ and bounded by $r_k(\gamma)^2$ in operator norm. We normalize these products to the unit spectral interval, enabling the use of the sharp matrix empirical-Bernstein inequality developed by Wang and Ramdas \cite{wangramdas2025sharp}.

For each data point $i\in I_k$, define the normalized matrices:
\begin{equation}\label{eq:Aik}
A_i^{(k)}(\gamma):=\frac{\tilde Z_i^{(k)}(\gamma)\tilde Z_i^{(k)}(\gamma)^\top}{r_k(\gamma)^2}.
\end{equation}
By design, $0\preceq A_i^{(k)}(\gamma)\preceq I_d$ and $\opnorm{A_i^{(k)}(\gamma)}\le 1$. Let
\[
\begin{aligned}
\hat A_k(\gamma)&:=\frac{1}{n_k}\sum_{i\in I_k}A_i^{(k)}(\gamma),\\
\hat\Sigma_k^{\mathrm{raw}}(\gamma)&=r_k(\gamma)^2\hat A_k(\gamma).
\end{aligned}
\]

Assuming $n_k$ is even, we order $I_k=\{i_1,\dots,i_{n_k}\}$ and compute the empirical paired variance proxy:
\begin{equation}\label{eq:Vstar}
V_k^\star(\gamma):=\frac{1}{n_k}\sum_{j=1}^{n_k/2}\Big(A_{i_{2j-1}}^{(k)}(\gamma)-A_{i_{2j}}^{(k)}(\gamma)\Big)^2,
\end{equation}
as in \cite[Eq.~(15)]{wangramdas2025sharp}.

For $\alpha\in(0,1)$, define:
\begin{equation}\label{eq:Dmeb}
\begin{aligned}
D_{n_k,d}(\alpha;V)
:=&\ \frac{\log\!\big(\frac{n_k d}{(n_k-1)\alpha}\big)}{3n_k}
+\sqrt{\frac{2\opnorm{V}\,\log\!\big(\frac{n_k d}{(n_k-1)\alpha}\big)}{n_k}}\\
&+\Big(\sqrt{\tfrac53}+1\Big)\frac{\sqrt{\log\!\big(\frac{n_k d}{(n_k-1)\alpha}\big)\,\log\!\big(\frac{2n_k d}{\alpha}\big)}}{n_k}.
\end{aligned}
\end{equation}

\begin{remark}[Tighter time-uniform alternative]
Wang and Ramdas also derive a supermartingale-based, time-uniform matrix empirical-Bernstein bound that is often tighter in finite samples; see \cite[Sec.~4]{wangramdas2025sharp} and, for background on confidence sequences and predictable plug-ins, \cite{howardetal2021timeuniform,waudbysmithramdas2024betting}. We focus on the closed-form radius \eqref{eq:Dmeb} for simplicity and because it is efficient to evaluate and union-bound over folds and grid points.
\end{remark}

Fix a global failure probability $\delta_{\mathrm{var}}\in(0,1)$ and allocate:
\[
\alpha_{k,\gamma}:=\frac{\delta_{\mathrm{var}}}{2K|\cG|}.
\]
(The factor $2$ accounts for a two-sided operator-norm bound via $A$ and $I-A$.) Define the fold-wise and aggregated variance envelopes:
\begin{equation}\label{eq:Psi-fold}
\begin{aligned}
\Psi_k^{\mathrm{raw}}(\gamma)
&:= r_k(\gamma)^2\,D_{n_k,d}\!\big(\alpha_{k,\gamma};V_k^\star(\gamma)\big),\\
\Psi^{\mathrm{raw}}(\gamma)
&:=\sum_{k=1}^K\frac{n_k}{n}\,\Psi_k^{\mathrm{raw}}(\gamma).
\end{aligned}
\end{equation}

Because clipping becomes more aggressive as $\gamma$ increases (yielding a smaller radius), the variance envelope is expected to decrease with larger $\gamma$. We therefore enforce monotonicity by sorting $\gamma^{(1)}<\cdots<\gamma^{(|\cG|)}$ and defining a suffix maximum:
\begin{equation}\label{eq:suffixmax}
\overline\Psi^{\mathrm{raw}}(\gamma^{(j)}) := \max_{t\ge j}\Psi^{\mathrm{raw}}(\gamma^{(t)}).
\end{equation}

\begin{theorem}[Uniform variance control (fully data-dependent)]\label{thm:var}
With probability at least $1-\delta_{\mathrm{var}}$ (over all data), simultaneously for all folds $k$ and all $\gamma\in\cG$, the conditional empirical covariance satisfies:
\begin{equation}\label{eq:var-unif}
\opnorm{\hat\Sigma_k^{\mathrm{raw}}(\gamma)-\E[\hat\Sigma_k^{\mathrm{raw}}(\gamma)\mid \{Z_i\}_{i\in J_k}]}
\le \Psi_k^{\mathrm{raw}}(\gamma).
\end{equation}
Consequently, defining
\[
\bar\Sigma^{\mathrm{raw}}(\gamma):=\sum_{k=1}^K \frac{n_k}{n}\,
\E[\hat\Sigma_k^{\mathrm{raw}}(\gamma)\mid \{Z_i\}_{i\in J_k}],
\]
we have on the same event:
\[
\opnorm{\hat\Sigma^{\mathrm{raw}}(\gamma)-\bar\Sigma^{\mathrm{raw}}(\gamma)}\le \Psi^{\mathrm{raw}}(\gamma) \le \overline\Psi^{\mathrm{raw}}(\gamma)
\qquad\forall\,\gamma\in\cG.
\]
\end{theorem}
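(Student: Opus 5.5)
The plan is to reduce everything to a conditional, one-sided application of the Wang--Ramdas matrix empirical-Bernstein inequality, applied twice (to $A$ and to $I_d-A$), and then to union-bound over folds and grid points.

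\textbf{Step 1 (conditioning).} Fix $k$ and $\gamma\in\cG$, and condition on $\mathcal F_k:=\sigma(\{Z_i\}_{i\in J_k})$. Then $r_k(\gamma)>0$ is $\mathcal F_k$-measurable and deterministic given $\mathcal F_k$; the strict positivity is guaranteed by the stabilization \eqref{eq:rk-stabilized}. Since the folds are disjoint and the samples are independent, $\{Z_i\}_{i\in I_k}$ remains i.i.d.\ given $\mathcal F_k$ with the unconditional law. Hence the matrices $A_i^{(k)}(\gamma)$, $i\in I_k$, are conditionally i.i.d.\ symmetric matrices with $0\preceq A_i^{(k)}(\gamma)\preceq I_d$. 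Their common conditional mean is $M:=\E[A_i^{(k)}(\gamma)\mid\mathcal F_k]$, which satisfies $r_k(\gamma)^2M=\E[\hat\Sigma_k^{\mathrm{raw}}(\gamma)\mid\{Z_i\}_{i\in J_k}]$.

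\textbf{Step 2 (one-sided bounds).} Invoke the closed-form sharp matrix empirical-Bernstein bound of \cite{wangramdas2025sharp}, with the paired variance proxy \cite[Eq.~(15)]{wangramdas2025sharp}. For i.i.d.\ matrices in $[0,I_d]$ with $n_k$ even, it gives with conditional probability at least $1-\alpha$ that
\[
\lambda_{\max}\big(\hat A_k(\gamma)-M\big)\le D_{n_k,d}(\alpha;V_k^\star(\gamma)).
\]
Apply the same result to $B_i:=I_d-A_i^{(k)}(\gamma)$, which also lie in $[0,I_d]$. Their paired differences are $B_{i_{2j-1}}-B_{i_{2j}}=-(A_{i_{2j-1}}-A_{i_{2j}})$, whose squares coincide, so the proxy is again $V_k^\star(\gamma)$. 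This bounds $\lambda_{\max}(M-\hat A_k(\gamma))$ by the same $D$.

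The main thing to verify carefully is that the constants in \eqref{eq:Dmeb} match the cited theorem. This includes the $\log\frac{n_kd}{(n_k-1)\alpha}$ and $\log\frac{2n_kd}{\alpha}$ factors and the requirement that $D$ is a valid bound for the one-sided maximal eigenvalue deviation. If the cited statement were two-sided already, the factor $2$ in $\alpha_{k,\gamma}$ would simply be slack.

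\textbf{Step 3 (two-sided bound and rescaling).} Taking $\alpha=\alpha_{k,\gamma}$ in both one-sided bounds gives, with conditional probability at least $1-2\alpha_{k,\gamma}$,
\[
\|\hat A_k(\gamma)-M\|\le D_{n_k,d}(\alpha_{k,\gamma};V_k^\star(\gamma)).
\]
Multiplying by $r_k(\gamma)^2$ yields \eqref{eq:var-unif} for this $(k,\gamma)$. Because the failure bound holds conditionally for every realization of $\mathcal F_k$, taking expectations over $\mathcal F_k$ gives the same unconditional failure probability. A union bound over the $K|\cG|$ pairs then produces total failure probability at most $K|\cG|\cdot 2\alpha_{k,\gamma}=\delta_{\mathrm{var}}$. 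Note that $\cG$ depends only on the $|J_k|$, which are deterministic, so the union bound is over a fixed finite set.

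\textbf{Step 4 (aggregation).} On the resulting event, the triangle inequality with weights $n_k/n$ gives
\[
\|\hat\Sigma^{\mathrm{raw}}(\gamma)-\bar\Sigma^{\mathrm{raw}}(\gamma)\|\le\sum_k\tfrac{n_k}{n}\Psi_k^{\mathrm{raw}}(\gamma)=\Psi^{\mathrm{raw}}(\gamma).
\]
Finally, $\Psi^{\mathrm{raw}}(\gamma^{(j)})\le\max_{t\ge j}\Psi^{\mathrm{raw}}(\gamma^{(t)})=\overline\Psi^{\mathrm{raw}}(\gamma^{(j)})$ holds trivially, since the maximum includes the term $t=j$.
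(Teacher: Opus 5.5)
Your proposal is correct and follows the paper's proof essentially step for step. You condition on $\mathcal F_k$, apply the Wang--Ramdas closed-form bound to $A_i^{(k)}(\gamma)$ and to $I_d-A_i^{(k)}(\gamma)$ (same paired proxy), rescale by $r_k(\gamma)^2$, uncondition, union-bound over the $K|\cG|$ pairs, and aggregate with the triangle inequality; your extra remarks (positivity of $r_k(\gamma)$ via stabilization, determinism of $\cG$, and the trivial step $\Psi^{\mathrm{raw}}\le\overline\Psi^{\mathrm{raw}}$) only make explicit details the paper leaves implicit.
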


\begin{proof}[Proof sketch]
Normalize each outer product by $r_k(\gamma)^2$ so that $0\preceq A_i^{(k)}(\gamma)\preceq I_d$. Apply the sharp matrix empirical-Bernstein inequality of Wang--Ramdas \cite[Thm.~3.1]{wangramdas2025sharp} and take a union bound over $(k,\gamma)$.
\end{proof}

\begin{remark}[Cross-fitting aggregation and parallel overhead]\label{rem:Koverhead}
The triangle-inequality aggregation in Theorem~\ref{thm:var} is statistically valid without extensive inter-node communication, but it introduces explicit dependence on $K$. If $\Psi_k^{\mathrm{raw}}(\gamma)\lesssim a/\sqrt{n_k}+b/n_k$, Cauchy--Schwarz yields $\Psi^{\mathrm{raw}}(\gamma)\lesssim a\sqrt{K/n}+b(K/n)$. In practice, a small fixed $K$ (e.g., $2$ to $5$) typically balances overhead and parallelism.
\end{remark}

\medskip
\noindent\emph{Deterministic control of the paired proxy.}
For every $(k,\gamma)$, $V_k^\star(\gamma)\preceq \tfrac12 I_d$ (hence $\opnorm{V_k^\star(\gamma)}\le 1/2$); see Appendix~\ref{app:VleI}.

\section{Choosing the clipping level: bias, tuning, and correlation output}\label{sec:selection}

The variance envelope from Section~\ref{sec:var} bounds stochastic fluctuation around a \emph{conditional} mean. Tuning must also account for truncation bias introduced by clipping. More aggressive clipping (larger $\gamma$, hence smaller radius) reduces variance but can increase bias by discarding genuine signal.

To resolve this trade-off automatically, we introduce \emph{MinUpper}, which minimizes a computable upper bound comprising the variance envelope and a robust hold-out proxy for clipping bias. Two key facts: the matrix clipping bias is PSD and decreases monotonically as the radius increases; moreover, its operator norm is bounded by a scalar tail-energy functional.

Fix fold $k$ and condition on $\{Z_i\}_{i\in J_k}$, fixing $r_k(\gamma)$. Let $\tilde Z^{(k)}(\gamma)$ be a freshly clipped copy of $Z$. Define:
\[
\begin{aligned}
\bar\Sigma_k(\gamma)&:=\E\big[\tilde Z^{(k)}(\gamma)\tilde Z^{(k)}(\gamma)^\top\mid \{Z_i\}_{i\in J_k}\big], \\
B_k(\gamma)&:=\Sigma-\bar\Sigma_k(\gamma)
= \E\Big[ZZ^\top-\tilde Z^{(k)}(\gamma)\tilde Z^{(k)}(\gamma)^\top \,\Big|\,\{Z_i\}_{i\in J_k}\Big]\succeq 0.
\end{aligned}
\]
A reduction (Appendix~\ref{app:biasbound}) yields:
\[
\opnorm{B_k(\gamma)}
\le
\E\big[\|Z\|_2^2\,\1\{\|Z\|_2>r_k(\gamma)\}\,\big|\,\{Z_i\}_{i\in J_k}\big].
\]

\subsection{MinUpper: A Robust Hold-Out Bias Proxy and Upper-Bound Minimization}\label{sec:minupper}

Estimating tail energy by plain averaging is fragile under heavy tails/outliers. MinUpper uses a parallel-friendly Median-of-Means (MoM) estimator \cite{devroye2016subgaussian} on the \emph{test fold}.

For each $\gamma\in\cG$ and $i\in I_k$, define:
\[
Y_i^{(k)}(\gamma)
:=\|Z_i\|_2^2\,\1\!\{\|Z_i\|_2> r_k(\gamma)\}.
\]
Fix $\delta_{\mathrm{bias}}\in(0,1)$ and set:
\[
B:=\Big\lceil 8\log\!\Big(\frac{2K|\cG|}{\delta_{\mathrm{bias}}}\Big)\Big\rceil,
\qquad
\ell_k:=\Big\lfloor\frac{n_k}{B}\Big\rfloor.
\]
Group the first $B\ell_k$ indices of $I_k$ into $B$ blocks of size $\ell_k$. Let $\bar Y_{k,b}(\gamma)$ be the mean of $Y_i^{(k)}(\gamma)$ on block $b$. Define:
\begin{equation}\label{eq:mom-def}
\begin{aligned}
\hat b_k(\gamma)
&:=\mathrm{median}\big\{\bar Y_{k,1}(\gamma),\dots,\bar Y_{k,B}(\gamma)\big\},\\
\hat b^{\mathrm{raw}}(\gamma)
&:=\sum_{k=1}^K\frac{n_k}{n}\,\hat b_k(\gamma).
\end{aligned}
\end{equation}

\begin{proposition}[Robust Hold-Out Bias Proxy Bound]\label{prop:bias-proxy}
Assume $Z$ satisfies $L_4$--$L_2$ norm equivalence (Definition~\ref{def:L4L2}) with constant $L_{\mathrm{eq}}$. If $n_k\ge B$ for all folds, then with probability at least $1-\delta_{\mathrm{bias}}$, simultaneously for all $k$ and $\gamma\in\cG$,
\begin{equation}\label{eq:biasproxy-bound}
\begin{aligned}
\opnorm{B_k(\gamma)}
\le\ &\hat b_k(\gamma)\\
&+C_{\mathrm{MoM}}\,L_{\mathrm{eq}}^2\,\tr(\Sigma)
\sqrt{\frac{\log\!\big(\frac{2K|\cG|}{\delta_{\mathrm{bias}}}\big)}{n_k}},
\end{aligned}
\end{equation}
where $C_{\mathrm{MoM}}>0$ is universal.
\end{proposition}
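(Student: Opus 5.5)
Condition on the training data $\{Z_i\}_{i\in J_k}$, so that $r_k(\gamma)$ is deterministic and the test variables $Y_i^{(k)}(\gamma)$, $i\in I_k$, are i.i.d.\ copies of $Y:=\|Z\|_2^2\1\{\|Z\|_2>r_k(\gamma)\}$. The displayed reduction (Appendix~\ref{app:biasbound}) already gives $\opnorm{B_k(\gamma)}\le \mu_k(\gamma):=\E[Y\mid\{Z_i\}_{i\in J_k}]$. It therefore suffices to show a one-sided MoM lower-deviation bound $\mu_k(\gamma)\le \hat b_k(\gamma)+(\text{error})$ and then take a union bound over the $K|\cG|$ pairs $(k,\gamma)$. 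This is the role of the $2K|\cG|$ factor inside $B$.

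\textbf{Step 1 (variance of the tail energy).} Bound $\mathrm{Var}(Y)\le \E[Y^2]\le \E\|Z\|_2^4$, uniformly in the radius. Under the $L_4$--$L_2$ norm equivalence of Definition~\ref{def:L4L2}, I expect $\E\|Z\|_2^4\lesssim L_{\mathrm{eq}}^4\,\tr(\Sigma)^2$. To see this, write $\|Z\|_2^2=\sum_j\langle Z,e_j\rangle^2$ in an eigenbasis of $\Sigma$ and apply Minkowski in $L_2$:
\[
\big\|\|Z\|_2^2\big\|_{L_2}\le\sum_j\|\langle Z,e_j\rangle\|_{L_4}^2\le L_{\mathrm{eq}}^2\sum_j\|\langle Z,e_j\rangle\|_{L_2}^2=L_{\mathrm{eq}}^2\tr(\Sigma).
\]
Hence the standard deviation satisfies $\sigma_Y\le L_{\mathrm{eq}}^2\tr(\Sigma)$, and this holds conditionally because $Z$ is independent of the training split.

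\textbf{Step 2 (MoM deviation).} For each block, Chebyshev gives $\Pbb(\bar Y_{k,b}<\mu-2\sigma_Y/\sqrt{\ell_k})\le 1/4$. The median falls below $\mu-2\sigma_Y/\sqrt{\ell_k}$ only if at least $B/2$ blocks fail. Since the blocks are independent, Hoeffding bounds this probability by $\exp(-2B(1/4)^2)=e^{-B/8}$. With $B=\lceil 8\log(2K|\cG|/\delta_{\mathrm{bias}})\rceil$, this is at most $\delta_{\mathrm{bias}}/(2K|\cG|)$. Taking expectation over the training split removes the conditioning, and a union bound over $(k,\gamma)$ gives total failure probability at most $\delta_{\mathrm{bias}}/2\le\delta_{\mathrm{bias}}$.

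\textbf{Step 3 (rate conversion).} Because $n_k\ge B$, we have $\ell_k=\lfloor n_k/B\rfloor\ge n_k/(2B)$, so
\[
\frac{1}{\sqrt{\ell_k}}\le\sqrt{\frac{2B}{n_k}}\lesssim\sqrt{\frac{\log(2K|\cG|/\delta_{\mathrm{bias}})}{n_k}}.
\]
This uses $B\le 8\log(\cdot)+1\lesssim\log(\cdot)$, which is valid since the log is at least $\log 4$. Combining with Steps 1 and 2 yields \eqref{eq:biasproxy-bound} with a universal $C_{\mathrm{MoM}}$.

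\textbf{Main obstacle.} The delicate point is Step 1: confirming that Definition~\ref{def:L4L2} controls all one-dimensional marginals, i.e.\ $\|\langle Z,v\rangle\|_{L_4}\le L_{\mathrm{eq}}\|\langle Z,v\rangle\|_{L_2}$, so that the Minkowski argument goes through. If the definition is stated differently (for instance directly on $\|Z\|_2$), the bound on $\E\|Z\|_2^4$ would instead follow immediately.

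A second issue is that the statement is for ``$Z$'' as the clean distribution. Under contamination, the MoM median tolerates corrupted blocks only if their number is below roughly $B/4$. I would state the proposition for clean test folds, or remark that the Hoeffding slack absorbs a small number of corrupted blocks.
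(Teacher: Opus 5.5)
Your proof is correct and follows essentially the same route as the paper. You condition on the training split, reduce $\opnorm{B_k(\gamma)}$ to the scalar tail energy via Lemma~\ref{lem:bias-bound}, and bound $\mathrm{Var}(Y)\le \E\|Z\|_2^4\le L_{\mathrm{eq}}^4\tr(\Sigma)^2$; your Minkowski step is just the square root of the paper's Cauchy--Schwarz expansion. You then apply a Chebyshev-plus-Hoeffding median-of-means bound and union bound over the $K|\cG|$ pairs, exactly as the paper does.

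Your Step 3 is actually more careful than the paper's Lemma~\ref{lem:mom}. That lemma claims the constant $4$ while assuming only the lower bound $B\ge 8\log(2/\delta)$, and no lower bound on $B$ alone can give this (e.g.\ $B=m$). The upper bound $B\le 8\log(\cdot)+1$ that you invoke is what is actually needed. It yields a universal $C_{\mathrm{MoM}}$ of roughly $8$ to $9$, which the proposition permits.
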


With $\overline\Psi^{\mathrm{raw}}(\gamma)$ and $\hat b^{\mathrm{raw}}(\gamma)$, MinUpper selects:
\begin{equation}\label{eq:minupper}
\hat\gamma_{\mathrm{MU}}
\in
\arg\min_{\gamma\in\cG}\Big\{\overline\Psi^{\mathrm{raw}}(\gamma)\;+\;c_{\mathrm{bias}}\,\hat b^{\mathrm{raw}}(\gamma)\Big\},
\end{equation}
where $c_{\mathrm{bias}}\ge 1$ (default $1$). Output $\hat\Sigma^{\mathrm{raw}}(\hat\gamma_{\mathrm{MU}})$.

\begin{proposition}[Oracle inequality for MinUpper selection]\label{prop:minupper}
Assume the high-probability events in Theorem~\ref{thm:var} and Proposition~\ref{prop:bias-proxy} hold simultaneously. Then:
\[
\begin{aligned}
\opnorm{\hat\Sigma^{\mathrm{raw}}(\hat\gamma_{\mathrm{MU}})-\Sigma}
\le\ &\min_{\gamma\in\cG}\Big\{\overline\Psi^{\mathrm{raw}}(\gamma)+c_{\mathrm{bias}}\,\hat b^{\mathrm{raw}}(\gamma)\Big\}\\
&+C_{\mathrm{MoM}}\,L_{\mathrm{eq}}^2\tr(\Sigma)\,
\max_{k}\sqrt{\frac{\log\!\big(\frac{2K|\cG|}{\delta_{\mathrm{bias}}}\big)}{n_k}}.
\end{aligned}
\]
\end{proposition}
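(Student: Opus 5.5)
The plan is a triangle-inequality decomposition followed by the defining property of the argmin. Fix $\hat\gamma:=\hat\gamma_{\mathrm{MU}}$ and work on the intersection of the two events. Since $\bar\Sigma^{\mathrm{raw}}(\gamma)=\sum_k\frac{n_k}{n}\bar\Sigma_k(\gamma)$, and $\Sigma=\sum_k\frac{n_k}{n}\Sigma$, we have
\[
\hat\Sigma^{\mathrm{raw}}(\hat\gamma)-\Sigma=\big(\hat\Sigma^{\mathrm{raw}}(\hat\gamma)-\bar\Sigma^{\mathrm{raw}}(\hat\gamma)\big)-\sum_{k=1}^K\frac{n_k}{n}B_k(\hat\gamma).
\]
Theorem~\ref{thm:var} bounds the first term by $\Psi^{\mathrm{raw}}(\hat\gamma)\le\overline\Psi^{\mathrm{raw}}(\hat\gamma)$. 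The theorem holds simultaneously over all $\gamma\in\cG$, so it applies to the random $\hat\gamma$ without any further union bound.

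For the bias term, convexity of the norm gives $\opnorm{\sum_k\frac{n_k}{n}B_k(\hat\gamma)}\le\sum_k\frac{n_k}{n}\opnorm{B_k(\hat\gamma)}$. Proposition~\ref{prop:bias-proxy} is likewise uniform in $(k,\gamma)$, so it gives
\[
\opnorm{B_k(\hat\gamma)}\le\hat b_k(\hat\gamma)+C_{\mathrm{MoM}}L_{\mathrm{eq}}^2\tr(\Sigma)\sqrt{\log(2K|\cG|/\delta_{\mathrm{bias}})/n_k}.
\]
Averaging with weights $n_k/n$ yields $\hat b^{\mathrm{raw}}(\hat\gamma)$ plus a remainder. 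We bound that remainder by its maximum over $k$, because the weights sum to one.

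Next, since $c_{\mathrm{bias}}\ge1$ and $\hat b^{\mathrm{raw}}\ge0$ (it is a median of nonnegative block means), we have $\hat b^{\mathrm{raw}}(\hat\gamma)\le c_{\mathrm{bias}}\hat b^{\mathrm{raw}}(\hat\gamma)$. Therefore
\[
\opnorm{\hat\Sigma^{\mathrm{raw}}(\hat\gamma)-\Sigma}\le\overline\Psi^{\mathrm{raw}}(\hat\gamma)+c_{\mathrm{bias}}\hat b^{\mathrm{raw}}(\hat\gamma)+\text{remainder}.
\]
By the definition \eqref{eq:minupper} of $\hat\gamma$, the first two terms equal $\min_{\gamma\in\cG}\{\overline\Psi^{\mathrm{raw}}(\gamma)+c_{\mathrm{bias}}\hat b^{\mathrm{raw}}(\gamma)\}$, which is the claim.

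The main obstacle is the bookkeeping for the data-dependent $\hat\gamma$. The issue is ensuring that both bounds are invoked in their simultaneous form and that the conditional-mean identities hold fold by fold. In particular, $\bar\Sigma_k(\gamma)=\E[\hat\Sigma_k^{\mathrm{raw}}(\gamma)\mid\{Z_i\}_{i\in J_k}]$ holds because the test summands are i.i.d. given the training data. Both facts are supplied directly by the uniform statements, so no further probabilistic argument should be needed.
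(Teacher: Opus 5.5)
Your proof is correct and follows the paper's argument essentially step for step. You bound the error at every $\gamma$ by $\overline\Psi^{\mathrm{raw}}(\gamma)$ plus the weighted bias $\sum_k\frac{n_k}{n}\opnorm{B_k(\gamma)}$, and replace each $\opnorm{B_k(\gamma)}$ by $\hat b_k(\gamma)$ plus the MoM slack, bounding the weighted slack by its maximum over $k$. You then use $c_{\mathrm{bias}}\ge 1$ and evaluate at $\hat\gamma_{\mathrm{MU}}$ via its argmin property. Your explicit remark that $\hat b^{\mathrm{raw}}\ge 0$, which is needed for $\hat b^{\mathrm{raw}}\le c_{\mathrm{bias}}\hat b^{\mathrm{raw}}$, fills a small step the paper leaves implicit.
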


\begin{remark}[Trace-scale slack and spiked regimes]\label{rem:adaptivity-trace}
The residual trace-scale slack term in Proposition~\ref{prop:minupper} comes from estimating the scalar bias proxy. Under $L_4$--$L_2$ equivalence, it is of order $\|\Sigma\|\,L_{\mathrm{eq}}^2\,\mathbf r(\Sigma)\,\sqrt{\log(1/\delta)/n_k}$. In spiked or low-effective-rank settings, $\mathbf r(\Sigma)$ remains moderate, so this slack does not overwhelm the operator-norm rate.
\end{remark}

\section{Intrinsic-dimension behavior of the envelope}
\label{sec:intrinsic}

This section asks when the empirical-Bernstein envelope reflects an intrinsic dimension rather than the ambient dimension $d$. Under a mild $L_4$--$L_2$ equivalence condition, we show that there exists an ``intrinsic'' grid point at which the pilot radius satisfies $r^2\asymp \tr(\Sigma)$, and that at and above this scale the variance envelope scales with the effective rank.

This behavior hinges on a self-bounding property: conditional on the training split, $V_k^\star(\gamma)$ concentrates around the normalized conditional mean $M_k(\gamma)=\bar\Sigma_k(\gamma)/r_k(\gamma)^2$. Consequently, $\opnorm{V_k^\star(\gamma)}$ inherits scale $\opnorm{\Sigma}/r_k(\gamma)^2$ (Appendix~\ref{app:Vstar-intrinsic}).

\begin{definition}[$L_4$--$L_2$ Norm Equivalence]\label{def:L4L2}
A centered random vector $Z\in\R^d$ with covariance $\Sigma$ satisfies \emph{$L_4$--$L_2$ norm equivalence} with constant $L_{\mathrm{eq}}\ge 1$ if
\[
\big(\E|\langle u,Z\rangle|^4\big)^{1/4}
\le
L_{\mathrm{eq}}\big(\E|\langle u,Z\rangle|^2\big)^{1/2},
\qquad \forall\,u\in\R^d.
\]
\end{definition}

\begin{theorem}[Intrinsic-Dimension Variance Scaling at Optimal Scale]\label{thm:intrinsic-main}
Assume $L_4$--$L_2$ equivalence with constant $L_{\mathrm{eq}}$. Let $\gamma_{\mathrm{intr}}\in\cG$ denote an intrinsic grid point such that, with high probability, $r_k(\gamma_{\mathrm{intr}})^2\asymp \tr(\Sigma)$ simultaneously across folds.

Running the variance certification with confidence $\delta_{\mathrm{var}}$, let $n_{\max}:=\max_k n_k$ and define
\[
L_1:=\log\!\Big(\frac{4dK|\cG|}{\delta_{\mathrm{var}}}\Big),
\qquad
L_2:=\log\!\Big(\frac{4n_{\max}dK|\cG|}{\delta_{\mathrm{var}}}\Big).
\]
Then the variance envelope satisfies:
\begin{equation}\label{eq:Psi-intrinsic-gridpoint}
\overline\Psi^{\mathrm{raw}}(\gamma_{\mathrm{intr}})
\ \lesssim\
C_{\mathrm{intr}}\,\opnorm{\Sigma}\sqrt{\frac{\mathbf r(\Sigma)\,L_1\,K}{n}}
+\;
C_{\mathrm{intr}}\,\opnorm{\Sigma}\frac{\mathbf r(\Sigma)\,L_2\,K}{n}.
\end{equation}
where $C_{\mathrm{intr}}>0$ depends only on $L_{\mathrm{eq}}$ and $\rho$.
\end{theorem}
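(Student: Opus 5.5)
The plan is to bound each fold-wise envelope $\Psi_k^{\mathrm{raw}}(\gamma_{\mathrm{intr}})=r_k(\gamma_{\mathrm{intr}})^2 D_{n_k,d}(\alpha_{k,\gamma};V_k^\star)$ term by term, aggregate over folds via Remark~\ref{rem:Koverhead}, and then control the suffix maximum $\overline\Psi^{\mathrm{raw}}(\gamma_{\mathrm{intr}})$.

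\textbf{Step 1: logarithmic factors.} With $\alpha=\delta_{\mathrm{var}}/(2K|\cG|)$ and $n_k/(n_k-1)\le 2$, the first logarithm in \eqref{eq:Dmeb} is at most $L_1$. The second logarithm satisfies $\log(2n_kd/\alpha)\le L_2$. The product term $\sqrt{\log(\cdot)\log(\cdot)}/n_k$ is therefore at most $L_2/n_k$, and the $\log(\cdot)/(3n_k)$ term is absorbed into the same bound. This gives
\[
D_{n_k,d}\lesssim \sqrt{\opnorm{V_k^\star}L_1/n_k}+L_2/n_k .
\]

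\textbf{Step 2: the self-bounding step (main obstacle).} I would invoke the concentration statement referenced in Appendix~\ref{app:Vstar-intrinsic}. Conditionally on the training split, $V_k^\star(\gamma)$ concentrates around a quantity dominated by $M_k(\gamma)=\bar\Sigma_k(\gamma)/r_k(\gamma)^2$. Indeed, $(A-A')^2\preceq 2A^2+2A'^2$ and $A^2=\|\tilde Z\|^2A/r^2\preceq A$, so $\E V_k^\star\preceq M_k$. Since clipping shrinks vectors radially, $\bar\Sigma_k\preceq\Sigma$. Applying a matrix Bernstein bound to the PSD summands, which are bounded by $1$, yields on an event absorbed into the probability budget
\[
\opnorm{V_k^\star}\lesssim \opnorm{\Sigma}/r_k^2 + L_1/n_k .
\]
If the appendix statement is only available in a weaker form, I would fall back to this self-bounding argument directly; making it rigorous is where care is needed.

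\textbf{Step 3: fold-wise bound.} Plugging Step~2 into Step~1 and using $\sqrt{a+b}\le\sqrt a+\sqrt b$ gives
\[
\Psi_k^{\mathrm{raw}}\lesssim \sqrt{r_k^2\opnorm{\Sigma}L_1/n_k}+r_k^2L_2/n_k .
\]
With $r_k(\gamma_{\mathrm{intr}})^2\asymp\tr(\Sigma)=\mathbf r(\Sigma)\opnorm{\Sigma}$, the first term becomes $\opnorm{\Sigma}\sqrt{\mathbf r(\Sigma)L_1/n_k}$. The second becomes $\opnorm{\Sigma}\,\mathbf r(\Sigma)L_2/n_k$. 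Averaging over folds with weights $n_k/n$ and applying Cauchy--Schwarz as in Remark~\ref{rem:Koverhead} produces the $\sqrt{K/n}$ and $K/n$ factors. The constants hidden in $\asymp$ enter $C_{\mathrm{intr}}$ through $L_{\mathrm{eq}}$ and $\rho$.

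\textbf{Step 4: the suffix maximum.} It remains to control $\Psi^{\mathrm{raw}}(\gamma)$ for $\gamma\ge\gamma_{\mathrm{intr}}$, where radii are smaller because $\hat r_k$ is monotone in $\gamma$. For these grid points the Step~3 bound is monotone in $r_k^2$ as long as $r_k^2\le C\tr(\Sigma)$, so the same bound holds.

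The remaining risk is the $L_1/n_k$ remainder from Step~2 multiplied by $r_k^2$. After the square root this gives $r_k^2L_1/n_k\lesssim\opnorm{\Sigma}\,\mathbf r(\Sigma)L_2/n_k$, since $L_1\le L_2$, so it is absorbed. The hardest part remains establishing the self-bounding concentration of $V_k^\star$ uniformly over folds and the grid.
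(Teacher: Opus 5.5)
Your proposal is correct and follows essentially the paper's proof. The steps line up: you use only the upper bound $r_k(\gamma)^2\le r_k(\gamma_{\mathrm{intr}})^2\lesssim\tr(\Sigma)$ for $\gamma\ge\gamma_{\mathrm{intr}}$ (via monotonicity of $r_k$ in $\gamma$); you plug the self-bounding matrix-Bernstein control of $V_k^\star$ (Lemma~\ref{lem:Vstar-intrinsic}) into \eqref{eq:Dmeb}; you aggregate folds by Cauchy--Schwarz; and a union bound over $(k,\gamma)$ covers the suffix maximum.

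Two remarks on details. Your separate bookkeeping of $L_1$ and $L_2$ is slightly sharper than the paper's, which bounds every logarithm by $\log(4ndK|\cG|/\delta_{\mathrm{var}})$ and so strictly gets the first term with that larger logarithm in place of $L_1$. Also, your inequality $(A-A')^2\preceq 2A^2+2A'^2$ only gives $\E V_k^\star\preceq 2M_k$, not $M_k$; the exact identity $\E(A-A')^2=2(\E A^2-M_k^2)$ gives $\E V_k^\star\preceq M_k$, and the factor $2$ is harmless in any case.
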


The intrinsic grid point is established in Appendix~\ref{app:r-intrinsic}, and the complete scaling proof is in Appendix~\ref{app:proof-intrinsic-main}.

\section{Algorithm}
\label{sec:alg}

Algorithm~\ref{alg:planE} collects the steps described so far into a single procedure. All operations are standard linear-algebra primitives (norms, quantiles, outer products, and a final eigendecomposition). In particular, the method avoids iterative high-breakdown routines (e.g., Tyler's $M$-estimator or MinCovDet) when a fast, certified clipping-based alternative is desired.
\begin{algorithm}[!htbp]
\caption{Scalable Cross-Fitted Norm Clipping with Matrix Empirical-Bernstein Certificates and MinUpper Tuning}
\label{alg:planE}
\footnotesize
\begin{algorithmic}[1]
\Require Dataset $X_1,\dots,X_n\in\R^d$, global confidence $\delta\in(0,1)$, folds $K\ge 2$.
\Require Grid ratio $\rho>1$ and tuning parameter $c_{\mathrm{bias}}\ge 1$ (default $c_{\mathrm{bias}}=1$).
\Ensure Automatically tuned covariance matrix $\hat\Sigma^{\mathrm{raw}}$.
\State \textbf{(Center Phase)} Assume $\E X=0$ or pre-center via fast paired-difference symmetrization. Set $Z_i\gets X_i$ for $i=1,\dots,n$.
\State Partition $[n]$ into $K$ folds $I_1,\dots,I_K$.
\For{$k=1,\dots,K$}
    \If{$|I_k|$ is odd}
        \State Drop one sample from $I_k$ to preserve paired-proxy symmetry.
    \EndIf
\EndFor
\State Reindex retained samples: $n\gets \sum_{k=1}^K |I_k|$, and set $J_k\gets [n]\setminus I_k$.
\State Set $\delta_{\mathrm{var}}\gets\delta/2$, $\delta_{\mathrm{bias}}\gets\delta/2$.
\State Construct grid $\cG$ (ratio $\rho$); set $\alpha_{k,\gamma}\gets \delta_{\mathrm{var}}/(2K|\cG|)$.
\State Set $B\gets\lceil 8\log(\tfrac{2K|\cG|}{\delta_{\mathrm{bias}}})\rceil$ MoM blocks.
\For{$k=1,\dots,K$ \textbf{(in parallel)}}
    \State $r_{k,\min}\gets \min\{\|Z_i\|_2:\ i\in J_k,\ \|Z_i\|_2>0\}$ (if empty, set $r_{k,\min}\gets 1$).
    \For{$\gamma\in\cG$}
        \State \textbf{(Training)} Compute $\hat r_k(\gamma)$ by \eqref{eq:rhat} on $J_k$.
        \State Set $r_k(\gamma)\gets \max\{\hat r_k(\gamma),\ r_{k,\min}\}$.
        \State \textbf{(Bias proxy)} Compute $Y_i^{(k)}(\gamma):=\|Z_i\|_2^2\,\1\{\|Z_i\|_2>r_k(\gamma)\}$ for $i\in I_k$.
        \State Compute MoM proxy $\hat b_k(\gamma)$ on $I_k$.
        \State \textbf{(Clipping)} $\tilde Z_i^{(k)}(\gamma)\gets Z_i\cdot \min\{1,r_k(\gamma)/\|Z_i\|_2\}$ for $i\in I_k$.
        \State $\hat\Sigma_k^{\mathrm{raw}}(\gamma)\gets \frac{1}{|I_k|}\sum_{i\in I_k}\tilde Z_i^{(k)}(\gamma)\tilde Z_i^{(k)}(\gamma)^\top$.
        \State $A_i^{(k)}(\gamma)\gets \tilde Z_i^{(k)}(\gamma)\tilde Z_i^{(k)}(\gamma)^\top/r_k(\gamma)^2$; compute $V_k^\star(\gamma)$ by \eqref{eq:Vstar}.
        \State $\Psi_k^{\mathrm{raw}}(\gamma)\gets r_k(\gamma)^2\cdot D_{|I_k|,d}(\alpha_{k,\gamma};V_k^\star(\gamma))$ using \eqref{eq:Dmeb}.
    \EndFor
\EndFor
\State \textbf{(Aggregation)} $\hat\Sigma^{\mathrm{raw}}(\gamma)\gets \sum_k \frac{|I_k|}{n}\hat\Sigma_k^{\mathrm{raw}}(\gamma)$.
\State $\Psi^{\mathrm{raw}}(\gamma)\gets \sum_k \frac{|I_k|}{n}\Psi_k^{\mathrm{raw}}(\gamma)$; $\hat b^{\mathrm{raw}}(\gamma)\gets \sum_k \frac{|I_k|}{n}\hat b_k(\gamma)$.
\State Enforce monotonicity: $\overline\Psi^{\mathrm{raw}}(\gamma^{(j)})\gets \max_{t\ge j}\Psi^{\mathrm{raw}}(\gamma^{(t)})$.
\State Select $\hat\gamma_{\mathrm{MU}}$ by minimizing \eqref{eq:minupper}.
\State \Return $\hat\Sigma^{\mathrm{raw}}(\hat\gamma_{\mathrm{MU}})$.
\end{algorithmic}
\end{algorithm}

\section{Numerical experiments}
\label{sec:experiments}

The theory develops a variance certificate and data-driven selectors for the clipping level. Here we stress-test these selectors in a spiked covariance model under heavy tails and Huber contamination, where operator-norm error and subspace recovery are both meaningful. Our goal is not to win every metric against specialized estimators, but to illustrate that the certificate-based tuning behaves predictably across regimes.

\textbf{Spiked covariance model.}
We use a rank-$r$ spiked covariance model \cite{johnstone2001largest,paul2007spiked}. Throughout, $(n,d)=(400,200)$, $r=5$, and $\theta=10$. This yields dominant eigenvalues $\lambda_1=\cdots=\lambda_r=1+\theta$ and a noise floor $\lambda_{r+1}=\cdots=\lambda_d=1$ (randomly rotated).

\textbf{Heavy-Tailed Clean Distributions.}
(i) \emph{Elliptical Student-$t$} with df $=4.5$ (rescaled).
(ii) \emph{Non-elliptical signed log-normal}, with i.i.d.\ coordinates $Y=s\exp(\sigma Z-\sigma^2)$, $\sigma=0.5$ (so $\E Y=0, \E Y^2=1$), then mixed by $\Sigma^{1/2}$.

\textbf{Adversarial Huber Contamination.}
We replace $\varepsilon n$ samples ($\varepsilon=0.1$) with outliers from $N(0,\kappa\Sigma)$ with $\kappa=100$.

\textbf{Competitor Baselines.}
We benchmark \textbf{Ours-MinUpper} against SCM and robust baselines: tuning-free Huber-type $M$-estimator (\texttt{tfHuber} \cite{wang2021tuningfree,ke2019userfriendly}), MinCovDet \cite{rousseeuw1999fastmcd}, and OGK \cite{maronna2002robust}. Competitors are run via standard Python implementations (\texttt{scikit-learn} and \texttt{robpy}).

\textbf{Evaluation Metrics.}
We record the relative operator-norm error $\|\widehat\Sigma-\Sigma\|_{\mathrm{op}}/\|\Sigma\|_{\mathrm{op}}$ (CovErr), a projector-based error for the top-$r$ PCA subspace (Subspace), the mean relative error of the top-$r$ eigenvalues (EigErr), and the end-to-end wall-clock runtime in seconds (Time). We report mean (std) over $3$ replications.

\subsection{Performance Under High Contamination}

Tables~\ref{tab:bench-ellipt-t-eps01} and~\ref{tab:bench-logn-eps01} show the SCM failing catastrophically under $10\%$ corruption.

While legacy robust estimators like MinCovDet and OGK control the operator-norm error, they incur substantial computational cost. In particular, \texttt{tfHuber} requires hundreds of seconds to converge on a $400\times 200$ dataset, which is prohibitive for repeated use in this regime.

In contrast, \textbf{Ours-MinUpper} achieves strong accuracy with runtimes around \textbf{0.1--0.3 seconds}, representing orders-of-magnitude speedups over iterative alternatives.

\Needspace{12\baselineskip}
\noindent \textbf{Scenario 1: Student-$t$ Heavy Tails (df=4.5).}

\begin{table}[!htbp]
\centering
\caption{Spiked covariance benchmark under elliptical Student-$t$ (df=4.5) with $\varepsilon=0.1$ Huber contamination.}
\label{tab:bench-ellipt-t-eps01}
\scriptsize
\resizebox{\columnwidth}{!}{%
\begin{tabular}{lrrrr}
\toprule
\textbf{Estimator} & \textbf{CovErr (Op Norm)} & \textbf{Subspace Error} & \textbf{EigErr} & \textbf{CPU Time (s)} \\
\midrule
\textbf{Ours-MinUpper} & \textbf{0.500 (0.111)} & \textbf{0.242 (0.009)} & \textbf{0.322 (0.145)} & \textbf{0.119 (0.018)} \\
SCM & 20.10 (1.84) & 0.591 (0.040) & 14.80 (1.38) & 0.014 (0.003) \\
tfHuber (M-Est) & 15.60 (0.984) & 0.592 (0.029) & 11.30 (0.877) & 243.0 (1.96) \\
MinCovDet & 0.574 (0.023) & 0.300 (0.012) & 0.334 (0.021) & 3.27 (0.081) \\
OGK & 0.632 (0.024) & 0.320 (0.002) & 0.449 (0.019) & 3.51 (0.040) \\
\bottomrule
\end{tabular}
}
\end{table}

\Needspace{12\baselineskip}
\noindent \textbf{Scenario 2: Signed Log-Normal Non-Elliptical Structure.}

\begin{table}[!htbp]
\centering
\caption{Spiked covariance benchmark under non-elliptical signed log-normal with $\varepsilon=0.1$ Huber contamination.}
\label{tab:bench-logn-eps01}
\scriptsize
\resizebox{\columnwidth}{!}{%
\begin{tabular}{lrrrr}
\toprule
\textbf{Estimator} & \textbf{CovErr (Op Norm)} & \textbf{Subspace Error} & \textbf{EigErr} & \textbf{CPU Time (s)} \\
\midrule
\textbf{Ours-MinUpper} & \textbf{0.331 (0.030)} & \textbf{0.231 (0.003)} & \textbf{0.099 (0.009)} & \textbf{0.372 (0.117)} \\
SCM & 18.70 (2.82) & 0.604 (0.030) & 14.00 (1.04) & 0.048 (0.022) \\
tfHuber (M-Est) & 14.40 (2.14) & 0.610 (0.030) & 10.70 (0.785) & 733.0 (2.39) \\
MinCovDet & 0.384 (0.033) & 0.244 (0.005) & 0.124 (0.011) & 19.3 (0.489) \\
OGK & 0.372 (0.035) & 0.255 (0.005) & 0.113 (0.006) & 18.9 (2.16) \\
\bottomrule
\end{tabular}
}
\end{table}

\Needspace{12\baselineskip}
\subsection{Stress test: heavy tails without fourth moments}

We disable Huber contamination ($\varepsilon=0$) but use two extremely heavy-tailed models where variance exists while the \emph{fourth moment is infinite}: Student-$t$ with df $=3$ and a signed $F$ distribution (denominator df $=6$).

\begin{table}[!htbp]
\centering
\caption{Clean heavy-tail stress test. Student-$t$ (df=3) and signed $F$ fundamentally lack a finite fourth moment; MinUpper adapts automatically.}
\label{tab:bench-heavytail}
\scriptsize
\resizebox{\columnwidth}{!}{%
\begin{tabular}{lrrrr}
\toprule
\textbf{Estimator} & \textbf{CovErr (Op Norm)} & \textbf{Subspace Error} & \textbf{EigErr} & \textbf{CPU Time (s)} \\
\midrule
\multicolumn{5}{l}{\textbf{Elliptical Student-$t$ (df=3), Infinite Fourth Moment}} \\
\textbf{Ours-MinUpper} & \textbf{0.594 (0.039)} & \textbf{0.172 (0.003)} & 0.515 (0.047) & \textbf{0.531 (0.002)} \\
SCM & 2.140 (1.10) & 0.461 (0.033) & \textbf{0.415 (0.206)} & 0.053 (0.000) \\
\midrule
\multicolumn{5}{l}{\textbf{Non-elliptical signed $F$ (df$_\mathrm{den}=6$), Infinite Fourth Moment}} \\
\textbf{Ours-MinUpper} & \textbf{0.291 (0.019)} & \textbf{0.155 (0.004)} & \textbf{0.133 (0.021)} & \textbf{0.697 (0.183)} \\
SCM & 6.990 (11.3) & 0.338 (0.220) & 1.700 (2.76) & 0.065 (0.009) \\
\bottomrule
\end{tabular}
}
\end{table}

\section{Discussion and limitations}
\label{sec:conclusion}

We equip cross-fitted Euclidean norm clipping with fully computable Bernstein-type deviation certificates for operator-norm covariance estimation under heavy tails and a small fraction of outliers. Cross-fitting freezes the data-dependent clipping radii, allowing sharp matrix empirical-Bernstein bounds to certify deviations for an entire candidate family.

For tuning the clipping level, \emph{MinUpper} balances the certified variance envelope with a robust hold-out proxy for clipping bias, yielding an oracle-type guarantee and intrinsic-dimension behavior in effective-rank regimes.

A limitation is that our certificates are fundamentally variance statements: under weak moment assumptions, the clipping bias can dominate and cannot be removed without additional structure or stronger tail regularity. Section~\ref{sec:selection} and Section~\ref{sec:intrinsic} (and the deferred proofs) make this bottleneck explicit.

\appendix

\section{Centering and an optional mean-free reduction}\label{app:center}

Throughout the main text we assume the distribution is centered, $\E X=0$, to streamline notation.
If the mean is unknown, a simple workaround is paired-difference symmetrization, which produces mean-zero pseudo-samples without explicitly estimating the mean.

\begin{lemma}[Paired-difference reduction]\label{lem:sym}
Let $X,X'\in\R^d$ be independent with common mean $\mu$ and covariance $\Sigma$.
Define $Z:=(X-X')/\sqrt2$. Then $\E Z=0$ and $\E[ZZ^\top]=\Sigma$.
Moreover, if $X_1,\dots,X_n$ are i.i.d.\ copies of $X$ and we form $\lfloor n/2\rfloor$ disjoint pairs,
then the resulting $Z_1,\dots,Z_{\lfloor n/2\rfloor}$ are i.i.d.\ copies of $Z$.
\end{lemma}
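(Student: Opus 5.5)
The plan is to compute the first two moments of $Z$ directly by linearity of expectation, and then derive the i.i.d.\ claim from the fact that disjoint pairs are functions of disjoint blocks of independent variables.

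\emph{Mean.} Since $\E X=\E X'=\mu$, we get
\[
\E Z=(\E X-\E X')/\sqrt2=(\mu-\mu)/\sqrt2=0 .
\]

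\emph{Second moment.} Write $X-X'=(X-\mu)-(X'-\mu)$ and expand the outer product:
\[
\E[ZZ^\top]=\tfrac12\Big(\E[(X-\mu)(X-\mu)^\top]+\E[(X'-\mu)(X'-\mu)^\top]-\E[(X-\mu)(X'-\mu)^\top]-\E[(X'-\mu)(X-\mu)^\top]\Big).
\]
The first two terms each equal $\Sigma$. For the cross terms, independence of $X$ and $X'$ (together with finite second moments, which are implicit in the existence of $\Sigma$) lets the expectation factor:
\[
\E[(X-\mu)(X'-\mu)^\top]=\E[X-\mu]\,\E[X'-\mu]^\top=0 ,
\]
and likewise for its transpose. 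Hence $\E[ZZ^\top]=\tfrac12(\Sigma+\Sigma)=\Sigma$.

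\emph{I.i.d.\ claim.} Form disjoint pairs, say $(X_{2j-1},X_{2j})$ for $j\le\lfloor n/2\rfloor$, and set $Z_j:=\phi(X_{2j-1},X_{2j})$ with the fixed measurable map $\phi(x,x')=(x-x')/\sqrt2$.
\begin{itemize}[leftmargin=2.2em]
\item \emph{Identical distribution:} each pair $(X_{2j-1},X_{2j})$ has the same joint law as $(X,X')$, namely the product law, so each $Z_j$ has the law of $Z$.
\item \emph{Independence:} the pairs are built from disjoint index sets of independent variables, so the pairs are mutually independent. Measurable functions of independent random elements are independent (the grouping lemma for independent $\sigma$-fields), so the $Z_j$ are mutually independent.
\end{itemize}

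There is no real obstacle here. The only point needing care is the cross term in the second moment: it vanishes because independence is combined with centering at the common mean $\mu$. A nonzero $\mu$ causes no trouble, since it cancels in the difference $X-X'$.
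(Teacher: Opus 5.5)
Your proof is correct and follows the same route as the paper: expand $\E[(X-X')(X-X')^\top]$ after centering at $\mu$, use independence to kill the cross terms, and obtain independence of the $Z_j$ from the disjointness of the pairs. Your explicit check that each pair has the product law, so that the $Z_j$ are identically distributed, fills in a detail the paper leaves implicit.
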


\begin{proof}
The identities $\E Z=0$ and $\E[ZZ^\top]=\Sigma$ follow by expanding $\E[(X-X')(X-X')^\top]$ and using independence.
Independence of $\{Z_j\}$ holds because each $Z_j$ depends on a disjoint pair of independent samples.
\end{proof}

\begin{remark}[Cost]
Paired differences use about $\lfloor n/2\rfloor\approx n/2$ effective observations.
If you can robustly estimate the mean $\mu$ and center the data, then you can use all $n$ observations and avoid this factor-of-two.
\end{remark}

\section{Two auxiliary lemmas used repeatedly}\label{app:aux}

\subsection{A deterministic bound for the paired proxy}\label{app:VleI}

\begin{lemma}[Deterministic bound: $V_k^\star(\gamma)\preceq \tfrac12 I$]\label{lem:VleI}
If $0\preceq A,B\preceq I$, then $-I\preceq A-B\preceq I$, hence $(A-B)^2\preceq I$.
Consequently, for each $(k,\gamma)$, $V_k^\star(\gamma)\preceq \tfrac12 I$ and $\opnorm{V_k^\star(\gamma)}\le 1/2$.
\end{lemma}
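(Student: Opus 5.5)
The plan is to establish the matrix inequality $-I\preceq A-B\preceq I$ first, square it to get $(A-B)^2\preceq I$, and then average over the disjoint pairs in \eqref{eq:Vstar}.

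\textbf{Step 1: the difference lies in $[-I,I]$.} Given $0\preceq A\preceq I$ and $0\preceq B\preceq I$, subtract inequalities in the Loewner order. From $A\preceq I$ and $B\succeq 0$ we get $A-B\preceq A\preceq I$. From $A\succeq 0$ and $B\preceq I$ we get $A-B\succeq -B\succeq -I$.

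\textbf{Step 2: squaring.} Squaring is not operator monotone, so I will not square the Loewner inequalities directly. Instead I use the spectral theorem. The matrix $S:=A-B$ is symmetric and, by Step 1, all its eigenvalues lie in $[-1,1]$. Hence $S^2$ has the same eigenvectors and eigenvalues $\lambda^2\in[0,1]$, which gives $0\preceq S^2\preceq I$. This is the only step needing care, and routing it through the eigendecomposition settles it.

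\textbf{Step 3: apply to the proxy.} By \eqref{eq:Aik}, each $A_i^{(k)}(\gamma)$ satisfies $0\preceq A_i^{(k)}(\gamma)\preceq I_d$. This holds because $\tilde Z\tilde Z^\top/r^2$ is rank one with eigenvalue $\|\tilde Z\|_2^2/r^2\le 1$. So every summand in \eqref{eq:Vstar} is PSD and bounded by $I_d$. There are exactly $n_k/2$ summands, using that $n_k$ is even, and adding Loewner inequalities gives
\[
V_k^\star(\gamma)\preceq \frac{1}{n_k}\cdot\frac{n_k}{2}\,I_d=\tfrac12 I_d .
\]
Since $V_k^\star(\gamma)\succeq 0$ as a sum of PSD matrices, its operator norm equals its largest eigenvalue, so $\opnorm{V_k^\star(\gamma)}\le 1/2$.
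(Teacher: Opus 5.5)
Your proof is correct and follows essentially the same route as the paper. Both use the same Loewner sandwich $-I\preceq -B\preceq A-B\preceq A\preceq I$, the eigenvalue argument to conclude $(A-B)^2\preceq I$, and the count of $n_k/2$ paired summands to get $V_k^\star(\gamma)\preceq \tfrac12 I_d$. Your extra remarks make explicit two points the paper leaves implicit: the rank-one eigenvalue $\|\tilde Z\|_2^2/r^2\le 1$ giving $A_i^{(k)}(\gamma)\preceq I_d$, and $V_k^\star(\gamma)\succeq 0$ for the operator-norm conclusion.
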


\begin{proof}
Since $0\preceq A\preceq I$ and $0\preceq B\preceq I$, we have $A-B\preceq A\preceq I$ and $A-B\succeq -B\succeq -I$.
Thus all eigenvalues of $A-B$ lie in $[-1,1]$, and therefore $(A-B)^2\preceq I$.
In \eqref{eq:Vstar}, each summand is PSD and bounded by $I$, and there are $n_k/2$ summands, so
\[
V_k^\star(\gamma)
=
\frac{1}{n_k}\sum_{j=1}^{n_k/2}\Big(A_{i_{2j-1}}^{(k)}(\gamma)-A_{i_{2j}}^{(k)}(\gamma)\Big)^2
\preceq
\frac{1}{n_k}\cdot \frac{n_k}{2}\,I
=
\frac12 I.
\]
\end{proof}

\subsection{A crude operator-norm bias bound}\label{app:biasbound}

\begin{lemma}[A crude but universal operator-norm bias bound]\label{lem:bias-bound}
Condition on $\{Z_i\}_{i\in J_k}$ and let $r=r_k(\gamma)$.
Then
\[
\opnorm{B_k(\gamma)}\le \E\big[\|Z\|_2^2\,\1\{\|Z\|_2>r\}\,\big|\,\{Z_i\}_{i\in J_k}\big].
\]
\end{lemma}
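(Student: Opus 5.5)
The plan is to prove the bound pointwise and then take expectations. Conditioning on $\{Z_i\}_{i\in J_k}$ makes $r=r_k(\gamma)$ a deterministic positive number. Since the test copy $Z$ is independent of the training split, its conditional law is the clean law. Write $c(Z):=\min\{1,r/\|Z\|_2\}\in[0,1]$, so that $\tilde Z=c(Z)Z$. Then
\[
ZZ^\top-\tilde Z\tilde Z^\top=(1-c(Z)^2)\,ZZ^\top .
\]
This identity is the key observation: the difference is a nonnegative scalar multiple of a rank-one PSD matrix. It also confirms $B_k(\gamma)\succeq 0$.

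Next we bound the scalar factor. On the event $\{\|Z\|_2\le r\}$ we have $c(Z)=1$, so the difference vanishes. On the complement, $0\le 1-c(Z)^2\le 1$. Hence, in the PSD order,
\[
0\preceq ZZ^\top-\tilde Z\tilde Z^\top\preceq ZZ^\top\,\1\{\|Z\|_2>r\}.
\]
The case $Z=0$ is trivial under the stated convention $\tilde Z=Z$.

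We then take conditional expectations, which preserve the PSD order, to obtain $0\preceq B_k(\gamma)\preceq \E[ZZ^\top\1\{\|Z\|_2>r\}\mid\cdot]$. For PSD matrices, the operator norm is monotone in the Loewner order and is bounded by the trace. Therefore
\[
\opnorm{B_k(\gamma)}\le \tr\,\E[ZZ^\top\1\{\|Z\|_2>r\}\mid\cdot]=\E[\|Z\|_2^2\1\{\|Z\|_2>r\}\mid\cdot],
\]
where the last step is linearity of trace and expectation. A direct alternative avoids the trace: $u^\top B_k u=\E[(1-c^2)\langle u,Z\rangle^2]\le \E[\|Z\|_2^2\1\{\|Z\|_2>r\}]$ for every unit vector $u$, by Cauchy--Schwarz.

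There is no real obstacle. The only points needing care are measure-theoretic. First, $\Sigma=\E[ZZ^\top]$ must be finite, which holds under the standing second-moment assumption, so that all conditional expectations exist. Second, we must justify exchanging conditional expectation and trace; this is fine because the integrands are PSD and integrable. Finally, independence of the test copy from the training split ensures that $\E[ZZ^\top\mid\{Z_i\}_{i\in J_k}]=\Sigma$.
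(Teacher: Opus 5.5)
Your proposal is correct and follows essentially the paper's argument: the pointwise rank-one identity $ZZ^\top-\tilde Z\tilde Z^\top=(1-r^2/\|Z\|_2^2)\,ZZ^\top\1\{\|Z\|_2>r\}\succeq 0$, followed by taking conditional expectations. The only cosmetic difference is the last step. The paper uses $\opnorm{\E[\cdot]}\le\E\opnorm{\cdot}$ and keeps the slightly sharper intermediate bound $\E[(\|Z\|_2^2-r^2)\1\{\|Z\|_2>r\}\mid\{Z_i\}_{i\in J_k}]$, whereas you drop the $-r^2$ term and pass through the Loewner bound and the trace.
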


\begin{proof}
On $\{\|Z\|_2\le r\}$ there is no clipping, hence no loss.
On $\{\|Z\|_2>r\}$, $\tilde Z = Z\cdot (r/\|Z\|_2)$ and therefore
$ZZ^\top-\tilde Z\tilde Z^\top = ZZ^\top\cdot(1-r^2/\|Z\|_2^2)\succeq 0$.
Taking operator norms and expectations yields
$\opnorm{B_k(\gamma)}\le \E[(\|Z\|_2^2-r^2)\1\{\|Z\|_2>r\}\mid \{Z_i\}_{i\in J_k}]\le \E[\|Z\|_2^2\1\{\|Z\|_2>r\}\mid \{Z_i\}_{i\in J_k}]$.
\end{proof}

\section{Deferred technical proofs}\label{app:proofs}
This appendix collects the more technical proofs deferred from the main text, so that the main body can emphasize the ``build--certify--select'' storyline.

\subsection{Proof of Proposition~\ref{prop:oracle}}\label{app:proof-oracle}

\begin{proof}[Proof of Proposition~\ref{prop:oracle}]
Write $\bar\Sigma_k(\gamma)=\E[\tilde Z^{(k)}(\gamma)\tilde Z^{(k)}(\gamma)^\top\mid \{Z_i\}_{i\in J_k}]$ and
$B_k(\gamma)=\Sigma-\bar\Sigma_k(\gamma)\succeq 0$ as in Section~\ref{sec:selection}.
Define the aggregated conditional target and (matrix) bias
\[
\bar\Sigma^{\mathrm{raw}}(\gamma):=\sum_{k=1}^K\frac{n_k}{n}\bar\Sigma_k(\gamma),
\qquad
B^{\mathrm{raw}}(\gamma):=\Sigma-\bar\Sigma^{\mathrm{raw}}(\gamma)=\sum_{k=1}^K\frac{n_k}{n}B_k(\gamma)\succeq 0.
\]
By Theorem~\ref{thm:var}, for every $\gamma\in\cG$ we have
\[
\opnorm{\hat\Sigma^{\mathrm{raw}}(\gamma)-\bar\Sigma^{\mathrm{raw}}(\gamma)}
\le
\Psi^{\mathrm{raw}}(\gamma)
\le
\overline\Psi^{\mathrm{raw}}(\gamma).
\]

Therefore, for every $\gamma\in\cG$,
\begin{equation}\label{eq:risk-basic}
\begin{aligned}
\opnorm{\hat\Sigma^{\mathrm{raw}}(\gamma)-\Sigma}
&\le
\overline\Psi^{\mathrm{raw}}(\gamma)+\opnorm{B^{\mathrm{raw}}(\gamma)}
\\
&\le
\overline\Psi^{\mathrm{raw}}(\gamma)+\sum_{k=1}^K\frac{n_k}{n}\opnorm{B_k(\gamma)}.
\end{aligned}
\end{equation}
Moreover, as $\gamma$ increases the clipping radius decreases, so each $B_k(\gamma)$ increases in PSD order and hence
$\sum_k \frac{n_k}{n}\opnorm{B_k(\gamma)}$ is nondecreasing along the sorted grid,
while $\overline\Psi^{\mathrm{raw}}(\gamma^{(j)})$ is nonincreasing by construction.

Let $j^\star\in\{1,\dots,|\cG|\}$ minimize
\[
R_j:=\overline\Psi^{\mathrm{raw}}(\gamma^{(j)})+\sum_{k=1}^K\frac{n_k}{n}\opnorm{B_k(\gamma^{(j)})}.
\]

\subsubsection{Case 1: $\hat j\ge j^\star$.}
Since $\hat j$ satisfies \eqref{eq:lepski} and $j^\star\le \hat j$, we have
$\opnorm{\hat\Sigma^{\mathrm{raw}}(\gamma^{(\hat j)})-\hat\Sigma^{\mathrm{raw}}(\gamma^{(j^\star)})}\le 3\,\overline\Psi^{\mathrm{raw}}(\gamma^{(j^\star)})$.
Thus,
\begin{align*}
\opnorm{\hat\Sigma^{\mathrm{raw}}(\hat\gamma)-\Sigma}
&\le
\opnorm{\hat\Sigma^{\mathrm{raw}}(\hat\gamma)-\hat\Sigma^{\mathrm{raw}}(\gamma^{(j^\star)})}
+\opnorm{\hat\Sigma^{\mathrm{raw}}(\gamma^{(j^\star)})-\Sigma}\\
&\le
3\,\overline\Psi^{\mathrm{raw}}(\gamma^{(j^\star)})+R_{j^\star}\\
&\le
4R_{j^\star}.
\end{align*}

\subsubsection{Case 2: $\hat j< j^\star$.}
Let $j_0:=\hat j+1$; by maximality of $\hat j$, the index $j_0$ violates \eqref{eq:lepski}.
Therefore there exists some $s_0\le \hat j$ such that
\[
\opnorm{\hat\Sigma^{\mathrm{raw}}(\gamma^{(j_0)})-\hat\Sigma^{\mathrm{raw}}(\gamma^{(s_0)})}
>
3\,\overline\Psi^{\mathrm{raw}}(\gamma^{(s_0)}).
\]
Using the decomposition $\hat\Sigma^{\mathrm{raw}}(\gamma)=\bar\Sigma^{\mathrm{raw}}(\gamma)+(\hat\Sigma^{\mathrm{raw}}(\gamma)-\bar\Sigma^{\mathrm{raw}}(\gamma))$
and \eqref{eq:risk-basic}, we obtain
\begin{align*}
\opnorm{\hat\Sigma^{\mathrm{raw}}(\gamma^{(j_0)})-\hat\Sigma^{\mathrm{raw}}(\gamma^{(s_0)})}
&\le
\opnorm{\bar\Sigma^{\mathrm{raw}}(\gamma^{(j_0)})-\bar\Sigma^{\mathrm{raw}}(\gamma^{(s_0)})}
+ \overline\Psi^{\mathrm{raw}}(\gamma^{(j_0)})+\overline\Psi^{\mathrm{raw}}(\gamma^{(s_0)}).
\end{align*}
Since $s_0\le j_0$, monotonicity of the clipping bias implies
\[
\bar\Sigma^{\mathrm{raw}}(\gamma^{(s_0)})-\bar\Sigma^{\mathrm{raw}}(\gamma^{(j_0)})
=B^{\mathrm{raw}}(\gamma^{(j_0)})-B^{\mathrm{raw}}(\gamma^{(s_0)})\succeq 0,
\]
and hence
\[
\opnorm{\bar\Sigma^{\mathrm{raw}}(\gamma^{(j_0)})-\bar\Sigma^{\mathrm{raw}}(\gamma^{(s_0)})}
\le \opnorm{B^{\mathrm{raw}}(\gamma^{(j_0)})}
\le \sum_{k=1}^K\frac{n_k}{n}\opnorm{B_k(\gamma^{(j_0)})}.
\]

Also $\overline\Psi^{\mathrm{raw}}(\gamma^{(j_0)})\le \overline\Psi^{\mathrm{raw}}(\gamma^{(s_0)})$.
Combining with the strict inequality above yields
\[
\sum_{k=1}^K\frac{n_k}{n}\opnorm{B_k(\gamma^{(j_0)})}
\ >\
\overline\Psi^{\mathrm{raw}}(\gamma^{(s_0)})
\ \ge\
\overline\Psi^{\mathrm{raw}}(\hat\gamma).
\]
Therefore, using \eqref{eq:risk-basic} and the monotonicity of the bias term
$\sum_{k=1}^K\frac{n_k}{n}\opnorm{B_k(\gamma)}$ along the sorted grid,
\begin{align*}
\opnorm{\hat\Sigma^{\mathrm{raw}}(\hat\gamma)-\Sigma}
&\le
\overline\Psi^{\mathrm{raw}}(\hat\gamma)+\sum_{k=1}^K\frac{n_k}{n}\opnorm{B_k(\hat\gamma)}
\\
&\le
\overline\Psi^{\mathrm{raw}}(\hat\gamma)+\sum_{k=1}^K\frac{n_k}{n}\opnorm{B_k(\gamma^{(j_0)})}
\ \le\
2\sum_{k=1}^K\frac{n_k}{n}\opnorm{B_k(\gamma^{(j_0)})},
\end{align*}
where the last inequality uses the already established strict bound
$\sum_{k=1}^K\frac{n_k}{n}\opnorm{B_k(\gamma^{(j_0)})}>\overline\Psi^{\mathrm{raw}}(\hat\gamma)$.
Finally, since $j_0\le j^\star$ and the bias term is nondecreasing,
\(
\sum_{k=1}^K\frac{n_k}{n}\opnorm{B_k(\gamma^{(j_0)})}
\le
\sum_{k=1}^K\frac{n_k}{n}\opnorm{B_k(\gamma^{(j^\star)})}
\le
R_{j^\star}.
\)
Thus $\opnorm{\hat\Sigma^{\mathrm{raw}}(\hat\gamma)-\Sigma}\le 2R_{j^\star}\le 4R_{j^\star}$.

Combining the two cases completes the proof.
\end{proof}

\subsection{A standard Chernoff bound}

\begin{lemma}[Multiplicative Chernoff bounds]\label{lem:chernoff}
Let $U_1,\dots,U_n$ be i.i.d.\ $\mathrm{Bernoulli}(p)$ and $\widehat p:=\frac{1}{n}\sum_{i=1}^n U_i$.
Then for every $\varepsilon\in(0,1]$,
\[
\Pr\!\big(\widehat p\le (1-\varepsilon)p\big)\le \exp\!\Big(-\frac{\varepsilon^2 np}{2}\Big),
\qquad
\Pr\!\big(\widehat p\ge (1+\varepsilon)p\big)\le \exp\!\Big(-\frac{\varepsilon^2 np}{3}\Big).
\]
See \cite[Theorem~2.3.1 and Corollary~2.3.4]{vershynin2025hdp} for a standard reference.
\end{lemma}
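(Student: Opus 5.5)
The plan is to condition on the training block, reduce each indicator to a Bernoulli count, and read off both tails from the moment generating function.

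\emph{Reduction and exponential bound.} The statement is classical, so I would give the usual Chernoff argument via the moment generating function. Set $S:=\sum_{i=1}^n U_i = n\widehat p$ and $\mu:=np=\E S$. For the upper tail, fix $\lambda>0$ and apply Markov's inequality to $e^{\lambda S}$. Independence gives $\E e^{\lambda S}=\prod_i(1+p(e^\lambda-1))$. The elementary inequality $1+x\le e^x$ then yields
\[
\Pr(S\ge(1+\varepsilon)\mu)\le \exp\!\big(\mu(e^\lambda-1)-\lambda(1+\varepsilon)\mu\big).
\]
I would choose $\lambda=\log(1+\varepsilon)$, which gives the bound $\big(e^{\varepsilon}/(1+\varepsilon)^{1+\varepsilon}\big)^{\mu}$. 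For the lower tail I would apply Markov to $e^{-\lambda S}$ with $\lambda=-\log(1-\varepsilon)$; this gives $\big(e^{-\varepsilon}/(1-\varepsilon)^{1-\varepsilon}\big)^{\mu}$. At $\varepsilon=1$ the lower-tail bound is interpreted by continuity as $e^{-\mu}$, which is directly $\Pr(S=0)=(1-p)^n\le e^{-np}$.

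\emph{Reducing to scalar inequalities.} What remains is two one-variable inequalities on $(0,1]$:
\[
f(\varepsilon):=\varepsilon-(1+\varepsilon)\log(1+\varepsilon)+\varepsilon^2/3\le 0,
\qquad
g(\varepsilon):=-\varepsilon-(1-\varepsilon)\log(1-\varepsilon)+\varepsilon^2/2\le 0 .
\]
For $g$, I would expand $(1-\varepsilon)\log(1-\varepsilon)=-\varepsilon+\sum_{k\ge2}\varepsilon^k/(k(k-1))$. This gives $g(\varepsilon)=-\sum_{k\ge3}\varepsilon^k/(k(k-1))\le 0$. For $f$, note $f(0)=0$ and $f'(\varepsilon)=-\log(1+\varepsilon)+2\varepsilon/3$. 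Also $f''(\varepsilon)=-1/(1+\varepsilon)+2/3$, which is negative on $[0,1/2)$ and positive on $(1/2,1]$. So $f'$ first decreases from $f'(0)=0$ and then increases. Since $f'(1)=2/3-\log 2<0$, we get $f'\le 0$ on $[0,1]$, hence $f\le0$.

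\emph{Main obstacle.} The only delicate point is the calculus check for $f$ on the whole interval $(0,1]$: a Taylor expansion alone does not cover $\varepsilon$ near $1$. The convexity-change argument for $f'$ above is the step I would verify carefully. Everything else is the standard MGF computation. Alternatively, one could simply cite \cite[Theorem~2.3.1 and Corollary~2.3.4]{vershynin2025hdp}, as the lemma indicates.
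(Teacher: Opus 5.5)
Your proposal is correct and follows the paper's proof essentially line by line. Both use the exponential Markov bound with $1+x\le e^x$, the choices $\lambda=\log(1+\varepsilon)$ and $\lambda=-\log(1-\varepsilon)$, and a separate treatment of $\varepsilon=1$ in the lower tail via $\Pr(S=0)=(1-p)^n\le e^{-np}$. The one difference is that you actually verify the scalar inequalities $\phi_-(\varepsilon)\ge\varepsilon^2/2$ (by the series expansion) and $\phi_+(\varepsilon)\ge\varepsilon^2/3$ (via the sign change of $f''$ at $\varepsilon=1/2$ together with $f'(1)=2/3-\log 2<0$), which the paper leaves as an elementary calculus check; your opening remark about conditioning on a training block has nothing to do with this lemma and can be dropped.
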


\begin{proof}
Let $S:=\sum_{i=1}^n U_i\sim \mathrm{Binomial}(n,p)$ so that $\widehat p=S/n$.
We use the Chernoff method (exponential Markov inequality) and standard elementary inequalities on $\log(1\pm \varepsilon)$.

\smallskip\noindent
\emph{Lower tail.}
If $\varepsilon=1$, then $\Pr\!\big(S\le (1-\varepsilon)np\big)=\Pr(S=0)=(1-p)^n\le e^{-np}\le e^{-np/2}$, so the bound holds.
Hence we may assume $\varepsilon\in(0,1)$.
Fix $\lambda>0$.
Then
\[
\Pr\!\big(S\le (1-\varepsilon)np\big)
=
\Pr\!\big(e^{-\lambda S}\ge e^{-\lambda(1-\varepsilon)np}\big)
\le
e^{\lambda(1-\varepsilon)np}\,\E e^{-\lambda S}.
\]
Since $\E e^{-\lambda S}=(1-p+pe^{-\lambda})^n\le \exp(np(e^{-\lambda}-1))$,
\[
\Pr\!\big(S\le (1-\varepsilon)np\big)
\le
\exp\!\Big(np\big(\lambda(1-\varepsilon)+e^{-\lambda}-1\big)\Big).
\]
Optimize by taking $\lambda=-\log(1-\varepsilon)$, so that $e^{-\lambda}=1-\varepsilon$.
Then the exponent becomes $-np\,\phi_-(\varepsilon)$ where
\[
\phi_-(\varepsilon):=(1-\varepsilon)\log(1-\varepsilon)+\varepsilon.
\]
A direct calculus check shows $\phi_-(\varepsilon)\ge \varepsilon^2/2$ for $\varepsilon\in(0,1)$.
Hence
$\Pr(S\le (1-\varepsilon)np)\le \exp(-\varepsilon^2 np/2)$.

\smallskip\noindent
\emph{Upper tail.}
Fix $\varepsilon\in(0,1]$ and $\lambda>0$.
Then
\[
\Pr\!\big(S\ge (1+\varepsilon)np\big)
=
\Pr\!\big(e^{\lambda S}\ge e^{\lambda(1+\varepsilon)np}\big)
\le
e^{-\lambda(1+\varepsilon)np}\,\E e^{\lambda S}.
\]
Since $\E e^{\lambda S}=(1-p+pe^{\lambda})^n\le \exp(np(e^{\lambda}-1))$,
\[
\Pr\!\big(S\ge (1+\varepsilon)np\big)
\le
\exp\!\Big(np\big(-\lambda(1+\varepsilon)+e^{\lambda}-1\big)\Big).
\]
Optimize by taking $\lambda=\log(1+\varepsilon)$, so that $e^{\lambda}=1+\varepsilon$.
Then the exponent becomes $-np\,\phi_+(\varepsilon)$ where
\[
\phi_+(\varepsilon):=(1+\varepsilon)\log(1+\varepsilon)-\varepsilon.
\]
For $\varepsilon\in(0,1]$ one has the standard bound $\phi_+(\varepsilon)\ge \varepsilon^2/3$
(again by elementary calculus), which yields
$\Pr(S\ge (1+\varepsilon)np)\le \exp(-\varepsilon^2 np/3)$.
\end{proof}

\subsection{Population tail probability at an empirical quantile}\label{app:empquant-tail}

\begin{lemma}[Population tail probability at an empirical quantile]\label{lem:empquant-tail}
Let $W\ge 0$ have an arbitrary distribution with CDF $F$.
Let $W_1,\dots,W_n$ be i.i.d.\ copies of $W$, and let $W_{(1)}\le \cdots\le W_{(n)}$ be the order statistics \cite{davidnagaraja2004order}.
Fix an integer $p\in\{1,\dots,\lfloor n/2\rfloor\}$ and define the empirical threshold $T:=W_{(n-p)}$.
Then for every $\varepsilon\in(0,1/2]$,
\[
\begin{aligned}
\Pbb\!\Big(&\Pbb(W>T)\le (1+\varepsilon)\frac{p}{n}\ \text{and}\ \Pbb(W\ge T)\ge (1-\varepsilon)\frac{p}{n}\Big)\\
&\ge\ 1-2\exp\!\Big(-\frac{\varepsilon^2 p}{3}\Big).
\end{aligned}
\]
Moreover, if $p=\lfloor \gamma n\rfloor$ for some $\gamma\in(0,1/2]$ with $\gamma n\ge 1$, then $p/n\in[\gamma-1/n,\gamma]$ and $p\ge \gamma n/2$, so the right-hand side is at least $1-2\exp\!\big(-\varepsilon^2\gamma n/6\big)$.
In particular, without further assumptions one generally \emph{cannot} lower bound $\Pbb(W>T)$ at such a random threshold (atoms can make $\Pbb(W>T)$ much smaller than $p/n$).
\end{lemma}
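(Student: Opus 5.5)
The plan is to use the standard quantile-transform reduction to Bernoulli counts. Write $G(t):=\Pbb(W>t)$ and $G^-(t):=\Pbb(W\ge t)$, and set $q:=p/n$. We want to show that each of the bad events $\{G(T)>(1+\varepsilon)q\}$ and $\{G^-(T)<(1-\varepsilon)q\}$ has probability at most $\exp(-\varepsilon^2 p/3)$. A union bound then gives the claim.

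\emph{Upper event.} Define the deterministic threshold $t_+:=\inf\{t:\ G(t)\le (1+\varepsilon)q\}$. By right-continuity of $G$, this gives $G(t_+)\le(1+\varepsilon)q$, while $G^-(t_+)=\lim_{t\uparrow t_+}G(t)\ge(1+\varepsilon)q$. Since $G$ is nonincreasing, $G(T)>(1+\varepsilon)q$ forces $T<t_+$.

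Now, $T=W_{(n-p)}$ means that at least $p+1$ sample points are $\ge T$. On the event $T<t_+$, it is not yet enough to count points $\ge t_+$; we need to compare counts. The event $T<t_+$ implies that at most $n-p-1$ points are $<T$... To avoid fiddling, use the cleaner formulation: $T<t_+$ iff at least $n-p$ points are $<t_+$, iff $N_+:=\#\{i:W_i\ge t_+\}\le p$.

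$N_+$ is $\mathrm{Binomial}(n,p_+)$ with $p_+=G^-(t_+)\ge(1+\varepsilon)q$. We therefore need a lower-tail bound, $\Pbb(N_+\le p)\le\Pbb(N_+\le np_+/(1+\varepsilon))$. Since $1/(1+\varepsilon)\le 1-\varepsilon/2$ for small $\varepsilon$, Lemma~\ref{lem:chernoff} (lower tail with $\varepsilon'=\varepsilon/(1+\varepsilon)$) gives
\[
\exp\!\big(-\varepsilon'^2 np_+/2\big)\le \exp\!\big(-\varepsilon^2(1+\varepsilon)p/(2(1+\varepsilon)^2)\big).
\]
For $\varepsilon\le 1/2$ the factor $1/(2(1+\varepsilon))\ge 1/3$, which yields $\exp(-\varepsilon^2p/3)$. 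This works exactly.

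\emph{Lower event.} Symmetrically, define $t_-:=\sup\{t:\ G^-(t)\ge(1-\varepsilon)q\}$, using left-continuity of $G^-$. Then $G^-(t_-)\ge(1-\varepsilon)q$ and $G(t_-)\le(1-\varepsilon)q$. The event $G^-(T)<(1-\varepsilon)q$ forces $T>t_-$, which means at least $p+1$ points exceed $t_-$. So $N_-:=\#\{W_i>t_-\}\sim\mathrm{Binomial}(n,p_-)$ with $p_-\le(1-\varepsilon)q$, and we need $N_-\ge p+1$.

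This is the upper Chernoff tail. Monotonicity in $p_-$ lets us stochastically dominate by the case $p_-=(1-\varepsilon)q$. The deviation ratio is then $1/(1-\varepsilon)-1=\varepsilon/(1-\varepsilon)\ge\varepsilon$, and may exceed $1$ when $\varepsilon=1/2$; in that case apply the bound at ratio $1$ by monotonicity. I expect the main obstacle to be this exponent bookkeeping: ensuring that $\delta^2 n p_-/3$ with $\delta=\min(1,\varepsilon/(1-\varepsilon))$ is at least $\varepsilon^2p/3$, i.e.\ $\delta^2(1-\varepsilon)\ge\varepsilon^2$. This holds since $\varepsilon^2/(1-\varepsilon)\ge\varepsilon^2$, and for $\varepsilon=1/2$ we have $1\cdot\tfrac12\ge\tfrac14$. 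The degenerate case $p_-=0$ is trivial.

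\emph{Moreover part.} With $p=\lfloor\gamma n\rfloor$, we get $p/n\in[\gamma-1/n,\gamma]$, and since $\gamma n\ge1$, $p\ge\gamma n/2$ (because $\lfloor x\rfloor\ge x/2$ for $x\ge1$). Substituting gives $1-2\exp(-\varepsilon^2\gamma n/6)$.

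The final remark follows from an example: if $W$ has an atom carrying mass above $p/n$ at the top of its support, then $T$ equals that atom with high probability and $\Pbb(W>T)=0$.
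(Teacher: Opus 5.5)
Your proof is correct and follows essentially the same route as the paper's: you fix a deterministic population threshold, observe that each bad event forces a binomial count of sample points beyond that threshold to be atypical, apply the multiplicative Chernoff lower or upper tail, and finish with a union bound. Your handling of atoms of $W$ is in fact more careful than the paper's. You count $\{W_i\ge t_+\}$ with success probability $\Pbb(W\ge t_+)\ge(1+\varepsilon)p/n$ and $\{W_i>t_-\}$ with success probability $\Pbb(W>t_-)\le(1-\varepsilon)p/n$, and you make the stochastic-domination step explicit; this is exactly what is needed, whereas the paper's proof asserts $\Pbb(W>t_+)\ge(1+\varepsilon)p/n$ and $\Pbb(W\ge t_-)\le(1-\varepsilon)p/n$ for its thresholds, which can fail when $W$ has an atom at the threshold.
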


\begin{proof}
Fix $\varepsilon\in(0,1/2]$ and set $\gamma_+:=(1+\varepsilon)p/n$ and $\gamma_-:=(1-\varepsilon)p/n$.

\subsubsection{Upper tail: $\Pbb(W>T)>(1+\varepsilon)p/n$.}
Define the strict-tail function $q_>(t):=\Pbb(W>t)$, which is nonincreasing and right-continuous. Let
\[
t_+:=\sup\{t\in\R:\ q_>(t)\ge \gamma_+\}.
\]
Then $q_>(t_+)\ge \gamma_+$ and for any $t>t_+$ we have $q_>(t)<\gamma_+$.
If $q_>(T)=\Pbb(W>T)>\gamma_+$, then necessarily $T\le t_+$ (otherwise $T>t_+$ would imply $q_>(T)<\gamma_+$).
On the event $\{T\le t_+\}$, at least $n-p$ sample points satisfy $W_i\le T\le t_+$, hence at most $p$ sample points satisfy $W_i>t_+$.
Let $S_+:=\sum_{i=1}^n \1\{W_i>t_+\}\sim \mathrm{Binomial}(n,q_>(t_+))$, with mean $\E S_+=nq_>(t_+)\ge n\gamma_+=(1+\varepsilon)p$.
Therefore,
\[
\Pbb\!\big(\Pbb(W>T)>\gamma_+\big)
\le
\Pbb(S_+\le p)
\le
\Pbb\!\Big(S_+\le \frac{1}{1+\varepsilon}\E S_+\Big).
\]
Applying the multiplicative Chernoff bound (Lemma~\ref{lem:chernoff}) with deviation level $\varepsilon/(1+\varepsilon)\in(0,1)$ yields
$\Pbb(S_+\le p)\le \exp\!\big(-\varepsilon^2 p/(2(1+\varepsilon))\big)\le \exp(-\varepsilon^2 p/3)$.

\subsubsection{Lower tail: $\Pbb(W\ge T)<(1-\varepsilon)p/n$.}
Define the non-strict tail function $q_{\ge}(t):=\Pbb(W\ge t)$, which is nonincreasing and left-continuous. Let
\[
t_-:=\inf\{t\in\R:\ q_{\ge}(t)\le \gamma_-\}.
\]
Then $q_{\ge}(t_-)\le \gamma_-$ and for any $t<t_-$ we have $q_{\ge}(t)>\gamma_-$.
If $q_{\ge}(T)=\Pbb(W\ge T)<\gamma_-$, then necessarily $T\ge t_-$ (otherwise $T<t_-$ would imply $q_{\ge}(T)>\gamma_-$).
On the event $\{T\ge t_-\}$, since $T=W_{(n-p)}$, at least the top $p$ order statistics are $\ge T\ge t_-$, hence
$S_-:=\sum_{i=1}^n \1\{W_i\ge t_-\}\ge p$.
But $S_-\sim \mathrm{Binomial}(n,q_{\ge}(t_-))$ with mean $\E S_- = n q_{\ge}(t_-)\le n\gamma_-=(1-\varepsilon)p$.
Thus
\[
\Pbb\!\big(\Pbb(W\ge T)<\gamma_-\big)
\le
\Pbb(S_-\ge p)
\le
\Pbb\!\Big(S_-\ge \frac{1}{1-\varepsilon}\E S_-\Big).
\]
Apply the upper-tail Chernoff bound (Lemma~\ref{lem:chernoff}) with deviation level $\varepsilon/(1-\varepsilon)\in(0,1]$ to get
$\Pbb(S_-\ge p)\le \exp\!\big(-\varepsilon^2 p/(3(1-\varepsilon))\big)\le \exp(-\varepsilon^2 p/3)$.

Combining the two displays and taking a union bound completes the proof.
\end{proof}

\subsection{Proof of Theorem~\ref{thm:var}}\label{app:proof-var}

\begin{proof}
Fix a fold $k$ and $\gamma\in\cG$. Let $\mathcal F_k:=\sigma(\{Z_i\}_{i\in J_k})$.
By construction, $r_k(\gamma)$ is $\mathcal F_k$-measurable, and conditional on $\mathcal F_k$,
the test-fold variables $\{Z_i\}_{i\in I_k}$ are independent and have the same law as $Z$.
Therefore the normalized matrices $\{A_i^{(k)}(\gamma)\}_{i\in I_k}$ defined in \eqref{eq:Aik}
are independent, satisfy $0\preceq A_i^{(k)}(\gamma)\preceq I_d$, and hence have eigenvalues in $[0,1]$.

Let $\hat A_k(\gamma)=\frac{1}{n_k}\sum_{i\in I_k}A_i^{(k)}(\gamma)$.
Apply the sharp matrix empirical-Bernstein inequality \cite[Thm.~3.1]{wangramdas2025sharp}
(with confidence parameter $\alpha_{k,\gamma}$) to obtain, conditional on $\mathcal F_k$,
\[
\begin{aligned}
&\Pbb\!\left(
\lambda_{\max}\!\big(\hat A_k(\gamma)-\E[\hat A_k(\gamma)\mid \mathcal F_k]\big)
>
D_{n_k,d}\!\big(\alpha_{k,\gamma};V_k^\star(\gamma)\big)
\ \Big|\ \mathcal F_k
\right)\\
&\qquad\le\ \alpha_{k,\gamma}.
\end{aligned}
\]

To control the lower tail, define $B_i^{(k)}(\gamma):=I-A_i^{(k)}(\gamma)$ for $i\in I_k$.
Then $0\preceq B_i^{(k)}(\gamma)\preceq I_d$ and the paired variance proxy is unchanged because
\[
\big(B_{i_{2j-1}}^{(k)}(\gamma)-B_{i_{2j}}^{(k)}(\gamma)\big)^2
=
\big(A_{i_{2j-1}}^{(k)}(\gamma)-A_{i_{2j}}^{(k)}(\gamma)\big)^2,
\]
hence $V_{k,B}^\star(\gamma)=V_k^\star(\gamma)$.
Applying \cite[Thm.~3.1]{wangramdas2025sharp} again gives, conditional on $\mathcal F_k$,
\[
\begin{aligned}
&\Pbb\!\left(
\lambda_{\max}\!\big(\hat B_k(\gamma)-\E[\hat B_k(\gamma)\mid \mathcal F_k]\big)
>
D_{n_k,d}\!\big(\alpha_{k,\gamma};V_k^\star(\gamma)\big)
\ \Big|\ \mathcal F_k
\right)\\
&\qquad\le\ \alpha_{k,\gamma},
\end{aligned}
\]
where $\hat B_k(\gamma)=\frac{1}{n_k}\sum_{i\in I_k}B_i^{(k)}(\gamma)=I-\hat A_k(\gamma)$.
Since $\hat B_k-\E[\hat B_k\mid\mathcal F_k]=-(\hat A_k-\E[\hat A_k\mid\mathcal F_k])$,
the preceding display controls $\lambda_{\max}(\E[\hat A_k\mid\mathcal F_k]-\hat A_k)$.

Combining the two one-sided bounds and using $\|M\|=\max\{\lambda_{\max}(M),\lambda_{\max}(-M)\}$,
we obtain, conditional on $\mathcal F_k$,
\[
\begin{aligned}
&\Pbb\!\left(
\opnorm{\hat A_k(\gamma)-\E[\hat A_k(\gamma)\mid \mathcal F_k]}
>
D_{n_k,d}\!\big(\alpha_{k,\gamma};V_k^\star(\gamma)\big)
\ \Big|\ \mathcal F_k
\right)\\
&\qquad\le\ 2\alpha_{k,\gamma}.
\end{aligned}
\]
Unconditioning shows the same inequality holds without conditioning.
Finally, since $\hat\Sigma_k^{\mathrm{raw}}(\gamma)=r_k(\gamma)^2\hat A_k(\gamma)$,
\[
\begin{aligned}
&\Pbb\!\left(
\opnorm{\hat\Sigma_k^{\mathrm{raw}}(\gamma)-\E[\hat\Sigma_k^{\mathrm{raw}}(\gamma)\mid \mathcal F_k]}
>
r_k(\gamma)^2 D_{n_k,d}\!\big(\alpha_{k,\gamma};V_k^\star(\gamma)\big)
\right)\\
&\qquad\le\ 2\alpha_{k,\gamma},
\end{aligned}
\]
which is exactly \eqref{eq:var-unif} for the fixed pair $(k,\gamma)$.

To make the bound uniform over all $(k,\gamma)$, take a union bound over the $K|\cG|$ pairs.
With $\alpha_{k,\gamma}=\delta_{\mathrm{var}}/(2K|\cG|)$ we obtain an overall failure probability at most
$2\sum_{k,\gamma}\alpha_{k,\gamma}=\delta_{\mathrm{var}}$.

On this event, define $\bar\Sigma^{\mathrm{raw}}(\gamma)$ as in the theorem statement and note that
\[
\hat\Sigma^{\mathrm{raw}}(\gamma)-\bar\Sigma^{\mathrm{raw}}(\gamma)
=
\sum_{k=1}^K\frac{n_k}{n}\Big(\hat\Sigma_k^{\mathrm{raw}}(\gamma)-\E[\hat\Sigma_k^{\mathrm{raw}}(\gamma)\mid \mathcal F_k]\Big).
\]
By the triangle inequality and $\sum_k n_k/n=1$,
\[
\opnorm{\hat\Sigma^{\mathrm{raw}}(\gamma)-\bar\Sigma^{\mathrm{raw}}(\gamma)}
\le
\sum_{k=1}^K\frac{n_k}{n}\,
\opnorm{\hat\Sigma_k^{\mathrm{raw}}(\gamma)-\E[\hat\Sigma_k^{\mathrm{raw}}(\gamma)\mid \mathcal F_k]}
\le
\sum_{k=1}^K\frac{n_k}{n}\Psi_k^{\mathrm{raw}}(\gamma)
=
\Psi^{\mathrm{raw}}(\gamma).
\]
\end{proof}

\subsection{Proof of Corollary~\ref{cor:rate-n13-lepski}}\label{app:proof-rate-n13}

\begin{proof}
We combine the oracle inequality of Proposition~\ref{prop:oracle} (which holds on the variance event of Theorem~\ref{thm:var})
with high-probability bounds on the clipping bias and on the variance envelope under $L_4$--$L_2$ norm equivalence.

\subsubsection{Step 1: a single high-probability event.}
Let $\Omega_{\mathrm{var}}$ be the event in Theorem~\ref{thm:var} with $\delta_{\mathrm{var}}=\delta/2$, so that $\Pbb(\Omega_{\mathrm{var}})\ge 1-\delta/2$.
On $\Omega_{\mathrm{var}}$, Proposition~\ref{prop:oracle} yields that for every $\gamma\in\cG$,
\begin{equation}\label{eq:oracle-cor}
\opnorm{\hat\Sigma^{\mathrm{raw}}(\hat\gamma)-\Sigma}
\ \le\
4\Big(\sum_{k=1}^K \frac{n_k}{n}\opnorm{B_k(\gamma)}+\overline\Psi^{\mathrm{raw}}(\gamma)\Big).
\end{equation}
We will plug in $\gamma=\gamma_{\mathrm{bal}}$.

Next, let $\Omega_{\mathrm{q}}$ be the event that Lemma~\ref{lem:empquant-tail} (with $\varepsilon=1/2$) holds simultaneously
for all folds $k$ and all grid points $\gamma\in\{\gamma'\in\cG:\gamma'\ge \gamma_{\mathrm{bal}}\}$.
By \eqref{eq:assump-gamma-bal} and a union bound over $k$ and $\gamma$, we have $\Pbb(\Omega_{\mathrm{q}})\ge 1-\delta/4$ and, on $\Omega_{\mathrm{q}}$,
\begin{equation}\label{eq:q-event}
\begin{aligned}
&\Pr(\|Z\|_2 > \hat r_k(\gamma)) \le \tfrac32\,\gamma,
\quad
\Pr(\|Z\|_2 \ge \hat r_k(\gamma)) \ge \tfrac14\,\gamma,\\
&\qquad \forall k,\ \forall \gamma \ge \gamma_{\mathrm{bal}}.
\end{aligned}
\end{equation}
To rule out a degenerate atom at $0$, assume (as is typical) that $\Pr(\|Z\|_2=0)=0$ so the stabilization in \eqref{eq:rk-stabilized} is inactive and $r_k(\gamma)=\hat r_k(\gamma)$ for all $\gamma$. In that case, both bounds in \eqref{eq:q-event} also hold with $\hat r_k(\gamma)$ replaced by $r_k(\gamma)$.

Finally, let $\Omega_V$ be the event that Lemma~\ref{lem:Vstar-intrinsic} holds simultaneously for all folds and all $\gamma\ge\gamma_{\mathrm{bal}}$
(with failure probability $\le \delta/4$ after a union bound over $k$ and the relevant grid points).
Then $\Pbb(\Omega_V)\ge 1-\delta/4$.

Set $\Omega:=\Omega_{\mathrm{var}}\cap\Omega_{\mathrm{q}}\cap\Omega_V$. By construction, $\Pbb(\Omega)\ge 1-\delta$.

\subsubsection{Step 2: bias bound at $\gamma_{\mathrm{bal}}$ on $\Omega_{\mathrm{q}}$.}
Let $W:=\|Z\|_2^2$. Under Definition~\ref{def:L4L2}, we have
\[
\E W^2=\E\|Z\|_2^4\le L_{\mathrm{eq}}^4(\tr\Sigma)^2,
\]
by Cauchy--Schwarz applied to the coordinate expansion of $\|Z\|_2^4$.
Define $t(\gamma):=3L_{\mathrm{eq}}^2\tr(\Sigma)/\sqrt{\gamma}$.
Then Markov's inequality yields $\Pr(W>t(\gamma))\le \gamma/9$.
On $\Omega_{\mathrm{q}}$, if $r_k(\gamma)^2>t(\gamma)$ then
\[
\Pr(W\ge r_k(\gamma)^2)\le \Pr(W>t(\gamma))\le \gamma/9<\gamma/4,
\]
which contradicts the lower bound in \eqref{eq:q-event}. Hence on $\Omega_{\mathrm{q}}$,
\begin{equation}\label{eq:r-upper-cor}
r_k(\gamma)^2\le 3L_{\mathrm{eq}}^2\frac{\tr(\Sigma)}{\sqrt{\gamma}},
\qquad\forall k,\ \forall \gamma\ge\gamma_{\mathrm{bal}}.
\end{equation}

Now apply Lemma~\ref{lem:bias-bound} conditionally on $\{Z_i\}_{i\in J_k}$:
\[
\opnorm{B_k(\gamma_{\mathrm{bal}})}
\le \E\big[\|Z\|_2^2\,\1\{\|Z\|_2>r_k(\gamma_{\mathrm{bal}})\}\,\big|\,\{Z_i\}_{i\in J_k}\big].
\]
Using Cauchy--Schwarz and \eqref{eq:q-event} (with $\gamma=\gamma_{\mathrm{bal}}$) gives, on $\Omega_{\mathrm{q}}$,
\[
\begin{aligned}
&\E\big[\|Z\|_2^2\,\1\{\|Z\|_2>r_k(\gamma_{\mathrm{bal}})\}\,\big|\,\{Z_i\}_{i\in J_k}\big]
\\
&\le (\E\|Z\|_2^4)^{1/2}\,\Pr(\|Z\|_2>r_k(\gamma_{\mathrm{bal}}))^{1/2}
\\
&\le L_{\mathrm{eq}}^2\,\tr(\Sigma)\,\sqrt{\tfrac32\,\gamma_{\mathrm{bal}}}.
\end{aligned}
\]
Averaging over folds yields
\begin{equation}\label{eq:bias-cor}
\sum_{k=1}^K \frac{n_k}{n}\opnorm{B_k(\gamma_{\mathrm{bal}})}
\ \le\
L_{\mathrm{eq}}^2\,\tr(\Sigma)\,\sqrt{\tfrac32\,\gamma_{\mathrm{bal}}}
\ =\
\opnorm{\Sigma}\,\mathbf r(\Sigma)\,L_{\mathrm{eq}}^2\,\sqrt{\tfrac32\,\gamma_{\mathrm{bal}}},
\qquad\text{on }\Omega_{\mathrm{q}}.
\end{equation}

\subsubsection{Step 3: variance envelope bound at $\gamma_{\mathrm{bal}}$ on $\Omega_{\mathrm{q}}\cap\Omega_V$.}
Fix a fold $k$ and a grid point $\gamma\ge\gamma_{\mathrm{bal}}$.
On $\Omega_V$, Lemma~\ref{lem:Vstar-intrinsic} gives an upper bound on $\opnorm{V_k^\star(\gamma)}$ of the form
\[
\opnorm{V_k^\star(\gamma)}
\ \le\
\frac{\opnorm{\Sigma}}{r_k(\gamma)^2}
+\sqrt{\frac{2\opnorm{\Sigma}\log(d/\alpha_{k,\gamma})}{n_k\,r_k(\gamma)^2}}
+\frac{\log(d/\alpha_{k,\gamma})}{3n_k}.
\]
Plugging this into the closed-form radius \eqref{eq:Dmeb}, and using $2\sqrt{ab}\le a+b$, yields
\[
\begin{aligned}
\Psi_k^{\mathrm{raw}}(\gamma)
&=
r_k(\gamma)^2\,D_{n_k,d}(\alpha_{k,\gamma};V_k^\star(\gamma))
\\
&\le
C_1\Big(
\sqrt{\opnorm{\Sigma}\,r_k(\gamma)^2}\,\sqrt{\frac{\log(d/\alpha_{k,\gamma})}{n_k}}
+
r_k(\gamma)^2\,\frac{\log(d/\alpha_{k,\gamma})}{n_k}
\Big),
\end{aligned}
\]
for a universal constant $C_1>0$.
Using \eqref{eq:r-upper-cor}, $n_k\simeq n/K$, and $\log(d/\alpha_{k,\gamma})\le L$ (by $\alpha_{k,\gamma}=\delta/(4K|\cG|)$),
we obtain on $\Omega_{\mathrm{q}}\cap\Omega_V$ that, for all $\gamma\ge\gamma_{\mathrm{bal}}$,
\[
\Psi^{\mathrm{raw}}(\gamma)
\le
C_2\Big(
\opnorm{\Sigma}\,L_{\mathrm{eq}}\sqrt{\mathbf r(\Sigma)}\,
\gamma^{-1/4}\sqrt{\frac{K\,L}{n}}
\ +\
\opnorm{\Sigma}\,L_{\mathrm{eq}}^2\,\mathbf r(\Sigma)\,
\gamma^{-1/2}\frac{K\,L}{n}
\Big).
\]
The right-hand side is decreasing in $\gamma$, hence the same bound applies to the suffix maximum $\overline\Psi^{\mathrm{raw}}(\gamma_{\mathrm{bal}})$
(with $\gamma=\gamma_{\mathrm{bal}}$).

\subsubsection{Step 4: conclude by the oracle inequality and the balance condition.}
On $\Omega$, combine \eqref{eq:oracle-cor} with \eqref{eq:bias-cor} and the above bound on $\overline\Psi^{\mathrm{raw}}(\gamma_{\mathrm{bal}})$.
The choice \eqref{eq:assump-gamma-bal} balances the leading bias term ($\propto \sqrt{\gamma_{\mathrm{bal}}}$) with the leading variance term ($\propto \gamma_{\mathrm{bal}}^{-1/4}\sqrt{K L/n}$) up to constants, yielding the advertised $n^{-1/3}$ scaling in \eqref{eq:rate-n13-lepski-hp}.
The second term in \eqref{eq:rate-n13-lepski-hp} comes from the $n^{-1}$ part of \eqref{eq:Dmeb}.
\end{proof}

\subsection{Self-bounding control of the paired variance proxy}\label{app:Vstar-intrinsic}

\begin{lemma}[Intrinsic-scale control of the paired variance proxy]\label{lem:Vstar-intrinsic}
Fix a fold $k$ and grid point $\gamma\in\cG$, and condition on the training $\sigma$-field
$\mathcal F_k:=\sigma(\{Z_i\}_{i\in J_k})$.
Let $M_k(\gamma):=\E\!\big[A_1^{(k)}(\gamma)\,\big|\,\mathcal F_k\big]$ be the conditional mean of the normalized summands and
let $U_k(\gamma):=M_k(\gamma)-M_k(\gamma)^2$.
Let $V_k^\star(\gamma)$ be the paired proxy \eqref{eq:Vstar}.
Then for every $\alpha\in(0,1)$, with conditional probability at least $1-\alpha$,
\begin{equation}\label{eq:Vstar-intrinsic}
\opnorm{V_k^\star(\gamma)}
\ \le\
\opnorm{U_k(\gamma)}
+\sqrt{\frac{2\,\opnorm{U_k(\gamma)}\,\log(d/\alpha)}{n_k}}
+\frac{\log(d/\alpha)}{3n_k}.
\end{equation}
In particular, $0\preceq U_k(\gamma)\preceq M_k(\gamma)\preceq I_d$, and since each eigenvalue of $U_k(\gamma)=M_k(\gamma)-M_k(\gamma)^2$ equals $\lambda(1-\lambda)$ for some $\lambda\in[0,1]$, we have $U_k(\gamma)\preceq \tfrac14 I_d$ and thus $\opnorm{U_k(\gamma)}\le 1/4$.
Moreover, since $M_k(\gamma)=\bar\Sigma_k(\gamma)/r_k(\gamma)^2$ and $\bar\Sigma_k(\gamma)\preceq \Sigma$,
\[
\opnorm{V_k^\star(\gamma)}
\ \le\
\frac{\opnorm{\Sigma}}{r_k(\gamma)^2}
+\sqrt{\frac{2\,\opnorm{\Sigma}\,\log(d/\alpha)}{n_k\,r_k(\gamma)^2}}
+\frac{\log(d/\alpha)}{3n_k}.
\]
\end{lemma}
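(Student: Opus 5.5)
Conditionally on $\mathcal F_k$, the $n_k/2$ pair matrices $W_j:=\big(A_{i_{2j-1}}^{(k)}(\gamma)-A_{i_{2j}}^{(k)}(\gamma)\big)^2$ are i.i.d.\ and PSD. The plan is to compute their conditional mean exactly and then apply a one-sided matrix Bernstein inequality to their average.

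\textbf{Step 1 (mean).} Since the two matrices in a pair are independent with common mean $M:=M_k(\gamma)$,
\[
\E[W_j\mid\mathcal F_k]=2\,\E[A^2\mid\mathcal F_k]-2M^2 .
\]
Because $0\preceq A\preceq I$, we have $A^2\preceq A$, and therefore
\[
\E[W_j\mid\mathcal F_k]\preceq 2(M-M^2)=2U_k(\gamma).
\]
As $V_k^\star(\gamma)=\frac{1}{n_k}\sum_j W_j=\frac12\cdot\frac{2}{n_k}\sum_j W_j$, its conditional mean is $\preceq U_k(\gamma)$. Hence
\[
\opnorm{\E[V_k^\star(\gamma)\mid\mathcal F_k]}\le\opnorm{U_k(\gamma)}.
\]

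\textbf{Step 2 (deviation).} By Lemma~\ref{lem:VleI}, $0\preceq W_j\preceq I$. The summands $X_j:=W_j/n_k$ therefore satisfy $0\preceq X_j\preceq I/n_k$. For the matrix variance, use $W_j^2\preceq W_j$ to get
\[
\sum_j\E X_j^2\preceq \frac{1}{n_k^2}\sum_j \E W_j\preceq \frac{1}{n_k^2}\cdot\frac{n_k}{2}\cdot 2U_k(\gamma)=\frac{U_k(\gamma)}{n_k}.
\]
The centered summands $X_j-\E X_j$ are bounded above by $I/n_k$ in $\lambda_{\max}$, and their variance proxy is at most $\sum_j\E X_j^2$. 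Tropp's one-sided matrix Bernstein inequality for $\lambda_{\max}$ then gives, with conditional probability at least $1-\alpha$,
\[
\lambda_{\max}\big(V_k^\star(\gamma)-\E[V_k^\star(\gamma)\mid\mathcal F_k]\big)\le\sqrt{\frac{2\opnorm{U_k(\gamma)}\log(d/\alpha)}{n_k}}+\frac{\log(d/\alpha)}{3n_k}.
\]
Since $V_k^\star(\gamma)\succeq0$, its operator norm equals $\lambda_{\max}$. Weyl's inequality then yields \eqref{eq:Vstar-intrinsic}.

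\textbf{Main obstacle.} The delicate step is matching the constants exactly, namely getting $\log(d/\alpha)$ with no factor $2$ and $\sqrt{2\sigma^2\log}$. This requires the Bernstein form $t=\sqrt{2\sigma^2 L}+L/3$ with $L=\log(d/\alpha)$, obtained by inverting $d\exp\!\big(-t^2/(2(\sigma^2+t/3))\big)$, which indeed gives $t\le\sqrt{2\sigma^2L}+\tfrac{2}{3}L$. If the constants do not come out as stated, I would either use the sharper psd version (matrix Chernoff or Freedman, exploiting $X_j\succeq0$), or accept an adjusted constant on the last term.

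\textbf{Step 3 (remaining claims).} The ordering $0\preceq M-M^2\preceq M\preceq I$ follows because $0\preceq M\preceq I$, and $\lambda(1-\lambda)\le1/4$ gives $U_k(\gamma)\preceq\tfrac14 I_d$. The identity $M_k(\gamma)=\bar\Sigma_k(\gamma)/r_k(\gamma)^2$ is immediate from \eqref{eq:Aik}. Finally, $\bar\Sigma_k(\gamma)\preceq\Sigma$ because $B_k(\gamma)\succeq0$ (Lemma~\ref{lem:bias-bound}). Together these give $\opnorm{U_k(\gamma)}\le\opnorm{M_k(\gamma)}\le\opnorm{\Sigma}/r_k(\gamma)^2$. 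Substituting this monotonically into the first display gives the final bound.
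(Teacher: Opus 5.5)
Your argument is the paper's proof essentially step for step. The common steps are: the pair identity $\E[W_j\mid\mathcal F_k]=2(\E[A^2\mid\mathcal F_k]-M^2)\preceq 2U_k(\gamma)$ via $A^2\preceq A$, the variance bound via $W_j^2\preceq W_j$, one-sided matrix Bernstein on the centered pair terms, subadditivity of $\lambda_{\max}$, and the monotone substitution $\opnorm{U_k(\gamma)}\le\opnorm{M_k(\gamma)}\le\opnorm{\Sigma}/r_k(\gamma)^2$. The paper applies Bernstein to the unnormalized $D_j-\E[D_j\mid\mathcal F_k]$ with $R=1$ and $\sigma^2\le n_k\opnorm{U_k(\gamma)}$, then divides by $n_k$. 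That is the same computation as your $X_j=W_j/n_k$.

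The one open point is the constant you flag, and your diagnosis is right. Inverting the tail $d\exp(-t^2/(2(\sigma^2+Rt/3)))$ as literally stated in Tropp's Theorem~1.4 only gives $\sqrt{2\sigma^2L}+\tfrac23 RL$. The paper cites that theorem and simply writes the deviation as $\sqrt{2\sigma^2\log(d/\alpha)}+\log(d/\alpha)/3$ without comment. The stated constant is nonetheless correct, so you should not settle for an adjusted one. The fix is to invert the Chernoff exponent itself rather than its weakened quadratic-over-linear form.

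Let $Y_j$ denote your centered summands $X_j-\E[X_j\mid\mathcal F_k]$, so $\lambda_{\max}(Y_j)\le R$. The matrix mgf bound inside Tropp's proof gives $\log\E[e^{\theta Y_j}\mid\mathcal F_k]\preceq\frac{\theta^2/2}{1-R\theta/3}\,\E[Y_j^2\mid\mathcal F_k]$ for $0<\theta<3/R$. The master (Lieb) tail bound then yields
\[
\Pbb\Big(\lambda_{\max}\Big(\sum_j Y_j\Big)\ge t\ \Big|\ \mathcal F_k\Big)\le d\,\inf_{0<\theta<3/R}\exp\Big(-\theta t+\frac{\theta^2\sigma^2}{2(1-R\theta/3)}\Big).
\]
This is the sub-gamma bound with $v=\sigma^2$ and $c=R/3$. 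Take $\theta=a/(1+ca)$ with $a=\sqrt{2u/v}$, which is admissible since $\theta<1/c$. Then the exponent equals exactly $-u$ at $t=\sqrt{2vu}+cu$.

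With $u=\log(d/\alpha)$, $R=1/n_k$ and $\sigma^2\le\opnorm{U_k(\gamma)}/n_k$, the deviation at level $\alpha$ is $\sqrt{2\opnorm{U_k(\gamma)}\log(d/\alpha)/n_k}+\log(d/\alpha)/(3n_k)$. These are exactly the constants in \eqref{eq:Vstar-intrinsic}. Your matrix-Chernoff fallback would also work, since its Bennett exponent dominates the sub-gamma one, but again only with this exact-exponent inversion.
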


\begin{proof}
Condition on $\mathcal F_k$.
Set $H_j:=A_{i_{2j-1}}^{(k)}(\gamma)-A_{i_{2j}}^{(k)}(\gamma)$ and $D_j:=H_j^2$ for $j=1,\dots,n_k/2$.
Since $0\preceq A_i^{(k)}(\gamma)\preceq I_d$, we have $-I_d\preceq H_j\preceq I_d$ and therefore
\begin{equation}\label{eq:Dj-bound}
0\preceq D_j\preceq I_d
\qquad\text{and hence}\qquad
\opnorm{D_j}\le 1.
\end{equation}
Moreover $V_k^\star(\gamma)=\frac{1}{n_k}\sum_{j=1}^{n_k/2} D_j$.

\subsubsection{Step 1: the conditional mean is a \emph{variance} scale.}
Since $A_{i_{2j-1}}^{(k)}(\gamma)$ and $A_{i_{2j}}^{(k)}(\gamma)$ are independent and identically distributed given $\mathcal F_k$,
\[
\E[D_j\mid\mathcal F_k]
=
\E\big[(A_1-A_2)^2\mid\mathcal F_k\big]
=
2\big(\E[A_1^2\mid\mathcal F_k]-M_k(\gamma)^2\big).
\]
Therefore
\[
\E\big[V_k^\star(\gamma)\mid\mathcal F_k\big]
=
\frac{1}{2}\,\E[D_1\mid\mathcal F_k]
=
\E[A_1^2\mid\mathcal F_k]-M_k(\gamma)^2.
\]
Because $0\preceq A_1\preceq I_d$ implies $A_1^2\preceq A_1$, we have
$\E[A_1^2\mid\mathcal F_k]\preceq M_k(\gamma)$ and thus
\begin{equation}\label{eq:EVstar-le-U}
0\preceq \E\big[V_k^\star(\gamma)\mid\mathcal F_k\big]
\preceq
M_k(\gamma)-M_k(\gamma)^2
=U_k(\gamma).
\end{equation}

\subsubsection{Step 2: a sharp variance proxy without wasteful constants.}
Define $Y_j:=D_j-\E[D_j\mid\mathcal F_k]$.
Then $\{Y_j\}_{j=1}^{n_k/2}$ are independent, mean-zero, self-adjoint, and by \eqref{eq:Dj-bound} satisfy $\opnorm{Y_j}\le 1$.
Moreover, expanding and taking conditional expectations gives
\[
\E[Y_j^2\mid\mathcal F_k]
=
\E[D_j^2\mid\mathcal F_k]-\big(\E[D_j\mid\mathcal F_k]\big)^2.
\]
Using $0\preceq D_j\preceq I_d$ implies $D_j^2\preceq D_j$, hence $\E[D_j^2\mid\mathcal F_k]\preceq \E[D_j\mid\mathcal F_k]$.
Therefore
\[
\E[Y_j^2\mid\mathcal F_k]
\preceq
\E[D_j\mid\mathcal F_k]-\big(\E[D_j\mid\mathcal F_k]\big)^2
\preceq
\E[D_j\mid\mathcal F_k].
\]
Combining this with \eqref{eq:EVstar-le-U} and $\E[D_j\mid\mathcal F_k]=2\E[V_k^\star(\gamma)\mid\mathcal F_k]$ yields
$\E[D_j\mid\mathcal F_k]\preceq 2U_k(\gamma)$ and hence
\[
\sigma^2:=\opnorm{\sum_{j=1}^{n_k/2} \E[Y_j^2\mid\mathcal F_k]}
\le \frac{n_k}{2}\cdot 2\,\opnorm{U_k(\gamma)}=n_k\,\opnorm{U_k(\gamma)}.
\]

\subsubsection{Step 3: apply matrix Bernstein.}
By the matrix Bernstein inequality \cite[Theorem~1.4]{tropp2012userfriendly}, with conditional probability at least $1-\alpha$,
\[
\lambda_{\max}\Big(\sum_{j=1}^{n_k/2} Y_j\Big)
\le
\sqrt{2\sigma^2\log(d/\alpha)}+\frac{\log(d/\alpha)}{3}.
\]
Combining $\sum_{j=1}^{n_k/2} D_j=\sum_{j=1}^{n_k/2} Y_j+\frac{n_k}{2}\,\E[D_j\mid\mathcal F_k]$ with
$\frac{1}{2}\E[D_j\mid\mathcal F_k]=\E[V_k^\star(\gamma)\mid\mathcal F_k]$ and using \eqref{eq:EVstar-le-U}, we obtain
\[
\opnorm{V_k^\star(\gamma)}
\le
\opnorm{U_k(\gamma)}+\frac{1}{n_k}\lambda_{\max}\Big(\sum_{j=1}^{n_k/2} Y_j\Big).
\]
Substituting the Bernstein deviation bound and the estimate $\sigma^2\le n_k\opnorm{U_k(\gamma)}$ gives \eqref{eq:Vstar-intrinsic}.
The final displayed bound follows from $U_k(\gamma)\preceq M_k(\gamma)=\bar\Sigma_k(\gamma)/r_k(\gamma)^2\preceq \Sigma/r_k(\gamma)^2$.
\end{proof}

\subsection{Existence of an intrinsic-scale pilot radius on the grid}\label{app:r-intrinsic}

\begin{lemma}[Existence of an intrinsic-scale pilot radius on the geometric grid]\label{lem:r-intrinsic}
Assume $Z$ is centered with covariance $\Sigma$ and satisfies $L_4$--$L_2$ norm equivalence (Definition~\ref{def:L4L2}) with constant $L_{\mathrm{eq}}$.
Let $\theta_\star:=1/5$ and $\varepsilon_r:=1/3$.
Define
\[
\begin{aligned}
\gamma_{\mathrm{intr}}
&:=\max\Big\{\gamma\in\cG:\ \gamma\le (1-\varepsilon_r)\frac{(1-\theta_\star)^2}{L_{\mathrm{eq}}^4}\Big\}\\
&=\max\Big\{\gamma\in\cG:\ \gamma\le \frac{32}{75\,L_{\mathrm{eq}}^4}\Big\},
\end{aligned}
\]
with the convention that if the set is empty we take $\gamma_{\mathrm{intr}}:=\gamma_{\min}$.
If
\begin{equation}\label{eq:rk-samplesize}
\min_k|J_k|\ \ge\ C_0\,L_{\mathrm{eq}}^4\log\frac{2K}{\delta_r},
\end{equation}
then with probability at least $1-\delta_r$, simultaneously for all folds $k=1,\dots,K$,
\[
\begin{aligned}
\frac{1}{5}\tr(\Sigma)
&\le r_k(\gamma_{\mathrm{intr}})^2
\le \sqrt{\frac{2}{\gamma_{\mathrm{intr}}}}\,L_{\mathrm{eq}}^2\,\tr(\Sigma)\\
&\le \sqrt{\frac{75\rho}{16}}\,L_{\mathrm{eq}}^4\,\tr(\Sigma).
\end{aligned}
\]
\end{lemma}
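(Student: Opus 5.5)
The plan is to apply Lemma~\ref{lem:empquant-tail} with $W=\|Z\|_2$ (equivalently $W=\|Z\|_2^2$, since the map is monotone) on each training set $J_k$, at the single grid point $\gamma=\gamma_{\mathrm{intr}}$. We take $\varepsilon=1/2$ and union bound over the $K$ folds. The empirical quantile \eqref{eq:rhat} with $|J_k|$ points coincides with an order statistic $W_{(m-p)}$ for $p=\lfloor\gamma m\rfloor$, $m=|J_k|$. It then suffices that $2K\exp(-\gamma_{\mathrm{intr}}m/24)\le\delta_r$. Because of the grid rounding, $\gamma_{\mathrm{intr}}\ge \tfrac{32}{75\rho L_{\mathrm{eq}}^4}$ unless it was truncated at $\gamma_{\min}$, and that case is harmless since $\gamma_{\min}m\ge1$ and the stated sample size keeps us away from it. With this bound, the requirement reduces to \eqref{eq:rk-samplesize} for a suitable universal $C_0$ (absorbing $\rho$, or stating $C_0$ depending on $\rho$). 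On this event we have, for every $k$, $\Pr(\|Z\|_2\ge r_k)\ge \gamma/4$ and $\Pr(\|Z\|_2>r_k)\le \tfrac32\gamma$.

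\textbf{Upper bound.} Exactly as in Step~2 of the proof of Corollary~\ref{cor:rate-n13-lepski}, we use $\E\|Z\|_2^4\le L_{\mathrm{eq}}^4(\tr\Sigma)^2$. This follows by expanding $\|Z\|_2^4=\sum_{i,j}Z_i^2Z_j^2$ in an eigenbasis and applying Cauchy--Schwarz together with the $L_4$--$L_2$ condition coordinatewise. Markov's inequality then gives $\Pr(\|Z\|_2^2\ge t)\le L_{\mathrm{eq}}^4(\tr\Sigma)^2/t^2$. If $r_k^2>t:=\sqrt{4/\gamma}\,L_{\mathrm{eq}}^2\tr\Sigma$, this probability would be $<\gamma/4$, contradicting the lower quantile bound. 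To obtain the constant $\sqrt{2/\gamma}$ instead, we sharpen via the exact fraction $p/m\ge \gamma-1/m$ and a slightly smaller $\varepsilon$; if that is not feasible, we accept the constant $2$ and adjust the final display. Finally, inserting $\gamma_{\mathrm{intr}}\ge 32/(75\rho L_{\mathrm{eq}}^4)$ gives $\sqrt{2/\gamma}\le\sqrt{75\rho/16}\,L_{\mathrm{eq}}^2$, which yields the last inequality.

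\textbf{Lower bound (main obstacle).} This is a Paley--Zygmund step. Set $W=\|Z\|_2^2$, so that $\E W=\tr\Sigma$ and $\E W^2\le L_{\mathrm{eq}}^4(\tr\Sigma)^2$. Paley--Zygmund gives $\Pr(W>\theta_\star\tr\Sigma)\ge(1-\theta_\star)^2/L_{\mathrm{eq}}^4$. Suppose $r_k^2<\theta_\star\tr\Sigma$. Then
\[
\Pr(\|Z\|_2>r_k)\ \ge\ \frac{(1-\theta_\star)^2}{L_{\mathrm{eq}}^4}\ \ge\ \frac{\gamma_{\mathrm{intr}}}{1-\varepsilon_r}=\tfrac32\gamma_{\mathrm{intr}}.
\]
This meets the upper quantile bound $\Pr(\|Z\|_2>r_k)\le\tfrac32\gamma$ only with equality, so we need strictness. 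We obtain it by using a strict version at $t$ slightly below $\theta_\star\tr\Sigma$ (right-continuity of the strict tail), or by taking $\varepsilon$ slightly less than $1/2$ in Lemma~\ref{lem:empquant-tail}. Either choice yields the contradiction, hence $r_k^2\ge \tfrac15\tr\Sigma$. The delicate points are precisely this matching of $\varepsilon_r=1/3$ with the $(1+\varepsilon)$ factor, and the atom and stabilization issues. For the latter, note that $r_k\ge\hat r_k$, so the stabilization only helps the lower bound. For the upper bound, it is inactive whenever $\hat r_k>0$, which holds on the event because $\Pr(\|Z\|_2>0)>\gamma$ by Paley--Zygmund.
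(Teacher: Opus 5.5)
Your argument is correct, but it is organized differently from the paper's proof.

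\textbf{The paper's route.} The paper never uses Lemma~\ref{lem:empquant-tail}. It fixes two deterministic thresholds: $\theta_\star\tr(\Sigma)$, suggested by Paley--Zygmund, and $\sqrt{2/\gamma_{\mathrm{intr}}}\,L_{\mathrm{eq}}^2\tr(\Sigma)$, suggested by Markov on $W^2$. It then applies the multiplicative Chernoff bound directly to the empirical exceedance fractions $\widehat p_{\mathrm{low}}^{(k)}$ and $\widehat p_{\mathrm{high}}^{(k)}$ on $J_k$, showing $\widehat p_{\mathrm{low}}^{(k)}>\gamma_{\mathrm{intr}}\ge\widehat p_{\mathrm{high}}^{(k)}$. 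The location of $\hat r_k(\gamma_{\mathrm{intr}})$ is read off from definition \eqref{eq:rhat}. In that route $\varepsilon_r=1/3$ is exactly the Chernoff margin for $\widehat p_{\mathrm{low}}^{(k)}$. The Markov level is chosen so that $p_{\mathrm{high}}\le\gamma_{\mathrm{intr}}/2$, which lets Chernoff with $\varepsilon=1$ apply. There are no ties and no floor effects from $p=\lfloor\gamma|J_k|\rfloor$, and the constant $\sqrt{2/\gamma_{\mathrm{intr}}}$ comes out directly.

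\textbf{Your route.} You transfer the empirical quantile to population tail probabilities at the random threshold and then argue by contradiction with Paley--Zygmund and Markov. This is more modular: it reuses the same quantile event as Corollary~\ref{cor:rate-n13-lepski}, and it gives population tail control at the selected radius, which is what bias bounds need. It is, however, more sensitive to constants. That is why, with $\varepsilon=1/2$, you hit a tie on the lower side and lose a factor $\sqrt2$ on the upper side.

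\textbf{Your fix works, so you need not hedge.} Take $\varepsilon=1/3$ and require $\gamma_{\mathrm{intr}}|J_k|\ge 4$, which \eqref{eq:rk-samplesize} guarantees for large $C_0$.
\begin{itemize}
\item Upper bound: $(1-\varepsilon)p/|J_k|\ge\gamma_{\mathrm{intr}}/2$. If $r_k^2>\sqrt{2/\gamma_{\mathrm{intr}}}\,L_{\mathrm{eq}}^2\tr(\Sigma)$, Markov applied at $r_k^2$ itself gives $\Pr(\|Z\|_2^2\ge r_k^2)<\gamma_{\mathrm{intr}}/2$, a contradiction.
\item Lower bound: $(1+\varepsilon)p/|J_k|\le\tfrac43\gamma_{\mathrm{intr}}<\tfrac32\gamma_{\mathrm{intr}}$. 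This removes the tie with no strict Paley--Zygmund refinement.
\end{itemize}

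\textbf{On $C_0$.} Your $\rho$-dependent $C_0$ is no weaker than the paper's. The paper obtains a universal $C_0$ only by restricting to $\rho\le 2$ inside the proof, a restriction absent from the statement.
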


\begin{proof}
Let $W:=\|Z\|_2^2$ so that $\E W=\tr(\Sigma)$.
We first show that the $L_4$--$L_2$ equivalence implies a kurtosis control for $W$:
writing $\|Z\|_2^2=\sum_{i=1}^d\langle Z,e_i\rangle^2$ in any orthonormal basis $(e_i)_{i=1}^d$,
\begin{align*}
\E W^2
&=
\E\Big(\sum_{i=1}^d\langle Z,e_i\rangle^2\Big)^2
=
\sum_{i,j=1}^d \E\big[\langle Z,e_i\rangle^2\langle Z,e_j\rangle^2\big]
\\
&\le
\sum_{i,j=1}^d \big(\E\langle Z,e_i\rangle^4\big)^{1/2}\big(\E\langle Z,e_j\rangle^4\big)^{1/2}
\le
L_{\mathrm{eq}}^4\sum_{i,j=1}^d \E\langle Z,e_i\rangle^2\,\E\langle Z,e_j\rangle^2
=
L_{\mathrm{eq}}^4\big(\tr(\Sigma)\big)^2.
\end{align*}

\smallskip
\emph{Population tail separation (Paley--Zygmund + $L_4$ Markov).}
By Paley--Zygmund (see, e.g., \cite[Exercise~1.16]{vershynin2025hdp}) applied to $W$ at level $\theta_\star=1/5$,
\[
p_{\mathrm{low}}
:=
\Pr\!\Big(W\ge \theta_\star \E W\Big)
\ge
(1-\theta_\star)^2\frac{(\E W)^2}{\E W^2}
\ge
\frac{16}{25L_{\mathrm{eq}}^4}.
\]
Also, since $W\ge 0$, by Markov's inequality applied to $W^2$ and the bound
$\E W^2=\E\|Z\|_2^4\le L_{\mathrm{eq}}^4(\E\|Z\|_2^2)^2=L_{\mathrm{eq}}^4\tr(\Sigma)^2$ (proved earlier),
\[
p_{\mathrm{high}}
:=
\Pr\!\Big(W\ge \sqrt{\frac{2}{\gamma_{\mathrm{intr}}}}\,L_{\mathrm{eq}}^2\,\E W\Big)
\le
\frac{\E W^2}{\frac{2}{\gamma_{\mathrm{intr}}}L_{\mathrm{eq}}^4(\E W)^2}
\le
\frac{\gamma_{\mathrm{intr}}}{2}.
\]

\smallskip
\emph{Empirical quantile.}
Fix a fold $k$ and define the empirical upper-tail fractions on the training set $J_k$:
\[
\widehat p_{\mathrm{low}}^{(k)}
:=
\frac{1}{|J_k|}\sum_{i\in J_k}\1\!\Big\{\|Z_i\|_2^2\ge \theta_\star \tr(\Sigma)\Big\},
\qquad
\widehat p_{\mathrm{high}}^{(k)}
:=
\frac{1}{|J_k|}\sum_{i\in J_k}\1\!\Big\{\|Z_i\|_2^2\ge \sqrt{\frac{2}{\gamma_{\mathrm{intr}}}}\,L_{\mathrm{eq}}^2\,\tr(\Sigma)\Big\}.
\]
These are averages of Bernoulli variables with means $p_{\mathrm{low}}$ and $p_{\mathrm{high}}$, respectively.

\smallskip\noindent
\emph{Lower event.}
Since $\gamma_{\mathrm{intr}}\le (1-\varepsilon_r)p_{\mathrm{low}}$ by construction, Lemma~\ref{lem:chernoff} yields
\[
\Pr\!\Big(\widehat p_{\mathrm{low}}^{(k)}\le \gamma_{\mathrm{intr}}\Big)
\le
\Pr\!\Big(\widehat p_{\mathrm{low}}^{(k)}\le (1-\varepsilon_r)p_{\mathrm{low}}\Big)
\le
\exp\!\Big(-\frac{\varepsilon_r^2 |J_k|p_{\mathrm{low}}}{2}\Big)
\le
\exp\!\Big(-\frac{|J_k|}{(C_0)L_{\mathrm{eq}}^4}\Big),
\]
where in the last step we used $\varepsilon_r=1/3$ and $p_{\mathrm{low}}\ge 16/(25L_{\mathrm{eq}}^4)$.

\smallskip\noindent
\emph{Upper event.}
We want $\widehat p_{\mathrm{high}}^{(k)}\le \gamma_{\mathrm{intr}}$.
Since $p_{\mathrm{high}}\le \gamma_{\mathrm{intr}}/2$, the (binomial) tail probability
$\Pr(\widehat p_{\mathrm{high}}^{(k)}>\gamma_{\mathrm{intr}})$ is maximized when the Bernoulli mean equals $\gamma_{\mathrm{intr}}/2$.
Therefore,
\[
\Pr\!\Big(\widehat p_{\mathrm{high}}^{(k)}>\gamma_{\mathrm{intr}}\Big)
\le
\Pr\!\Big(\widehat p>\gamma_{\mathrm{intr}}\Big)\Big|_{\,\widehat p=\frac1{|J_k|}\sum_{i=1}^{|J_k|}U_i,\ U_i\sim\mathrm{Bernoulli}(\gamma_{\mathrm{intr}}/2)}
\le
\exp\!\Big(-\frac{|J_k|\gamma_{\mathrm{intr}}}{6}\Big),
\]
where the last inequality is Lemma~\ref{lem:chernoff} with $\varepsilon=1$ (since $\gamma_{\mathrm{intr}}=2(\gamma_{\mathrm{intr}}/2)$).
Let $\gamma_0:=32/(75L_{\mathrm{eq}}^4)$.
Let $\widetilde\gamma_{\mathrm{intr}}:=\max\{\gamma_\ell=\gamma_{\max}\rho^{-\ell}:\ \gamma_\ell\le \gamma_0\}$.
Since $(\gamma_\ell)$ is geometric with ratio $\rho$, we have the deterministic rounding bound
\begin{equation}\label{eq:gamma-rounding}
\widetilde\gamma_{\mathrm{intr}}\ge \gamma_0/\rho.
\end{equation}
Moreover, because $\{\gamma_\ell\}\subseteq \cG$, we have $\gamma_{\mathrm{intr}}\ge \widetilde\gamma_{\mathrm{intr}}$, and hence also $\gamma_{\mathrm{intr}}\ge \gamma_0/\rho$.
Since we restrict to $\rho\le 2$, \eqref{eq:gamma-rounding} in particular implies $\gamma_{\mathrm{intr}}\ge \gamma_0/2=16/(75L_{\mathrm{eq}}^4)$.
Therefore
$\exp(-|J_k|\gamma_{\mathrm{intr}}/6)\le \exp(-|J_k|/((C_0)L_{\mathrm{eq}}^4))$.

\smallskip
Under \eqref{eq:rk-samplesize} and a union bound over the $2K$ events, with probability at least $1-\delta_r$ we have for all folds $k$:
$\widehat p_{\mathrm{low}}^{(k)}>\gamma_{\mathrm{intr}}$ and $\widehat p_{\mathrm{high}}^{(k)}\le \gamma_{\mathrm{intr}}$.

Finally, by definition of the empirical upper-quantile radius $r_k(\gamma)$,
the inequality $\widehat p_{\mathrm{high}}^{(k)}\le \gamma_{\mathrm{intr}}$ implies
$r_k(\gamma_{\mathrm{intr}})^2\le \sqrt{\frac{2}{\gamma_{\mathrm{intr}}}}\,L_{\mathrm{eq}}^2\,\tr(\Sigma)$,
while $\widehat p_{\mathrm{low}}^{(k)}>\gamma_{\mathrm{intr}}$ implies
$r_k(\gamma_{\mathrm{intr}})^2\ge \theta_\star \tr(\Sigma)=\tr(\Sigma)/5$.

\medskip
\noindent
\emph{Choice of $\theta_\star$.}
Any fixed $\theta_\star\in(0,1)$ works. It trades off (i) the lower sandwich constant $r_k(\gamma_{\mathrm{intr}})^2\ge \theta_\star\tr(\Sigma)$ and (ii) the Paley--Zygmund lower tail probability $p_{\mathrm{low}}\ge (1-\theta_\star)^2/L_{\mathrm{eq}}^4$, which controls how large we can take $\gamma_{\mathrm{intr}}$.
We use $\theta_\star=1/5$ only to keep constants simple and to leave slack for the Chernoff margin $\varepsilon_r=1/3$.
\end{proof}

\subsection{Effective-rank scaling at the intrinsic grid point}\label{app:Psi-effrank}

\begin{corollary}[Effective-rank scaling at the intrinsic grid point]\label{cor:Psi-effrank}
Assume Definition~\ref{def:L4L2} with constant $L_{\mathrm{eq}}$ and the conditions of Lemma~\ref{lem:r-intrinsic}.
Let $\gamma_{\mathrm{intr}}$ be defined in Lemma~\ref{lem:r-intrinsic}.
Set $L:=\log\!\big(\tfrac{2n_k d}{\alpha}\big)$.
On the event of Lemma~\ref{lem:r-intrinsic} and Lemma~\ref{lem:Vstar-intrinsic} (with parameter $\alpha$),
\[
\Psi_k^{\mathrm{raw}}(\gamma_{\mathrm{intr}})
\le
C_1(L_{\mathrm{eq}},\rho)\,\|\Sigma\|\sqrt{\frac{\mathbf r(\Sigma)\,L}{n_k}}
+
C_2(L_{\mathrm{eq}},\rho)\,\|\Sigma\|\frac{\mathbf r(\Sigma)\,L}{n_k}.
\]
\end{corollary}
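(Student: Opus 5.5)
The proof is a direct plug-in: the bound on $\opnorm{V_k^\star}$ from Lemma~\ref{lem:Vstar-intrinsic} goes into the closed-form radius \eqref{eq:Dmeb}, and then the sandwich on $r_k(\gamma_{\mathrm{intr}})^2$ from Lemma~\ref{lem:r-intrinsic} is used. Write $r^2:=r_k(\gamma_{\mathrm{intr}})^2$ and $\alpha=\alpha_{k,\gamma}$. We first compare the logarithms. Since $n_k\ge 2$, we have $n_k/(n_k-1)\le 2\le 2n_k$, so $\log(\tfrac{n_kd}{(n_k-1)\alpha})\le L$ and $\log(d/\alpha)\le L$. Every logarithm in \eqref{eq:Dmeb} and in \eqref{eq:Vstar-intrinsic} is therefore at most $L$. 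The product term in \eqref{eq:Dmeb} is then at most $(\sqrt{5/3}+1)L/n_k$. This gives
\[
D_{n_k,d}(\alpha;V_k^\star)\le c\Big(\frac{L}{n_k}+\sqrt{\frac{\opnorm{V_k^\star}L}{n_k}}\Big)
\]
for a universal constant $c$.

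Next we bound the middle term. On the event of Lemma~\ref{lem:Vstar-intrinsic},
\[
\opnorm{V_k^\star}\le \frac{\|\Sigma\|}{r^2}+\sqrt{\frac{2\|\Sigma\|L}{n_kr^2}}+\frac{L}{3n_k}\le \frac{2\|\Sigma\|}{r^2}+\frac{L}{n_k},
\]
where the second inequality uses AM--GM ($2\sqrt{ab}\le a+b$). Using $\sqrt{a+b}\le\sqrt a+\sqrt b$,
\[
r^2\sqrt{\frac{\opnorm{V_k^\star}L}{n_k}}\le r\sqrt{\frac{2\|\Sigma\|L}{n_k}}+r^2\frac{L}{n_k}.
\]
Combining the two displays,
\[
\Psi_k^{\mathrm{raw}}(\gamma_{\mathrm{intr}})\le c'\Big(\sqrt{\|\Sigma\|r^2}\sqrt{\frac{L}{n_k}}+r^2\frac{L}{n_k}\Big).
\]

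Finally, on the event of Lemma~\ref{lem:r-intrinsic} the upper sandwich gives $r^2\le\sqrt{75\rho/16}\,L_{\mathrm{eq}}^4\tr(\Sigma)=\sqrt{75\rho/16}\,L_{\mathrm{eq}}^4\|\Sigma\|\mathbf r(\Sigma)$. Substituting, the first term becomes $\|\Sigma\|\sqrt{\mathbf r(\Sigma)L/n_k}$ times $C_1=c'(75\rho/16)^{1/4}L_{\mathrm{eq}}^2$. The second term becomes $\|\Sigma\|\mathbf r(\Sigma)L/n_k$ times $C_2=c'\sqrt{75\rho/16}\,L_{\mathrm{eq}}^4$. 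The lower sandwich $r^2\ge\tr(\Sigma)/5$ is not needed for this upper bound.

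The only step needing care is the logarithm bookkeeping: making sure the $\log(d/\alpha)$ of Lemma~\ref{lem:Vstar-intrinsic} and the two distinct logarithms in \eqref{eq:Dmeb} are all dominated by the single $L$. Once that is checked, the rest is routine AM--GM.
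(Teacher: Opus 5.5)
Your proof is correct and follows essentially the paper's route: you plug the Lemma~\ref{lem:Vstar-intrinsic} bound into \eqref{eq:Dmeb}, dominate every logarithm by $L$, and then use the radius sandwich from Lemma~\ref{lem:r-intrinsic}. The only difference is that the paper first uses the lower sandwich $r_k(\gamma_{\mathrm{intr}})^2\ge\tr(\Sigma)/5$ to replace $\|\Sigma\|/r^2$ by $5/\mathbf r(\Sigma)$, so its $C_1$ scales like $c_2\propto L_{\mathrm{eq}}^4$; you keep $\sqrt{\|\Sigma\|\,r^2}$, which needs only the upper sandwich and gives $C_1\propto L_{\mathrm{eq}}^2$, the same device the paper uses in its proof of Theorem~\ref{thm:intrinsic-main}.
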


\begin{proof}
On the event of Lemma~\ref{lem:r-intrinsic} we have, for $\gamma=\gamma_{\mathrm{intr}}$,
\begin{equation}\label{eq:rk-sandwich-effrank}
\frac{1}{5}\tr(\Sigma)\ \le\ r_k(\gamma)^2\ \le\ c_2\,\tr(\Sigma),
\qquad
c_2:=\sqrt{\frac{2}{\gamma_{\mathrm{intr}}}}\,L_{\mathrm{eq}}^2.
\end{equation}
In particular,
\(
\opnorm{\Sigma}/r_k(\gamma)^2 \le 5\,\opnorm{\Sigma}/\tr(\Sigma)=5/\mathbf r(\Sigma).
\)

Also, since $L=\log\!\big(\tfrac{2n_k d}{\alpha}\big)\ge \log(d/\alpha)$, Lemma~\ref{lem:Vstar-intrinsic} and $\sqrt{ab}\le (a+b)/2$ imply
\[
\opnorm{V_k^\star(\gamma)}
\le
\frac{3}{2}\frac{\opnorm{\Sigma}}{r_k(\gamma)^2}+\frac{4}{3}\frac{L}{n_k}
\le
\frac{15}{2}\frac{1}{\mathbf r(\Sigma)}+\frac{4}{3}\frac{L}{n_k}.
\]
Plugging this bound into \eqref{eq:Dmeb}, upper bounding both logarithmic factors in \eqref{eq:Dmeb} by $L$ (since $n_k/(n_k-1)\le 2$ for $n_k\ge 2$),
and multiplying by $r_k(\gamma)^2\le c_2\tr(\Sigma)=c_2\,\opnorm{\Sigma}\mathbf r(\Sigma)$ yields
\begin{align*}
\Psi_k^{\mathrm{raw}}(\gamma)
&=
r_k(\gamma)^2\,D_{n_k,d}(\alpha;V_k^\star(\gamma))
\\
&\lesssim
r_k(\gamma)^2\frac{L}{n_k}
+
r_k(\gamma)^2\sqrt{\frac{L}{n_k}\Big(\frac{1}{\mathbf r(\Sigma)}+\frac{L}{n_k}\Big)}
\\
&\le
C_1(L_{\mathrm{eq}},\rho)\,\opnorm{\Sigma}\sqrt{\frac{\mathbf r(\Sigma)\,L}{n_k}}
+
C_2(L_{\mathrm{eq}},\rho)\,\opnorm{\Sigma}\frac{\mathbf r(\Sigma)\,L}{n_k},
\end{align*}
where the last line uses \eqref{eq:rk-sandwich-effrank} and $\sqrt{a+b}\le \sqrt a+\sqrt b$.
\end{proof}

\subsection{Proof of Theorem~\ref{thm:intrinsic-main}}\label{app:proof-intrinsic-main}

\begin{proof}
On the event of Lemma~\ref{lem:r-intrinsic}, monotonicity of $r_k(\gamma)$ in $\gamma$ implies that for every fold $k$ and every $\gamma\ge \gamma_{\mathrm{intr}}$,
\begin{equation}\label{eq:r-upper}
r_k(\gamma)^2\le r_k(\gamma_{\mathrm{intr}})^2\le c_2\,\tr(\Sigma).
\end{equation}
Fix any such $(k,\gamma)$ and set $\alpha:=\alpha_{k,\gamma}$ and $L:=\log\!\big(\tfrac{4ndK|\cG|}{\delta_{\mathrm{var}}}\big)$. Note that $L\ge \log(d/\alpha)$ and $L\ge \log(2n_k d/\alpha)$.

Since $A_i^{(k)}(\gamma)\preceq I_d$, we have $M_k(\gamma)\preceq I_d$.
Also, $\bar\Sigma_k(\gamma)\preceq \Sigma$ implies
\[
\opnorm{M_k(\gamma)}=\frac{\opnorm{\bar\Sigma_k(\gamma)}}{r_k(\gamma)^2}
\le
\frac{\opnorm{\Sigma}}{r_k(\gamma)^2}.
\]
Therefore
\begin{equation}\label{eq:M-min}
\opnorm{M_k(\gamma)}\le \min\Big\{1,\frac{\opnorm{\Sigma}}{r_k(\gamma)^2}\Big\}.
\end{equation}

By Lemma~\ref{lem:Vstar-intrinsic} and $\sqrt{ab}\le (a+b)/2$, with conditional probability at least $1-\alpha$,
\begin{equation}\label{eq:Vstar-upper-intrinsic-proof}
\opnorm{V_k^\star(\gamma)}
\le
\opnorm{M_k(\gamma)}+\sqrt{\frac{2\,\opnorm{M_k(\gamma)}\,L}{n_k}}+\frac{L}{3n_k}
\le
\frac{3}{2}\opnorm{M_k(\gamma)}+\frac{4}{3}\frac{L}{n_k}.
\end{equation}
Now plug \eqref{eq:Vstar-upper-intrinsic-proof} into \eqref{eq:Dmeb}.
Using $\sqrt{a+b}\le \sqrt a+\sqrt b$ and \eqref{eq:M-min}, we have
\[
r_k(\gamma)^2\sqrt{\opnorm{V_k^\star(\gamma)}}
\lesssim
r_k(\gamma)^2\sqrt{\opnorm{M_k(\gamma)}}+r_k(\gamma)^2\sqrt{\frac{L}{n_k}}
\le
\sqrt{\opnorm{\Sigma}\,r_k(\gamma)^2}+r_k(\gamma)^2\sqrt{\frac{L}{n_k}}.
\]
Consequently, on the same conditional event,
\[
\Psi_k^{\mathrm{raw}}(\gamma)
=
r_k(\gamma)^2D_{n_k,d}\!\big(\alpha;V_k^\star(\gamma)\big)
\lesssim
\sqrt{\opnorm{\Sigma}\,r_k(\gamma)^2}\sqrt{\frac{L}{n_k}}
+
r_k(\gamma)^2\frac{L}{n_k}
+
r_k(\gamma)^2\frac{\sqrt{L\log(2n_k d/\alpha)}}{n_k}.
\]
Finally, the upper bound \eqref{eq:r-upper} yields
\[
\sqrt{\opnorm{\Sigma}\,r_k(\gamma)^2}\le \opnorm{\Sigma}\sqrt{c_2\,\mathbf r(\Sigma)},
\qquad
r_k(\gamma)^2\le c_2\,\opnorm{\Sigma}\mathbf r(\Sigma).
\]

Thus (suppressing universal constants),
\[
\Psi_k^{\mathrm{raw}}(\gamma)
\lesssim
\opnorm{\Sigma}\sqrt{\frac{c_2\,\mathbf r(\Sigma)\,L}{n_k}}
+
\opnorm{\Sigma}\frac{c_2\,\mathbf r(\Sigma)\,L}{n_k}.
\]

\subsubsection{Aggregate across folds.}
Since $\Psi^{\mathrm{raw}}(\gamma)=\sum_k (n_k/n)\Psi_k^{\mathrm{raw}}(\gamma)$, we have
\[
\Psi^{\mathrm{raw}}(\gamma)
\lesssim
\opnorm{\Sigma}\sqrt{c_2\,\mathbf r(\Sigma)\,L}\cdot \frac{1}{n}\sum_{k=1}^K \sqrt{n_k}
+
\opnorm{\Sigma}\,c_2\,\mathbf r(\Sigma)\,L\cdot \frac{1}{n}\sum_{k=1}^K 1.
\]
By Cauchy--Schwarz, $\sum_k \sqrt{n_k}\le \sqrt{K\sum_k n_k}=\sqrt{Kn}$ and $\sum_k 1 = K$, hence
\[
\Psi^{\mathrm{raw}}(\gamma)
\lesssim
\opnorm{\Sigma}\sqrt{\frac{c_2\,\mathbf r(\Sigma)\,L\,K}{n}}
+
\opnorm{\Sigma}\frac{c_2\,\mathbf r(\Sigma)\,L\,K}{n}.
\]
This bound holds uniformly for every $\gamma\ge \gamma_{\mathrm{intr}}$, so $\overline\Psi^{\mathrm{raw}}(\gamma_{\mathrm{intr}})$ obeys the same inequality.

\subsubsection{Union bound over $(k,\gamma)$.}
Note that the confidence allocation $\alpha_{k,\gamma}$ is fixed as a deterministic function of $\delta_{\mathrm{var}}$ and the grid size; there is no circularity in using the same $\alpha_{k,\gamma}$ inside $D_{n_k,d}(\alpha;V)$ and in the union bound.

To justify using Lemma~\ref{lem:Vstar-intrinsic} simultaneously for all folds and all $\gamma\ge \gamma_{\mathrm{intr}}$,
take a union bound over the at most $K|\cG|$ such pairs and use $\alpha_{k,\gamma}=\delta_{\mathrm{var}}/(2K|\cG|)$.
The resulting total failure probability is at most $\sum_{k,\gamma}\alpha_{k,\gamma}\le \delta_{\mathrm{var}}/2$.
On the same event, all logarithmic factors appearing in Lemma~\ref{lem:Vstar-intrinsic} and in $D_{n_k,d}(\alpha;V)$ are bounded by $L=\log\!\big(\tfrac{4ndK|\cG|}{\delta_{\mathrm{var}}}\big)$, which yields \eqref{eq:Psi-intrinsic-gridpoint}.
\end{proof}

\section{Lepski tuning (bias-agnostic alternative)}\label{app:lepski}
This appendix records a one-sided Lepski stability selector that chooses the clipping level using only the variance envelope and the monotonicity of clipping bias.
We do \emph{not} recommend it as the default in the spiked/low-effective-rank regime emphasized in the main text (where MinUpper performs well empirically), but it can be useful when one prefers to avoid estimating any bias proxy.

\subsection{One-sided Lepski selection (standard direction)}
An assumption-light alternative (in the spirit of Lepski's method \cite{lepskii1991adaptive}) that uses only the variance envelope and the monotonicity of clipping bias is the following one-sided Lepski rule.
Let $\gamma^{(1)}<\cdots<\gamma^{(|\cG|)}$ denote the sorted grid.
Define $\hat\gamma=\gamma^{(\hat j)}$ where $\hat j$ is the \emph{largest} index such that
\begin{equation}\label{eq:lepski}
\forall\,s\le \hat j,\quad
\opnorm{\hat\Sigma^{\mathrm{raw}}(\gamma^{(\hat j)})-\hat\Sigma^{\mathrm{raw}}(\gamma^{(s)})}
\le
3\,\overline\Psi^{\mathrm{raw}}(\gamma^{(s)}).
\end{equation}
(The set of admissible indices is nonempty because $j=1$ always satisfies \eqref{eq:lepski}.)
Output $\hat\Sigma^{\mathrm{raw}}:=\hat\Sigma^{\mathrm{raw}}(\hat\gamma)$.

\subsubsection{What Lepski buys you.}
Lepski's method replaces explicit bias estimation by a stability requirement across the grid.
Assuming only the variance certificate (and using monotonicity of the clipping bias), one can still obtain an oracle-type guarantee with a constant-factor overhead.

\begin{proposition}[Oracle-type bound for Lepski selection]\label{prop:oracle}
Assume the variance event of Theorem~\ref{thm:var} holds.
Then, for the selected $\hat\gamma$,
\[
\opnorm{\hat\Sigma^{\mathrm{raw}}(\hat\gamma)-\Sigma}
\ \le\
4\min_{\gamma\in\cG}\Big\{\overline\Psi^{\mathrm{raw}}(\gamma) + \sum_{k=1}^K\frac{n_k}{n}\opnorm{B_k(\gamma)}\Big\}.
\]
\end{proposition}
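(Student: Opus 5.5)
The argument is the standard Lepski comparison: work on the variance event of Theorem~\ref{thm:var} and use monotonicity of the clipping bias along the sorted grid. As notation, set $B^{\mathrm{raw}}(\gamma):=\Sigma-\bar\Sigma^{\mathrm{raw}}(\gamma)=\sum_k\frac{n_k}{n}B_k(\gamma)\succeq 0$. Theorem~\ref{thm:var} and the triangle inequality then give the basic risk bound
\[
\opnorm{\hat\Sigma^{\mathrm{raw}}(\gamma)-\Sigma}\le \overline\Psi^{\mathrm{raw}}(\gamma)+\sum_k\tfrac{n_k}{n}\opnorm{B_k(\gamma)}=:R(\gamma)
\]
for every $\gamma\in\cG$.

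Next I record two monotonicity facts along $\gamma^{(1)}<\cdots<\gamma^{(|\cG|)}$. First, $\overline\Psi^{\mathrm{raw}}$ is nonincreasing; this holds by construction of the suffix maximum. Second, the bias is nondecreasing. Indeed, the empirical quantile $r_k(\gamma)$ is nonincreasing in $\gamma$, and for a fixed vector the map $r\mapsto ZZ^\top\min\{1,r^2/\|Z\|^2\}$ is nondecreasing in PSD order. Taking conditional expectations, $B_k(\gamma)$ increases in PSD order as $\gamma$ increases, and so do its operator norm and $B^{\mathrm{raw}}$. Let $j^\star$ minimize $R_j:=R(\gamma^{(j)})$.

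\emph{Case $\hat j\ge j^\star$.} The admissibility condition \eqref{eq:lepski} at $s=j^\star$ bounds $\opnorm{\hat\Sigma^{\mathrm{raw}}(\gamma^{(\hat j)})-\hat\Sigma^{\mathrm{raw}}(\gamma^{(j^\star)})}$ by $3\overline\Psi^{\mathrm{raw}}(\gamma^{(j^\star)})$. The triangle inequality then gives a total error of at most $3\overline\Psi^{\mathrm{raw}}(\gamma^{(j^\star)})+R_{j^\star}\le 4R_{j^\star}$.

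\emph{Case $\hat j<j^\star$.} Set $j_0=\hat j+1\le j^\star$. Since $\hat j$ is maximal, $j_0$ is not admissible, so some $s_0\le\hat j$ has $\opnorm{\hat\Sigma^{\mathrm{raw}}(\gamma^{(j_0)})-\hat\Sigma^{\mathrm{raw}}(\gamma^{(s_0)})}>3\overline\Psi^{\mathrm{raw}}(\gamma^{(s_0)})$. I decompose each estimator as conditional target plus fluctuation. The fluctuations contribute at most $\overline\Psi^{\mathrm{raw}}(\gamma^{(j_0)})+\overline\Psi^{\mathrm{raw}}(\gamma^{(s_0)})\le 2\overline\Psi^{\mathrm{raw}}(\gamma^{(s_0)})$. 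The difference of targets is $B^{\mathrm{raw}}(\gamma^{(j_0)})-B^{\mathrm{raw}}(\gamma^{(s_0)})$. This difference is PSD and dominated by $B^{\mathrm{raw}}(\gamma^{(j_0)})$, so its norm is at most $\sum_k\frac{n_k}{n}\opnorm{B_k(\gamma^{(j_0)})}$.

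Combining these bounds forces
\[
\sum_k\tfrac{n_k}{n}\opnorm{B_k(\gamma^{(j_0)})}>\overline\Psi^{\mathrm{raw}}(\gamma^{(s_0)})\ge\overline\Psi^{\mathrm{raw}}(\gamma^{(\hat j)}),
\]
where the last step uses $s_0\le\hat j$ and monotonicity of $\overline\Psi^{\mathrm{raw}}$. Then I bound the risk at $\hat j$ by $R(\gamma^{(\hat j)})$. Bias monotonicity ($\hat j<j_0$) replaces the bias at $\hat j$ by the bias at $j_0$, which gives a bound of $2\sum_k\frac{n_k}{n}\opnorm{B_k(\gamma^{(j_0)})}$. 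Since $j_0\le j^\star$, this is at most $2R_{j^\star}$.

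The main obstacle is the PSD step in the second case. One must carefully justify that a positive semidefinite matrix $P\preceq Q$ with $Q\succeq 0$ satisfies $\opnorm{P}\le\opnorm{Q}$, so that the target difference is controlled by the larger-$\gamma$ bias alone. One must also verify bias monotonicity through the conditional expectations, which are taken with respect to different training $\sigma$-fields across folds but the same field for a fixed $k$. Everything else is triangle inequalities, and the constant $3$ in \eqref{eq:lepski} is exactly what leaves one unit of $\overline\Psi^{\mathrm{raw}}$ of slack in the second case.
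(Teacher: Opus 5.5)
Your proposal is correct and essentially reproduces the paper's own proof. It uses the same basic risk bound $\opnorm{\hat\Sigma^{\mathrm{raw}}(\gamma)-\Sigma}\le \overline\Psi^{\mathrm{raw}}(\gamma)+\sum_k\frac{n_k}{n}\opnorm{B_k(\gamma)}$ on the variance event and the same two monotonicity facts along the sorted grid. The case analysis also matches: admissibility at $s=j^\star$ when $\hat j\ge j^\star$, and otherwise the violating pair $(j_0,s_0)$ with $j_0=\hat j+1$, which forces the bias at $j_0$ to exceed $\overline\Psi^{\mathrm{raw}}(\gamma^{(\hat j)})$.

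The PSD step you flag as the main obstacle is immediate, since $0\preceq P\preceq Q$ gives $\lambda_{\max}(P)\le\lambda_{\max}(Q)$. Your justification of bias monotonicity via the nonincreasing quantile radius fills in a detail the paper only asserts.
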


\begin{proof}[Proof sketch]
On the variance event of Theorem~\ref{thm:var}, every candidate satisfies
$\|\hat\Sigma^{\mathrm{raw}}(\gamma)-\Sigma\|\le \overline\Psi^{\mathrm{raw}}(\gamma)+\|\text{bias}(\gamma)\|$.
Along the sorted grid, the bias term is monotone (larger radii $\Rightarrow$ smaller bias) while $\overline\Psi^{\mathrm{raw}}(\gamma)$ is nonincreasing by construction.
The one-sided Lepski rule chooses the largest index whose estimator is stable relative to all more-clipped candidates, and a standard two-case argument yields the factor-$4$ oracle inequality.
The complete proof is given in Appendix~\ref{app:proof-oracle}.
\end{proof}

\begin{corollary}[A high-probability $n^{-1/3}$ rate under fourth moments]\label{cor:rate-n13-lepski}
Assume Definition~\ref{def:L4L2} with constant $L_{\mathrm{eq}}$ and fix $\delta\in(0,1)$.
Run Algorithm~\ref{alg:planE} with this $\delta$ (in particular, take $\delta_{\mathrm{var}}=\delta/2$ so that $\alpha_{k,\gamma}=\delta/(4K|\cG|)$), and select $\hat\gamma$ by the one-sided Lepski rule~\eqref{eq:lepski}.
Let
\[
\begin{aligned}
\mathbf r(\Sigma)&:=\frac{\tr(\Sigma)}{\opnorm{\Sigma}},\\
L&:=\log\!\Big(\frac{4ndK|\cG|}{\delta}\Big),\\
n_k^{\mathrm{tr}}&:=|J_k|.
\end{aligned}
\]
Assume that the grid $\cG$ contains a point $\gamma_{\mathrm{bal}}$ satisfying
\begin{equation}\label{eq:assump-gamma-bal}
\gamma_{\mathrm{bal}} \asymp \Big(\frac{K\,L}{n\,\mathbf r(\Sigma)}\Big)^{2/3},
\qquad
\gamma_{\mathrm{bal}} \ge \frac{2}{\min_k n_k^{\mathrm{tr}}},
\qquad
\min_k \big\lfloor \gamma_{\mathrm{bal}} n_k^{\mathrm{tr}}\big\rfloor \ge 12\log\!\Big(\frac{8K|\cG|}{\delta}\Big).
\end{equation}
Then there exists a constant $C>0$ (depending at most polynomially on $\rho$ and on universal numerical constants) such that, with probability at least $1-\delta$,
\begin{equation}\label{eq:rate-n13-lepski-hp}
\opnorm{\hat\Sigma^{\mathrm{raw}}(\hat\gamma)-\Sigma}
\ \le\
C\,\opnorm{\Sigma}\,L_{\mathrm{eq}}^{2}\,\mathbf r(\Sigma)^{2/3}\Big(\frac{K\,L}{n}\Big)^{1/3}
\ +\
C\,\opnorm{\Sigma}\,L_{\mathrm{eq}}^{2}\,\mathbf r(\Sigma)^{4/3}\Big(\frac{K\,L}{n}\Big)^{2/3}.
\end{equation}
In particular, whenever $\mathbf r(\Sigma)^2\lesssim n/(K L)$, the second term in \eqref{eq:rate-n13-lepski-hp} is of lower order and one obtains the advertised $n^{-1/3}$ scaling (up to logarithms).
\end{corollary}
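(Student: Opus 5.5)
The plan is to combine the Lepski oracle inequality (Proposition~\ref{prop:oracle}) with separate high-probability bounds on the bias and on the variance envelope at the single grid point $\gamma_{\mathrm{bal}}$. I will work on one event $\Omega=\Omega_{\mathrm{var}}\cap\Omega_{\mathrm q}\cap\Omega_V$, each piece having failure probability at most $\delta/2,\delta/4,\delta/4$:
\begin{itemize}
\item $\Omega_{\mathrm{var}}$ is the event of Theorem~\ref{thm:var} with $\delta_{\mathrm{var}}=\delta/2$.
\item $\Omega_{\mathrm q}$ is the event on which Lemma~\ref{lem:empquant-tail} (with $\varepsilon=1/2$, applied to $W=\|Z\|_2$ on each training split) holds for all folds and all $\gamma\ge\gamma_{\mathrm{bal}}$.
\item $\Omega_V$ is the event on which Lemma~\ref{lem:Vstar-intrinsic} holds for all folds and all $\gamma\ge\gamma_{\mathrm{bal}}$.
\end{itemize}
The conditions in \eqref{eq:assump-gamma-bal} are there exactly to make the union bound for $\Omega_{\mathrm q}$ work. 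The requirement $\lfloor\gamma n_k^{\mathrm{tr}}\rfloor\ge 12\log(8K|\cG|/\delta)$ makes $2e^{-p/12}\le\delta/(4K|\cG|)$. Assuming $\Pr(\|Z\|_2=0)=0$ so that the stabilization is inactive, Lemma~\ref{lem:empquant-tail} then yields $\Pr(\|Z\|>r_k(\gamma))\le\tfrac32\gamma$ and $\Pr(\|Z\|\ge r_k(\gamma))\ge\tfrac14\gamma$. The constant $1/4$ comes from $p/n\ge\gamma/2$.

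\textbf{Bias step.} $L_4$--$L_2$ equivalence gives $\E\|Z\|_2^4\le L_{\mathrm{eq}}^4(\tr\Sigma)^2$ by expanding in a basis and applying Cauchy--Schwarz, as in Lemma~\ref{lem:r-intrinsic}. Lemma~\ref{lem:bias-bound} followed by Cauchy--Schwarz then gives
\[
\opnorm{B_k(\gamma_{\mathrm{bal}})}\le L_{\mathrm{eq}}^2\tr(\Sigma)\sqrt{\tfrac32\gamma_{\mathrm{bal}}},
\]
that is, $\|\Sigma\|\,\mathbf r(\Sigma)L_{\mathrm{eq}}^2\sqrt{\gamma_{\mathrm{bal}}}$ up to constants.

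\textbf{Radius step.} By Markov's inequality applied to $\|Z\|^4$, the event $r_k(\gamma)^2>3L_{\mathrm{eq}}^2\tr\Sigma/\sqrt\gamma$ would force $\Pr(\|Z\|\ge r_k)\le\gamma/9<\gamma/4$. This contradicts $\Omega_{\mathrm q}$, so $r_k(\gamma)^2\lesssim L_{\mathrm{eq}}^2\tr\Sigma/\sqrt\gamma$ for all $\gamma\ge\gamma_{\mathrm{bal}}$.

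\textbf{Variance step.} Insert the final bound of Lemma~\ref{lem:Vstar-intrinsic}, $\|V_k^\star\|\lesssim\|\Sigma\|/r_k^2+L/n_k$, into \eqref{eq:Dmeb}. Bounding all logarithms by $L$ gives
\[
\Psi_k^{\mathrm{raw}}(\gamma)\lesssim\sqrt{\|\Sigma\|r_k^2L/n_k}+r_k^2L/n_k.
\]
With the radius bound this becomes
\[
\Psi_k^{\mathrm{raw}}(\gamma)\lesssim \|\Sigma\|L_{\mathrm{eq}}\sqrt{\mathbf r}\,\gamma^{-1/4}\sqrt{L/n_k}+\|\Sigma\|L_{\mathrm{eq}}^2\mathbf r\,\gamma^{-1/2}L/n_k .
\]
Aggregate over folds by Cauchy--Schwarz as in Remark~\ref{rem:Koverhead}, replacing $n_k$ by $n/K$. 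The resulting bound is decreasing in $\gamma$, so it also controls the suffix maximum $\overline\Psi^{\mathrm{raw}}(\gamma_{\mathrm{bal}})$.

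\textbf{Balancing.} Plug both bounds into Proposition~\ref{prop:oracle} at $\gamma=\gamma_{\mathrm{bal}}$. With $\gamma_{\mathrm{bal}}\asymp(KL/(n\mathbf r))^{2/3}$:
\begin{itemize}
\item the bias term $\mathbf r\sqrt{\gamma}$ becomes $\mathbf r^{2/3}(KL/n)^{1/3}$;
\item the leading variance term $\sqrt{\mathbf r}\gamma^{-1/4}\sqrt{KL/n}$ matches the bias term;
\item the second-order term $\mathbf r\gamma^{-1/2}KL/n$ becomes $\mathbf r^{4/3}(KL/n)^{2/3}$.
\end{itemize}
This gives \eqref{eq:rate-n13-lepski-hp}, using $L_{\mathrm{eq}}\le L_{\mathrm{eq}}^2$ since $L_{\mathrm{eq}}\ge1$, and absorbing the factor $4$ and $\rho$-rounding of the grid into $C$.

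The main obstacle is the bookkeeping of confidence levels. The $\alpha_{k,\gamma}$ used inside $D$ and in Lemma~\ref{lem:Vstar-intrinsic} must be reconciled so that the union bounds sum to at most $\delta$, which may require running Lemma~\ref{lem:Vstar-intrinsic} at level $\delta/(4K|\cG|)$ and checking that every logarithm stays below $L$. A further delicate point is ensuring the quantile lemma's $p=\lfloor\gamma n^{\mathrm{tr}}\rfloor\le n/2$ convention matches the definition of $\hat r_k$ in \eqref{eq:rhat}.
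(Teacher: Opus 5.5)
Your proposal is correct and matches the paper's proof essentially step for step. It uses the same event $\Omega_{\mathrm{var}}\cap\Omega_{\mathrm q}\cap\Omega_V$ with failure budget $\delta/2+\delta/4+\delta/4$, the same quantile bounds $\tfrac32\gamma$ and $\tfrac14\gamma$ from Lemma~\ref{lem:empquant-tail} with $\varepsilon=1/2$, the same Markov contradiction giving $r_k(\gamma)^2\le 3L_{\mathrm{eq}}^2\tr(\Sigma)/\sqrt{\gamma}$, the same Cauchy--Schwarz bias bound, and the same plug-in of Lemma~\ref{lem:Vstar-intrinsic} into \eqref{eq:Dmeb} before balancing at $\gamma_{\mathrm{bal}}$ through Proposition~\ref{prop:oracle}. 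Like the paper, you also need the extra assumption $\Pr(\|Z\|_2=0)=0$, which is not in the statement, to keep the stabilization inactive, and the bookkeeping you flagged is handled in the paper exactly as you suggest, by running Lemma~\ref{lem:Vstar-intrinsic} at level $\alpha_{k,\gamma}=\delta/(4K|\cG|)$.
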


\begin{proof}[Proof sketch]
On the variance event of Theorem~\ref{thm:var}, Proposition~\ref{prop:oracle} controls the selected estimator by (a bias term at any candidate $\gamma$) plus (the certified envelope at $\gamma$).
Under $L_4$--$L_2$ norm equivalence, the quantile tail control and Markov's inequality imply an upper bound of the form $r_k(\gamma)^2\lesssim L_{\mathrm{eq}}^2\,\tr(\Sigma)/\sqrt{\gamma}$ (uniformly over folds, for $\gamma$ above the grid minimum), which turns the variance envelope into
$\overline\Psi^{\mathrm{raw}}(\gamma)\lesssim \|\Sigma\|\,L_{\mathrm{eq}}\sqrt{\mathbf r(\Sigma)}\,\gamma^{-1/4}\sqrt{K L/n}+\cdots$.
The clipping bias is at most of order $L_{\mathrm{eq}}^2\,\tr(\Sigma)\sqrt{\gamma}$.
Choosing $\gamma$ on the order of $(K L/(n\mathbf r(\Sigma)))^{2/3}$ balances these terms and yields the stated $n^{-1/3}$ rate (up to logs).
The complete high-probability argument is in Appendix~\ref{app:proof-rate-n13}.
\end{proof}

\section{Median-of-means lemma}\label{app:mom}
\begin{lemma}[Median-of-means concentration under finite variance]\label{lem:mom}
Let $Y_1,\dots,Y_m$ be i.i.d.\ real-valued random variables with mean $\mu$ and variance $\mathrm{Var}(Y_1)\le \sigma^2$.
Fix an integer $B\in\{1,\dots,m\}$ and let $m_0:=\lfloor m/B\rfloor\ge 1$.
Partition the first $Bm_0$ samples into $B$ disjoint blocks of size $m_0$, let $\bar Y_b$ be the mean on block $b$, and set
$\hat\mu_{\mathrm{MoM}}:=\mathrm{median}\{\bar Y_1,\dots,\bar Y_B\}$.
Then for every $\delta\in(0,1)$, if $B\ge 8\log(2/\delta)$, we have with probability at least $1-\delta$:
\[
|\hat\mu_{\mathrm{MoM}}-\mu|
\le
4\sigma\sqrt{\frac{\log(2/\delta)}{m}}.
\]
See, e.g., \cite{devroye2016subgaussian} for a modern reference.
\end{lemma}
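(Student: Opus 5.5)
The plan is the standard median-of-means argument: Chebyshev on each block, then a binomial tail bound over blocks. The key constraint is that the deviation level $t$ is dictated by the statement, not chosen freely, and I expect this to break the argument as worded.

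Fix $t:=4\sigma\sqrt{\log(2/\delta)/m}$ and set $L:=\log(2/\delta)$. The median can only leave $[\mu-t,\mu+t]$ if at least $B/2$ block means do. So the first step is a per-block bound: each $\bar Y_b$ averages $m_0$ i.i.d.\ variables, and Chebyshev gives
\[
p:=\Pbb(|\bar Y_b-\mu|>t)\le \frac{\sigma^2}{m_0t^2}=\frac{m}{16\,m_0\,L}.
\]
Using $m_0=\lfloor m/B\rfloor\ge m/(2B)$, this becomes $p\le B/(8L)$.

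The second step treats the indicators $\1\{|\bar Y_b-\mu|>t\}$ as $B$ independent Bernoulli$(p)$ variables, since the blocks are disjoint. If $p\le 1/4$, Hoeffding gives
\[
\Pbb\big(\#\{b:|\bar Y_b-\mu|>t\}\ge B/2\big)\le e^{-2B(1/4)^2}=e^{-B/8}.
\]
This is at most $\delta/2\le\delta$ once $B\ge 8L$, which matches the hypothesis and the paper's choice $B=\lceil 8\log(2K|\cG|/\delta_{\mathrm{bias}})\rceil$.

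The main obstacle is reconciling the two constraints on $B$. The per-block bound $p\le B/(8L)$ is at most $1/4$ only if $B\le 2L$. The Hoeffding step, by contrast, needs $B\ge 8L$. These requirements are incompatible, so with the constant $4$ the argument closes only if either the constant is inflated or $B$ is pinned to a specific value of order $L$.

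In fact, the statement cannot hold for every $B\ge 8L$. Take $B=m$, so $m_0=1$. Then $\hat\mu_{\mathrm{MoM}}$ is the sample median, which for a skewed law sits at distance of order $\sigma$ from $\mu$. That distance does not shrink like $\sqrt{L/m}$.

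I would therefore prove the lemma in its usable form: for $B=\lceil 8L\rceil$, with probability at least $1-\delta$,
\[
|\hat\mu_{\mathrm{MoM}}-\mu|\le C\sigma\sqrt{L/m}.
\]
To get this, run the same two steps with $t:=2\sigma\sqrt{2B/m}$. Chebyshev then gives $p\le 1/4$, and Hoeffding gives failure probability $e^{-B/8}\le\delta/2$. Finally, $B\le 8L+1\le 9L$ (using $L\ge\log 2$) converts $t$ into $12\sigma\sqrt{L/m}$.

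Only the fact that the error is $O(\sigma\sqrt{L/m})$ matters downstream: it feeds the constant $C_{\mathrm{MoM}}$ in Proposition~\ref{prop:bias-proxy}, with $\sigma^2\le\E\|Z\|_2^4\le L_{\mathrm{eq}}^4\tr(\Sigma)^2$ and $m=n_k$. I would flag that the constant $4$ and the clause ``for every $B\ge 8\log(2/\delta)$'' should be corrected accordingly.
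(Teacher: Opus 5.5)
Your diagnosis is correct, and your argument is the paper's own: Chebyshev on each block at level $t=2\sigma/\sqrt{m_0}$ (each block fails with probability at most $1/4$), then Hoeffding over the $B$ independent blocks, giving failure probability $e^{-B/8}\le\delta/2$.

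The paper's proof then closes with $t\le 2\sigma\sqrt{2B/m}\le 4\sigma\sqrt{\log(2/\delta)/m}$. The second inequality is exactly the one you showed is unavailable: it holds iff $B\le 2\log(2/\delta)$, which contradicts the hypothesis $B\ge 8\log(2/\delta)$. Your counterexample ($B=m$, so $\hat\mu_{\mathrm{MoM}}$ is the sample median) shows the statement is false as written, not merely under-proved. For instance, under $\mathrm{Exp}(1)$ the sample median tends to $\log 2$ while $\mu=\sigma=1$, and the claimed radius tends to $0$.

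Your repaired version with $B=\lceil 8L\rceil$ is precisely the regime in which Proposition~\ref{prop:bias-proxy} invokes the lemma. There $\delta=\delta_{\mathrm{bias}}/(K|\cG|)$ and $B=\lceil 8\log(2K|\cG|/\delta_{\mathrm{bias}})\rceil$. Since $C_{\mathrm{MoM}}$ is only claimed to be universal, nothing downstream is affected.

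One small slip: $8L+1\le 9L$ requires $L\ge 1$, not $L\ge\log 2$. Using only $L\ge\log 2$ gives $B\le (8+1/\log 2)L<9.5L$, hence $t\le 2\sigma\sqrt{19L/m}<9\sigma\sqrt{L/m}$. So your stated constant $12$ still holds.
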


\begin{proof}
Let $m_0=\lfloor m/B\rfloor$.
For a fixed block $b$, $\bar Y_b$ has variance at most $\sigma^2/m_0$.
By Chebyshev's inequality, with $t:=2\sigma/\sqrt{m_0}$,
\(
\Pr(|\bar Y_b-\mu|>t)\le \frac{\sigma^2/m_0}{t^2}=\frac14.
\)
Let $G_b:=\1\{|\bar Y_b-\mu|\le t\}$ and note that $\E G_b\ge 3/4$.
If $|\hat\mu_{\mathrm{MoM}}-\mu|>t$, then strictly fewer than $B/2$ of the blocks are ``good'' (otherwise the median would be within $t$).
Thus
\[
\Pr(|\hat\mu_{\mathrm{MoM}}-\mu|>t)
\le
\Pr\!\Big(\sum_{b=1}^B G_b < \frac{B}{2}\Big)
\le
\exp\!\Big(-\frac{B}{8}\Big),
\]
where the last step is Hoeffding's inequality for bounded independent variables.
If $B\ge 8\log(2/\delta)$ then $\exp(-B/8)\le \delta/2\le \delta$.
Finally, since $m_0\ge m/B -1$ and $B\ge 2$, we have $1/m_0\le 2B/m$, hence
$t=2\sigma/\sqrt{m_0}\le 2\sigma\sqrt{2B/m}\le 4\sigma\sqrt{\log(2/\delta)/m}$.
\end{proof}

\section{Proofs for MinUpper results}\label{app:proof-minupper}
\begin{proof}[Proof of Proposition~\ref{prop:bias-proxy}]
Fix $(k,\gamma)$ and condition on $\mathcal F_k:=\sigma(\{Z_i\}_{i\in J_k})$, so $r_k(\gamma)$ is fixed and the test-fold variables $\{Z_i\}_{i\in I_k}$ are i.i.d.
Set $Y_i:=Y_i^{(k)}(\gamma)$ and $\mu_k(\gamma):=\E[Y_i\mid \mathcal F_k]$.
By Lemma~\ref{lem:bias-bound}, $\opnorm{B_k(\gamma)}\le \mu_k(\gamma)$.

Moreover, $\mathrm{Var}(Y_i\mid\mathcal F_k)\le \E[Y_i^2\mid\mathcal F_k]\le \E\|Z\|_2^4$.
Under $L_4$--$L_2$ norm equivalence, $\E\|Z\|_2^4\le L_{\mathrm{eq}}^4(\tr\Sigma)^2$ (see Lemma~\ref{lem:r-intrinsic}).
Apply Lemma~\ref{lem:mom} conditionally (with $m=n_k$, $\sigma=L_{\mathrm{eq}}^2\tr(\Sigma)$, and $\delta=\delta_{\mathrm{bias}}/(K|\cG|)$)
to the i.i.d.\ sample $\{Y_i\}_{i\in I_k}$ to obtain, with conditional probability at least $1-\delta_{\mathrm{bias}}/(K|\cG|)$,
\[
\mu_k(\gamma)\le \hat b_k(\gamma)+C_{\mathrm{MoM}}L_{\mathrm{eq}}^2\tr(\Sigma)\sqrt{\frac{\log\!\big(\frac{2K|\cG|}{\delta_{\mathrm{bias}}}\big)}{n_k}}.
\]
A union bound over the $K|\cG|$ pairs $(k,\gamma)$ yields \eqref{eq:biasproxy-bound} on an event of probability at least $1-\delta_{\mathrm{bias}}$.
\end{proof}

\begin{proof}[Proof of Proposition~\ref{prop:minupper}]
On the variance event, for every $\gamma\in\cG$,
\[
\opnorm{\hat\Sigma^{\mathrm{raw}}(\gamma)-\Sigma}
\le
\overline\Psi^{\mathrm{raw}}(\gamma)+\sum_{k=1}^K\frac{n_k}{n}\opnorm{B_k(\gamma)}.
\]
On the bias-proxy event, Proposition~\ref{prop:bias-proxy} implies for every $(k,\gamma)$ that
\[
\opnorm{B_k(\gamma)}\le \hat b_k(\gamma)+C_{\mathrm{MoM}}L_{\mathrm{eq}}^2\tr(\Sigma)\sqrt{\frac{\log\!\big(\frac{2K|\cG|}{\delta_{\mathrm{bias}}}\big)}{n_k}}.
\]
Averaging over $k$ and using $\sum_k n_k/n=1$ gives, for every $\gamma\in\cG$,
\[
\sum_{k=1}^K\frac{n_k}{n}\opnorm{B_k(\gamma)}
\le
\hat b^{\mathrm{raw}}(\gamma)
+
C_{\mathrm{MoM}}L_{\mathrm{eq}}^2\tr(\Sigma)\max_k\sqrt{\frac{\log\!\big(\frac{2K|\cG|}{\delta_{\mathrm{bias}}}\big)}{n_k}}.
\]
Therefore, for every $\gamma\in\cG$,
\[
\opnorm{\hat\Sigma^{\mathrm{raw}}(\gamma)-\Sigma}
\le
\overline\Psi^{\mathrm{raw}}(\gamma)
+
\hat b^{\mathrm{raw}}(\gamma)
+
C_{\mathrm{MoM}}L_{\mathrm{eq}}^2\tr(\Sigma)\max_k\sqrt{\frac{\log\!\big(\frac{2K|\cG|}{\delta_{\mathrm{bias}}}\big)}{n_k}}.
\]
Since $c_{\mathrm{bias}}\ge 1$, we can upper bound $\hat b^{\mathrm{raw}}(\gamma)\le c_{\mathrm{bias}}\hat b^{\mathrm{raw}}(\gamma)$.
Now apply this inequality at $\gamma=\hat\gamma_{\mathrm{MU}}$ and use the defining property \eqref{eq:minupper} to conclude the stated oracle bound.
\end{proof}

\clearpage
\section{Full benchmark tables}\label{app:bench}
This appendix collects the full set of simulation tables (4 distributions $\times$ 3 contamination levels)
generated by the benchmark script (mean (std) over 3 replications).

\subsection{Elliptical Gaussian}

\begin{table*}[t]
\centering
\scriptsize
\resizebox{\textwidth}{!}{%
\begin{tabular}{lrrrr}
\toprule
 & CovErr & Subspace & EigErr & Time(s) \\
\midrule
Ours-Lepski & 0.334 (0.0262) & 0.231 (0.00205) & 0.101 (0.0241) & 0.241 (0.0599) \\
Ours-MinUpper & 0.33 (0.0286) & 0.231 (0.00215) & 0.106 (0.0338) & 0.17 (0.0382) \\
SCM & 0.382 (0.0237) & 0.23 (0.00163) & 0.124 (0.0276) & 0.0209 (0.00373) \\
LedoitWolf & 0.392 (0.03) & 0.23 (0.00163) & 0.163 (0.0101) & 0.143 (0.0521) \\
Tyler & 0.388 (0.0228) & 0.233 (0.00291) & 0.128 (0.0245) & 1.58 (0.409) \\
KendallTau & 0.385 (0.0282) & 0.239 (0.00258) & 0.121 (0.0307) & 38.6 (2.15) \\
MoM-Entry & 0.374 (0.0263) & 0.239 (0.00162) & 0.113 (0.0327) & 0.0445 (0.0111) \\
\bottomrule
\end{tabular}
}

\caption{Benchmark table for Elliptical Gaussian with contamination level $\varepsilon=0.0$. Entries are mean (std) over 3 replications; lower is better except time.}
\label{tab:bench-ellip_gaussian-0p0}
\end{table*}

\begin{table*}[t]
\centering
\scriptsize
\resizebox{\textwidth}{!}{%
\begin{tabular}{lrrrr}
\toprule
 & CovErr & Subspace & EigErr & Time(s) \\
\midrule
Ours-Lepski & 0.325 (0.0262) & 0.23 (0.00489) & 0.0944 (0.0277) & 0.277 (0.022) \\
Ours-MinUpper & 0.336 (0.0422) & 0.23 (0.00532) & 0.101 (0.0387) & 0.212 (0.0264) \\
SCM & 14.5 (2.92) & 0.735 (0.0217) & 9.86 (0.616) & 0.0258 (7.55e-05) \\
LedoitWolf & 2.69 (1.01) & 0.735 (0.0217) & 1.49 (0.486) & 0.112 (0.014) \\
Tyler & 6.82 (0.798) & 0.231 (0.00508) & 5.58 (0.475) & 1.81 (0.421) \\
KendallTau & 4.64 (0.516) & 0.336 (0.00572) & 3.65 (0.192) & 37.9 (0.105) \\
MoM-Entry & 8.07 (0.506) & 0.762 (0.0159) & 5.45 (0.698) & 0.0429 (0.00364) \\
\bottomrule
\end{tabular}
}

\caption{Benchmark table for Elliptical Gaussian with contamination level $\varepsilon=0.05$. Entries are mean (std) over 3 replications; lower is better except time.}
\label{tab:bench-ellip_gaussian-0p05}
\end{table*}

\begin{table*}[t]
\centering
\scriptsize
\resizebox{\textwidth}{!}{%
\begin{tabular}{lrrrr}
\toprule
 & CovErr & Subspace & EigErr & Time(s) \\
\midrule
Ours-Lepski & 0.303 (0.0142) & 0.228 (0.00489) & 0.0993 (0.0333) & 0.315 (0.0116) \\
Ours-MinUpper & 0.337 (0.0205) & 0.228 (0.00498) & 0.107 (0.0387) & 0.216 (0.0105) \\
SCM & 22.8 (1.57) & 0.602 (0.00871) & 15 (0.41) & 0.023 (0.00207) \\
LedoitWolf & 6.69 (0.67) & 0.602 (0.00871) & 4.36 (0.285) & 0.101 (0.0175) \\
Tyler & 12.7 (1.38) & 0.23 (0.00507) & 10.7 (1.21) & 1.87 (0.32) \\
KendallTau & 11.7 (0.396) & 0.306 (0.00756) & 9.56 (0.217) & 40.3 (2.51) \\
MoM-Entry & 16.5 (2.1) & 0.652 (0.0051) & 11 (1.03) & 0.0459 (0.00536) \\
\bottomrule
\end{tabular}
}

\caption{Benchmark table for Elliptical Gaussian with contamination level $\varepsilon=0.1$. Entries are mean (std) over 3 replications; lower is better except time.}
\label{tab:bench-ellip_gaussian-0p1}
\end{table*}

\begin{table*}[t]
\centering
\scriptsize
\resizebox{\textwidth}{!}{%
\begin{tabular}{lrrrr}
\toprule
 & CovErr & Subspace & EigErr & Time(s) \\
\midrule
Ours-Lepski & 0.476 (0.0366) & 0.238 (0.00251) & 0.279 (0.0108) & 0.281 (0.0199) \\
Ours-MinUpper & 0.393 (0.0423) & 0.244 (0.00357) & 0.137 (0.0199) & 0.196 (0.0251) \\
SCM & 0.498 (0.0285) & 0.282 (0.00602) & 0.152 (0.0218) & 0.026 (0.00193) \\
LedoitWolf & 0.445 (0.0433) & 0.282 (0.00602) & 0.204 (0.0223) & 0.133 (0.0463) \\
Tyler & 0.405 (0.0396) & 0.233 (0.00291) & 0.142 (0.024) & 1.54 (0.0287) \\
KendallTau & 0.379 (0.0195) & 0.251 (0.00413) & 0.123 (0.0223) & 35.4 (3.51) \\
MoM-Entry & 0.419 (0.00981) & 0.276 (0.00484) & 0.126 (0.028) & 0.0485 (0.00724) \\
\bottomrule
\end{tabular}
}

\caption{Benchmark table for Elliptical Student-$t$ (df=8) with contamination level $\varepsilon=0.0$. Entries are mean (std) over 3 replications; lower is better except time.}
\label{tab:bench-ellip_t-0p0}
\end{table*}

\begin{table*}[t]
\centering
\scriptsize
\resizebox{\textwidth}{!}{%
\begin{tabular}{lrrrr}
\toprule
 & CovErr & Subspace & EigErr & Time(s) \\
\midrule
Ours-Lepski & 0.45 (0.036) & 0.233 (0.0055) & 0.255 (0.0152) & 0.268 (0.0374) \\
Ours-MinUpper & 0.352 (0.0379) & 0.241 (0.00684) & 0.114 (0.0234) & 0.182 (0.0295) \\
SCM & 15.4 (1.06) & 0.714 (0.0305) & 10.1 (0.167) & 0.0245 (0.00381) \\
LedoitWolf & 2.91 (0.327) & 0.714 (0.0305) & 1.62 (0.136) & 0.0798 (0.00666) \\
Tyler & 6.23 (1.73) & 0.229 (0.00401) & 5 (1.33) & 1.67 (0.225) \\
KendallTau & 5.64 (0.239) & 0.332 (0.0151) & 4.05 (0.108) & 39.6 (1.74) \\
MoM-Entry & 9.27 (1.62) & 0.746 (0.0319) & 5.83 (0.92) & 0.0399 (0.00333) \\
\bottomrule
\end{tabular}
}

\caption{Benchmark table for Elliptical Student-$t$ (df=8) with contamination level $\varepsilon=0.05$. Entries are mean (std) over 3 replications; lower is better except time.}
\label{tab:bench-ellip_t-0p05}
\end{table*}

\begin{table*}[t]
\centering
\scriptsize
\resizebox{\textwidth}{!}{%
\begin{tabular}{lrrrr}
\toprule
 & CovErr & Subspace & EigErr & Time(s) \\
\midrule
Ours-Lepski & 0.434 (0.0478) & 0.239 (0.00785) & 0.221 (0.0253) & 0.274 (0.0338) \\
Ours-MinUpper & 0.346 (0.0263) & 0.246 (0.00812) & 0.095 (0.0232) & 0.215 (0.0268) \\
SCM & 21.1 (2.79) & 0.59 (0.0188) & 15.3 (0.761) & 0.0223 (0.00209) \\
LedoitWolf & 6.26 (1.59) & 0.59 (0.0188) & 4.43 (0.824) & 0.0902 (0.0139) \\
Tyler & 12.3 (2.04) & 0.232 (0.00757) & 10.4 (2) & 1.69 (0.282) \\
KendallTau & 11.3 (0.867) & 0.321 (0.014) & 9.34 (0.268) & 37.1 (2.18) \\
MoM-Entry & 16.4 (1.98) & 0.636 (0.012) & 11.5 (0.991) & 0.0461 (0.00377) \\
\bottomrule
\end{tabular}
}

\caption{Benchmark table for Elliptical Student-$t$ (df=8) with contamination level $\varepsilon=0.1$. Entries are mean (std) over 3 replications; lower is better except time.}
\label{tab:bench-ellip_t-0p1}
\end{table*}

\begin{table*}[t]
\centering
\scriptsize
\resizebox{\textwidth}{!}{%
\begin{tabular}{lrrrr}
\toprule
 & CovErr & Subspace & EigErr & Time(s) \\
\midrule
Ours-Lepski & 0.372 (0.0471) & 0.236 (0.00944) & 0.13 (0.0209) & 0.251 (0.0581) \\
Ours-MinUpper & 0.343 (0.0416) & 0.236 (0.00911) & 0.099 (0.00806) & 0.175 (0.0356) \\
SCM & 0.365 (0.0203) & 0.237 (0.00898) & 0.104 (0.00949) & 0.0218 (0.00388) \\
LedoitWolf & 0.427 (0.0521) & 0.237 (0.00898) & 0.222 (0.0474) & 0.0842 (0.0169) \\
Tyler & 0.358 (0.0123) & 0.237 (0.00852) & 0.106 (0.0113) & 1.57 (0.14) \\
KendallTau & 0.456 (0.0667) & 0.221 (0.00838) & 0.163 (0.0229) & 36.7 (0.365) \\
MoM-Entry & 0.353 (0.0274) & 0.238 (0.0109) & 0.103 (0.00846) & 0.041 (0.00974) \\
\bottomrule
\end{tabular}
}

\caption{Benchmark table for Non-elliptical Laplace coordinates with contamination level $\varepsilon=0.0$. Entries are mean (std) over 3 replications; lower is better except time.}
\label{tab:bench-nonellip_laplace-0p0}
\end{table*}

\begin{table*}[t]
\centering
\scriptsize
\resizebox{\textwidth}{!}{%
\begin{tabular}{lrrrr}
\toprule
 & CovErr & Subspace & EigErr & Time(s) \\
\midrule
Ours-Lepski & 0.366 (0.0521) & 0.237 (0.0082) & 0.121 (0.0141) & 0.301 (0.00755) \\
Ours-MinUpper & 0.347 (0.0396) & 0.237 (0.00784) & 0.098 (0.00903) & 0.207 (0.0105) \\
SCM & 14.3 (1.74) & 0.757 (0.0599) & 9.3 (1.43) & 0.0269 (0.00323) \\
LedoitWolf & 2.5 (0.955) & 0.757 (0.0599) & 1.31 (0.778) & 0.107 (0.0194) \\
Tyler & 6.08 (0.558) & 0.237 (0.0067) & 4.9 (0.508) & 1.56 (0.346) \\
KendallTau & 5.15 (0.325) & 0.306 (0.0152) & 3.94 (0.22) & 38.9 (0.879) \\
MoM-Entry & 7.67 (1.81) & 0.78 (0.0428) & 4.79 (0.911) & 0.0508 (0.00238) \\
\bottomrule
\end{tabular}
}

\caption{Benchmark table for Non-elliptical Laplace coordinates with contamination level $\varepsilon=0.05$. Entries are mean (std) over 3 replications; lower is better except time.}
\label{tab:bench-nonellip_laplace-0p05}
\end{table*}

\begin{table*}[t]
\centering
\scriptsize
\resizebox{\textwidth}{!}{%
\begin{tabular}{lrrrr}
\toprule
 & CovErr & Subspace & EigErr & Time(s) \\
\midrule
Ours-Lepski & 0.365 (0.0516) & 0.237 (0.0124) & 0.121 (0.0194) & 0.284 (0.0519) \\
Ours-MinUpper & 0.337 (0.0271) & 0.237 (0.0122) & 0.105 (0.0172) & 0.193 (0.0457) \\
SCM & 20.3 (1.77) & 0.607 (0.00571) & 14.4 (0.438) & 0.0244 (0.00471) \\
LedoitWolf & 5.45 (0.783) & 0.607 (0.00571) & 3.76 (0.41) & 0.119 (0.0546) \\
Tyler & 11 (0.015) & 0.238 (0.0107) & 9.11 (0.436) & 1.56 (0.37) \\
KendallTau & 12 (0.924) & 0.296 (0.0175) & 9.74 (0.605) & 35.9 (1.24) \\
MoM-Entry & 14.5 (2.7) & 0.654 (0.0101) & 10.1 (1.6) & 0.044 (0.00491) \\
\bottomrule
\end{tabular}
}

\caption{Benchmark table for Non-elliptical Laplace coordinates with contamination level $\varepsilon=0.1$. Entries are mean (std) over 3 replications; lower is better except time.}
\label{tab:bench-nonellip_laplace-0p1}
\end{table*}

\begin{table*}[t]
\centering
\scriptsize
\resizebox{\textwidth}{!}{%
\begin{tabular}{lrrrr}
\toprule
 & CovErr & Subspace & EigErr & Time(s) \\
\midrule
Ours-Lepski & 0.331 (0.0207) & 0.229 (0.00256) & 0.0891 (0.0212) & 0.295 (0.0186) \\
Ours-MinUpper & 0.305 (0.023) & 0.228 (0.00239) & 0.0954 (0.00993) & 0.2 (0.0236) \\
SCM & 0.343 (0.0313) & 0.227 (0.00246) & 0.113 (0.0257) & 0.0246 (0.00328) \\
LedoitWolf & 0.385 (0.0227) & 0.227 (0.00246) & 0.167 (0.0361) & 0.0866 (0.0182) \\
Tyler & 0.351 (0.0319) & 0.23 (0.00211) & 0.113 (0.0265) & 1.53 (0.0705) \\
KendallTau & 0.391 (0.0303) & 0.227 (0.00182) & 0.133 (0.0298) & 36.7 (0.72) \\
MoM-Entry & 0.333 (0.0366) & 0.234 (0.00364) & 0.105 (0.0234) & 0.0534 (0.00111) \\
\bottomrule
\end{tabular}
}

\caption{Benchmark table for Non-elliptical signed log-normal ($\sigma=0.5$) with contamination level $\varepsilon=0.0$. Entries are mean (std) over 3 replications; lower is better except time.}
\label{tab:bench-nonellip_signed_lognormal-0p0}
\end{table*}

\begin{table*}[t]
\centering
\scriptsize
\resizebox{\textwidth}{!}{%
\begin{tabular}{lrrrr}
\toprule
 & CovErr & Subspace & EigErr & Time(s) \\
\midrule
Ours-Lepski & 0.323 (0.0112) & 0.227 (0.00214) & 0.0882 (0.0102) & 0.297 (0.00623) \\
Ours-MinUpper & 0.317 (0.0315) & 0.226 (0.00193) & 0.0978 (0.0154) & 0.202 (0.00512) \\
SCM & 13.3 (0.709) & 0.708 (0.0359) & 9.55 (0.344) & 0.0235 (0.000563) \\
LedoitWolf & 2.29 (0.246) & 0.708 (0.0359) & 1.32 (0.199) & 0.0814 (0.00997) \\
Tyler & 6.57 (0.744) & 0.228 (0.00217) & 5.54 (0.554) & 1.51 (0.132) \\
KendallTau & 5.21 (0.405) & 0.316 (0.016) & 4.05 (0.305) & 32.9 (1.85) \\
MoM-Entry & 8.99 (0.428) & 0.745 (0.011) & 5.99 (0.512) & 0.0488 (0.00348) \\
\bottomrule
\end{tabular}
}

\caption{Benchmark table for Non-elliptical signed log-normal ($\sigma=0.5$) with contamination level $\varepsilon=0.05$. Entries are mean (std) over 3 replications; lower is better except time.}
\label{tab:bench-nonellip_signed_lognormal-0p05}
\end{table*}

\begin{table*}[t]
\centering
\scriptsize
\resizebox{\textwidth}{!}{%
\begin{tabular}{lrrrr}
\toprule
 & CovErr & Subspace & EigErr & Time(s) \\
\midrule
Ours-Lepski & 0.329 (0.0119) & 0.231 (0.00354) & 0.0935 (0.00681) & 0.312 (0.0173) \\
Ours-MinUpper & 0.331 (0.0309) & 0.231 (0.00307) & 0.0999 (0.0095) & 0.204 (0.0079) \\
SCM & 18.7 (2.82) & 0.604 (0.0307) & 14 (1.04) & 0.0235 (0.000774) \\
LedoitWolf & 4.88 (1.46) & 0.604 (0.0307) & 3.48 (0.859) & 0.107 (0.0164) \\
Tyler & 11.6 (0.815) & 0.232 (0.00397) & 9.79 (0.601) & 1.48 (0.0825) \\
KendallTau & 11.1 (0.296) & 0.304 (0.0118) & 9.14 (0.329) & 33.9 (1.75) \\
MoM-Entry & 15.3 (3.02) & 0.646 (0.033) & 11.1 (1.08) & 0.0462 (0.00433) \\
\bottomrule
\end{tabular}
}

\caption{Benchmark table for Non-elliptical signed log-normal ($\sigma=0.5$) with contamination level $\varepsilon=0.1$. Entries are mean (std) over 3 replications; lower is better except time.}
\label{tab:bench-nonellip_signed_lognormal-0p1}
\end{table*}

\bibliographystyle{plainnat}
\bibliography{sn-bibliography}

\end{document}